\pdfoutput=1
\documentclass{article}
\usepackage[utf8]{inputenc}
\usepackage[english]{babel}

\usepackage{natbib}
\setcitestyle{authoryear,open={(},close={)}} 

\usepackage[margin=3cm]{geometry}

\usepackage{amsmath,amsfonts,bm}

\def\eqref#1{equation~\ref{#1}}

\def\1{\bm{1}}

\DeclareMathAlphabet{\mathsfit}{\encodingdefault}{\sfdefault}{m}{sl}
\SetMathAlphabet{\mathsfit}{bold}{\encodingdefault}{\sfdefault}{bx}{n}

\usepackage{fancyhdr}
\usepackage{hyperref}
\usepackage{url}
\usepackage{url}            
\usepackage{booktabs}       
\usepackage{amsfonts}       
\usepackage{nicefrac}       
\usepackage{microtype}      
\usepackage{xcolor}         

\usepackage{pgfplots}
\usepackage{wrapfig}
\usepackage{multirow} 
\usepackage{enumitem}
\usepackage{algorithm}
\usepackage{algpseudocode}
\usepackage[para,online,flushleft]{threeparttable}

\usepackage{natbib}
\usepackage{booktabs}

\usepackage{mathtools}

\usepackage{arydshln}

\usepackage{microtype}
\usepackage[flushleft]{threeparttable}
\usepackage{subcaption}
\usepackage{booktabs} 
\usepackage{adjustbox}
\usepackage{minitoc}
\usepackage{titletoc}
\usepackage{amssymb}
\usepackage{bbm}
\usepackage{dsfont}
\usepackage{fancyhdr}

\usepackage{amsthm}

\newtheorem{theorem}{Theorem}[section]
\newtheorem{proposition}[theorem]{Proposition}
\newtheorem{lemma}[theorem]{Lemma}

\newtheorem{definition}[theorem]{Definition}

\newtheorem{remark}[theorem]{Remark}
\usepackage{pifont}

\pgfplotsset{compat=1.18}

\usepgfplotslibrary{groupplots}

\DeclareMathOperator{\CE}{CE}          

\newcommand{\M}{\texttt{M}}            

\newcommand{\calQ}{\mathcal{Q}}
\newcommand{\calW}{\mathcal{W}}

\title{\vspace{-3cm}\rule{\linewidth}{4pt}\\[6pt]
       Discrete Diffusion for Large Graph Generation via Structural Candidate Restriction
       \rule{\linewidth}{1pt}}
\author{
	Yassin Mohamadi \\
	  \small Email: \url{y.mohamadi@uva.nl}
	  \and
	Zeno Geradts\\
	\small Email: \url{z.geradts@nfi.nl}\\
    \and
    Marcel Worring \\
    \small Email: \url{m.worring@uva.nl}
}

\date{}

\begin{document}

\maketitle

\begin{abstract}
Synthesizing realistic graphs at scale is vital when the graphs of
interest are large and real-world samples are limited or access-sensitive. Diffusion-based generators have
recently driven much of the progress, offering high modeling capacity, but
most such methods have quadratic computational complexity and are
hence restricted to small-scale networks, currently up to 3k nodes.
Existing non-quadratic methods remain limited by memorization issues and a
trade-off between scalability and generation quality. Our goal is to
generate large graphs whose structural statistics --- e.g., degree distribution,
clustering, and path length --- faithfully reflect those of
real-world sparse graphs, without resorting to memorizing the training
data. We introduce a discrete graph diffusion model that restricts
training to a structurally motivated subset of node pairs --- observed
edges and their wedge non-edges --- reducing training complexity below quadratic in the number of nodes. To keep the noisy graph informative throughout both the forward and reverse trajectories, we design a three-class absorbing forward process governed by a \emph{degree-aware, floored} cosine noise schedule: unlike a structure-blind schedule that adds noise to every pair identically, ours adapts to each node's degree and never fully erases the graph's structure, keeping the noisy graph informative at every step. Experiments across diverse datasets show that our model consistently ranks among the top methods for structural fidelity against existing discrete diffusion baselines in large graph generation. Our code is available at this \href{https://github.com/yassinmihemedi/SCR-Discrete-Diffusion-for-Large-Graph-Generation-via-Structural-Candidate-Restrictionl.git}{repository}.
\end{abstract}

\section{Introduction}

Graphs in a natural way represent data in domains where a large number of entities interact in structured ways, such as financial networks, social platforms, and communication systems.  In many use cases, such as fraud detection using anomaly detection, infrastructure planning by structural analysis, and behavior modeling through scenario simulation, learning representations for the structural relationships is essential. When the structural data in real-world tasks are incomplete, sensitive, or costly to obtain, 
generating instances of representative large graphs is crucial.

Graph diffusion-based generative models (GDGMs) offer strong training
stability and architectural flexibility, but most target small graphs --
molecules and similarly-sized structures -- where the reverse denoising
process is computationally cheap
\citep{DBLP:conf/www/MoLP24,DBLP:conf/iclr/YouZPXTWS24,DBLP:conf/aaai/WenTOSYZCW24,DBLP:conf/nips/AyadiHSTG24}.
DiGress~\citep{Digress} and its generalized variant~\citep{wang2024large}
produce realistic topologies and node features, but only for graphs with
fewer than 300 nodes. Scaling GDGMs to large sparse graphs remains an open
problem: existing attempts limit each denoising step to a smaller
selection of nodes or edges \citep{EDGE,SparsDiff}, at the cost of
complexity that can still scale quadratically when that selection itself
grows with graph size; decouple structure and feature generation
\citep{Graphmaker}, assuming rich node features are not always available
in practice; or replace iterative denoising with autoregressive generation
over a compact representation \citep{k2tree,Higen,ARROW-Diff}. This last
approach shifts rather than removes the underlying cost: k2tree and Higen
still incur quadratic encoding/decoding and cross-community prediction
costs, respectively, while ARROW-Diff instead lacks formal convergence
guarantees.

We address two limitations of existing graph diffusion models on large,
sparse graphs. First, existing discrete diffusion schedules converge to a
stationary distribution that is structurally uninformative for sparse
data -- typically empty~\citep{EDGE} or fully dense~\citep{Digress,SparsDiff}
-- forcing the reverse process to reconstruct structure from one of these
two extremes. Second, since the overwhelming majority of node pairs in a
sparse graph are never connected, training over the full pair space is
dominated by low-signal absent-edge predictions. We address the first by
designing a noise schedule whose stationary distribution is instead a
sparse random graph matching the target degree sequence. The second we address by
restricting training to a compact, structurally motivated candidate set --
observed edges and their wedge-connected non-edges, together with a small
randomly sampled coverage set -- combined with a class-balance reweighting
scheme. Our contributions are as follows:
\begin{itemize}
    \item \textbf{Degree-aware noise schedule.} A masking schedule that
    reveals node pairs according to node degree keeps the noisy graph
    structurally informative throughout the reverse process.

    \item \textbf{Training with sub-quadratic complexity.} We restrict training
    to edges, their wedge non-edges, and a small random coverage
    set, and prove that, under chosen conditions, this yields an unbiased
    estimator of the full-pair objective with substantially improved
    signal-to-noise, reducing training cost from $O(N^2)$ to
    $O(N\mu_2)$, where $\mu_2$ is the second moment of the degree
    distribution.

    \item \textbf{Validated structural fidelity.} We evaluate our model on generative tasks
    across datasets of different sizes and domains, achieving consistently
    strong large-graph structural fidelity relative to other diffusion
    models, and though small-graph
    generation is not our \textit{primary} objective still reasonable performance on small graphs.
    
\end{itemize}

\section{Related work}
Graph generation has been a long-studied problem in statistics and network
science, well before the emergence of deep generative models. Classical
statistical models -- the Erd\H{o}s--R\'enyi model~\citep{ER}, the
Stochastic Block Model~\citep{SBM}, Chung-Lu~\citep{chung2006complex}, and
the more general Exponential Random Graph Model
(ERGM)~\citep{frank1986markov} -- generate graphs from a fixed probabilistic
form fit to observed statistics, ranging from simple edge-independent
Bernoulli models (ER, SBM, Chung-Lu) to ERGM, which allows edge dependence
via sufficient statistics such as triangle counts at the cost of an
intractable normalizing constant requiring MCMC for both fitting and
sampling. These models motivate our approach; for example, we adopt Chung--Lu's degree-based edge probabilities as a terminal distribution.

Neural-based models learn a representation of the graph
distribution via gradient-based training. Early approaches such as
GVAE~\citep{GVAE} and NetGAN~\citep{Netgan} adapted variational and
adversarial generative frameworks to graphs. Autoregressive models, such as GraphRNN~\citep{Graphrnn}, generate graphs
sequentially by modeling the adjacency matrix under a fixed node ordering,
but typically incur at least $O(N^2)$ complexity, limiting scalability. BiGG~\citep{BiGG} and its
weighted variant BiGG-E~\citep{BiGG-E} instead achieve $O((N+m)\log N)$ via
a recursive binary-tree decomposition of the adjacency matrix.
\citet{k2tree} compress the adjacency matrix into a $K^2$-tree sequence and
generate it autoregressively with a transformer, achieving a sub-quadratic
sequence length; however, encoding a graph into this representation and
decoding a generated sequence back into an adjacency matrix are each still
$O(N^2)$ operations, shifting the quadratic cost to a pre/post-processing
step rather than eliminating it.

Recently, graph diffusion-based generative models (GDGMs) have emerged as a
promising direction, with \emph{continuous}
variants~\citep{jo2022score, niu2020permutation}, which map graphs into
Euclidean space for noise injection and denoising, and \emph{discrete}
variants~\citep{xu2024discrete, haefeli2022diffusion}, which inject noise
directly into the graph's discrete structure. Applying discrete GDGMs to
large sparse graphs surfaces two distinct problems, not always shared by
every method. First, whether intermediate noisy graphs grow denser as the
forward process progresses depends on the transition's marginal: under a
uniform, resampling-style transition with a dense marginal -- as in
DiGress -- absent pairs are resampled into edges far more often than
existing edges are removed, so sparse graphs trend toward becoming dense as
$t\to T$, forcing the denoiser to consider a quadratic number of potential
edges at intermediate steps. This does not hold for absorbing,
deletion-only processes, whose forward trajectory only removes edges and
thus never increases density. Second, and independently of the first,
since most node pairs are absent edges even in the clean graph, training,  regardless of process
type, is dominated by low-signal absent-edge predictions degrading the signal-to-noise ratio during optimization.

To address the quadratic complexity of denoising over all $\binom{N}{2}$
node pairs, several methods restrict what each denoising
step computes. EDGE~\citep{EDGE} restricts edge
prediction to $K$ active nodes per step, but its transformer-based attention makes this cost
quadratic in $K$, $O(\max(m,K^2))$, and its empty-graph stationary distribution still
forces the reverse process to reconstruct structure from scratch.
SparseDiff~\citep{SparsDiff} samples a fixed-ratio subset of query edges
per step, giving $O(m)$ training cost, but total compute per diffusion
step at sampling time still covers all $\binom{N}{2}$ pairs, and, like
DiGress, its Erd\H{o}s--R\'enyi stationary distribution is far denser
than most real graphs. ARROW-Diff~\citep{ARROW-Diff} avoids the full
adjacency matrix altogether by diffusing over sampled random walks with
a separate GNN-based edge classifier, but lacks formal convergence
guarantees and can become unstable over long sequences.
GraphMaker~\citep{Graphmaker} instead improves efficiency via decoupled,
asynchronous mini-batch training of structure and features, but assumes
rich node features not always available outside attributed-graph domains.
\section{Preliminaries and Background}
\textbf{Graph representation.} We represent a graph as $G=(V,E,\mathbf X,\mathbf Y)$,
with node set $V$ ($|V|=N$) and edge set $E$. Node attributes are given by
$\mathbf X\in\{0,1\}^{N\times K_X}$, the stacked one-hot node
representation over an alphabet of size $K_X$, and edge attributes by
$\mathbf Y\in\{0,1\}^{N^2\times K_Y}$, the stacked one-hot edge
representation over alphabet $\mathcal Y$ of size $K_Y$, with single-step
transition entries $[\mathbf Q_t^Y]_{ab}=q_e(e_t{=}b\mid e_{t-1}{=}a)$,
$a,b\in\mathcal Y$. When no edge attribute information is modeled --
i.e.\ edges carry only existence, not type -- $\mathbf Y$ reduces to the
adjacency matrix $A\in\{0,1\}^{N\times N}$ ($K_Y{=}2$: no-edge/edge), where
$A_{ij}=1$ iff an edge exists between $i$ and $j$. This is the setting our
own model operates in.

\textbf{Graph diffusion process.} Graph diffusion generative models (GDGMs)
define a forward process $q(G_t\mid G_{t-1})$, $t=1,\dots,T$, that
gradually corrupts $G_0$ into a noisy graph $G_T$, and train a denoiser
$p_\theta(G_{t-1}\mid G_t)$  iteratively generating graphs from $G_T$ to $G_0$ to reverse it. In discrete graph diffusion, this is
typically written as \(q(G_t \mid G_0) \propto (\mathbf{X}\bar{\mathbf{Q}}_t^{X},\, \mathbf{Y}\bar{\mathbf{Q}}_t^{Y})\), 
with cumulative transition $\bar{\mathbf Q}_t^Y=\mathbf Q_1^Y\cdots\mathbf
Q_t^Y$ (analogously for $\mathbf Q_t^X,\bar{\mathbf Q}_t^X$). A common
parameterization is $\mathbf Q_t=\alpha_t\mathbf I +
(1-\alpha_t)\mathbf 1\mathbf p^\top$, preserving the current state with
probability $\alpha_t$ and resampling from marginal $\mathbf p$ otherwise.
The reverse process predicts from \(t\) to \(t-1\) via
\begin{equation}
p_{_\theta}^{x}(x_{t-1}^i\mid G_t) = \sum_{x^i\in\mathcal X} q^x(x_{t-1}^i\mid x_t^i,x^i,G_t)\,p_{_\theta}^x(x^i\mid G_t),
\end{equation}
and analogously for edges. 
\begin{figure*}[t!]
    \centering
 \begin{minipage}[t]{\textwidth}
     \includegraphics[width=\textwidth]{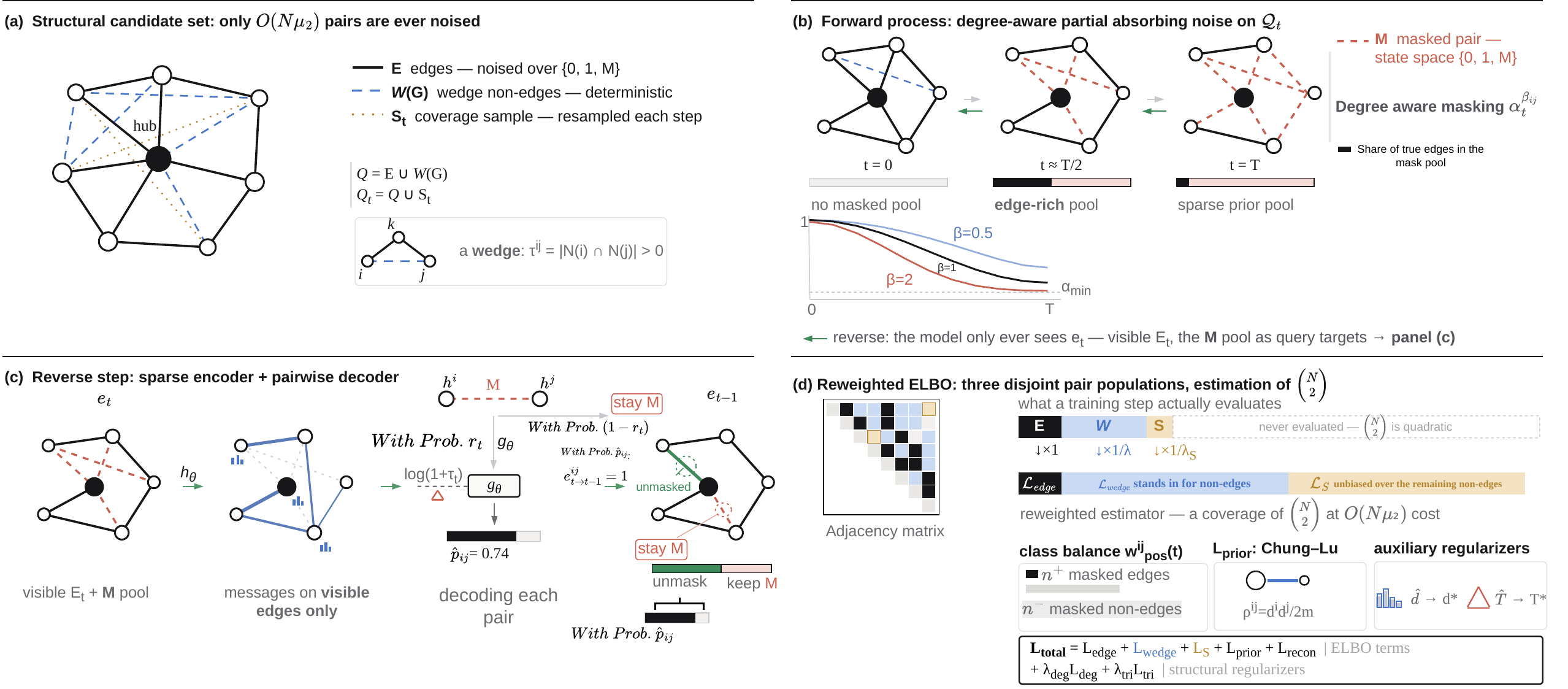}
    \caption{\small  Overview of the proposed method. (a) Diffusion is confined to the structural candidate set. (b) Pairs in \(Q_t\) follow a partial absorbing process over \{0, 1, M\} with a degree-aware masking exponent \(\beta^{ij}\). (c) The predictor passes messages only over visible edges \(E_t\), then decodes each queried pair independently. (d) The three pair populations are recombined, giving an estimator that covers all \(\binom N2 \) pairs at \(O(N\mu_2)\) cost per step.} \label{fig:model}
     \end{minipage}
\end{figure*}

\section{Method}\label{Sec:method}
\textbf{Problem statement.} Given a dataset of large, sparse graphs
$\{G_0^{(k)}\}$ (a single large, sparse graph $G_0$ being the special case
of a dataset with one element), our goal is a discrete diffusion model
that generates new graphs whose structural statistics --- e.g., degree
distribution, clustering, path length, among others --- faithfully match
those of the training data without memorizing any $G_0^{(k)}$, and that
scales to large graphs by achieving sub-quadratic training complexity in
$N$, in contrast to the $O(N^2)$ cost of processing all node pairs.

We propose that GDDMs can be scaled to large, sparse graphs through two complementary design choices: a forward process that restricts training to a compact, structurally meaningful subset of node pairs rather than all $\binom{N}{2}$, and a denoiser conditioned on local structural signal rather than edge state alone. We condition the denoiser
on the common-neighbor count, $\tau^{ij}$, a local closure
statistic related to clustering coefficient that gives a direct, per-pair
signal of local topology at every diffusion step --
following, in spirit, the ERGM view that conditioning generation on
structural statistics, rather than treating edges as independent, better
captures the dependencies real graphs exhibit. Since $\tau^{ij}=0$ by
definition for any pair with no shared neighbor, such pairs carry no
additional information beyond this known zero value, so we restrict the
diffusion process to a compact candidate set $\calQ$ -- observed edges and
pairs with $\tau^{ij}>0$, improving
signal-to-noise during training and reducing computational cost. We
define:

\begin{definition}[Wedge-Completion Set]
\label{def:W}
The wedge-completion set is the set of non-edges with at least one common
neighbour,
\begin{equation}
    \calW(G) =
    \big\{
        (i,j)\notin E
        \;:\;
        \exists\, k \text{ s.t. } (i,k)\in E,\, (j,k)\in E
    \bigr\}.
    \label{eq:W}
\end{equation}
\end{definition}
The structural candidate set is:
\begin{equation}
    \calQ(G) \;=\; E \;\cup\; \calW(G).
    \label{eq:Q}
\end{equation}

During training, the candidate set is extended at each step to
\begin{equation}\label{eq:Qt}
\calQ_t \;:=\; \calQ \cup S_t,
\end{equation}
where $S_t$ is a set of non-edges resampled uniformly at
each step $t$ (Eq.~\ref{eq:lambda}). Fig.~\ref{fig:model} provides an overview of our
model. 

In the following, we introduce the components of our diffusion model, namely, the forward and reverse process and the objective function. All algorithms, from optimization to sampling, are available in Appendix~\ref{appx:training_alg}.

\paragraph{Forward Process: Partial Absorbing Noise on $\calQ_t$.}

Pairs \((u,v)\in \bar{E}\) and  outside $\calQ_t$ are fixed to $0$ (non-edge) throughout.
For pairs $(i,j)\in\calQ_t$ the edge state space is
$\mathcal{S}=\{0,1,M\}$, where $M$ is a mask token.
\paragraph{Noise schedule and degree-aware masking.}
We want the noisy graph to stay sparse and non-empty at every diffusion
step, so the reverse process always begins from a partially informative
state rather than an empty or fully masked one. Two mechanisms achieve
this: restricting training to the candidate set $\calQ$ keeps the graph
sparse, and a floor $\alpha_{\min}>0$ on the noise schedule keeps it
non-empty, giving the \emph{floored cosine} schedule:
\begin{equation}
    \bar{\alpha}_t
    \;=\;
    \alpha_{\min}
    +
    (1-\alpha_{\min})
    \cos^2\!\!\left(\frac{\pi}{2}\cdot\frac{t}{T}\right),
    \quad
    \alpha_0 = 1,\quad
    \alpha_T = \alpha_{\min} \in (0,1).
    \label{eq:schedule}
\end{equation}
Within this sparse, non-empty pool, we additionally mask high-degree pairs
faster than low-degree ones, giving the denoiser more reverse-process
steps to recover low-degree connectivity, which a uniform masking rate
would otherwise resolve too quickly. For any pair $(i,j)$, we define:
\begin{equation}
    \beta^{ij}
    \;=\;
    \min\!\left(
        \beta_{\max},\;
        \left(\frac{d^{i*}\, d^{j*}}{\bar{d}^{\,2}}\right)^{\!\gamma/2}
    \right),
    \quad
    \gamma,\beta_{\max}>0.
    \label{eq:beta}
\end{equation}
Since $\beta^{ij}$ depends only on node degrees, it is cached for
$(i,j)\in\calQ$ (Alg.~\ref{alg:ergm_precompute}) and computed on demand
for $(i,j)\in S_t$ (Alg.~\ref{alg:ergm_ddpm_short}).

\paragraph{Forward marginal.}
The forward marginal distribution is:
\begin{align}
    q(e^{(i,j)}_t = e^{(i,j)}_0 \mid e^{(i,j)}_0)
    &\;=\; \alpha_t^{\,\beta^{ij}},
    \label{eq:fwd-retain}
    \\
    q(e^{(i,j)}_t = \M \mid e^{(i,j)}_0)
    &\;=\; 1 - \alpha_t^{\,\beta^{ij}}.
    \label{eq:fwd-mask}
\end{align}
The Markov structure gives the one-step transition:
\begin{equation}
    q(e^{(i,j)}_t\mid e^{(i,j)}_{t-1})
    \;=\;
    \frac{\alpha_t^{\,\beta^{ij}}}{\alpha_{t-1}^{\,\beta^{ij}}}\,\delta(e^{(i,j)}_t, e^{(i,j)}_{t-1})
    +
    \left(1 - \frac{\alpha_t^{\,\beta^{ij}}}{\alpha_{t-1}^{\,\beta^{ij}}}\right)
    \delta(e^{(i,j)}_t, \M).
    \label{eq:one-step}
\end{equation}
\paragraph{Reverse Process and Parametrisation.}
Following D3PM \citep{D3PM}, we parametrise the reverse process via a
neural network $f_\theta$ that outputs, independently for each pair
$(i,j)\in\calQ_t$, a Bernoulli distribution over the predicted class
$\tilde e_0^{ij}\in\{1, 0\}$; the mask state is
never a target of this prediction, since it is instead governed by the
schedule-only unmask probability $r_{t\to t_{\rm prev}}^{ij}$ below. Thus
$f_\theta(\tilde{\mathbf e}_0;\mathbf e_t,t)=\prod_{(i,j)\in\calQ_t}
f_\theta(\tilde e_0^{ij};\mathbf e_t,t)$ is a valid distribution over the
full edge-state vector, and
\begin{equation}
    p_\theta(\mathbf{e}_{t-1}\mid\mathbf{e}_t)
    \;=\;
    \sum_{\tilde{\mathbf{e}}_0}
    q(\mathbf{e}_{t-1}\mid\mathbf{e}_t,\tilde{\mathbf{e}}_0)\;
    f_\theta(\tilde{\mathbf{e}}_0;\mathbf{e}_t,t).
    \label{eq:reverse}
\end{equation}
Concretely, $f_\theta$ decomposes into a sparse graph encoder $h_\theta$
and a pairwise decoder $g_\theta$:
\begin{align}
\mathbf H_t &\;=\; h_\theta\bigl(\mathbf{X}_t, E_t, \{p^{ij}_{\rm vis}\}_{(i,j)\in E_t},\, \mathbf d^*,\, \mathbf d_t\bigr)
\label{eq:encoder}\\[4pt]
\hat p_{ij} &\;=\; g_\theta\bigl(\mathbf{h}^i_t, \mathbf{h}^j_t,\, \log(1+\tau^{ij}_t)\bigr), \qquad (i,j)\in\calQ_t
\label{eq:decoder}
\end{align}
where $E_t=\{(i,j):e^{(i,j)}_t=1\}$ is the sparse visible edge set
(message passing is restricted to $E_t$); $p^{ij}_{\rm vis}\in(0,1)$
weights each message during convolution; $\mathbf d^*=\{d^{i*}\}$ is the
true target degree sequence, supplied as a permanent per-node
conditioning feature (distinct from its role as a regression target in
$\mathcal L_{\rm deg}$ below); $\mathbf d_t$ is the current visible degree
under $E_t$. During training, pairs outside $\calQ_t$ are not queried and remain fixed
non-edges throughout. 

For the absorbing process the posterior is analytic:
\begin{equation}\small
    q(e^{(i,j)}_{t_{\rm prev}}\mid e^{(i,j)}_t,e^{(i,j)}_0)
    \;=\;
    \begin{cases}
        r_{t\to t_{\rm prev}}^{ij}\,\delta(e^{(i,j)}_0)
        +
        (1-r_{t\to t_{\rm prev}}^{ij})\,\delta(\M)
        & \text{if } e^{(i,j)}_t=\M,
        \\
        \delta(e^{(i,j)}_t)
        & \text{if } e^{(i,j)}_t\neq\M,
    \end{cases}
    \label{eq:posterior}
\end{equation}
where the unmask probability for a transition from $t$ to $t_{\rm prev}$ is
\begin{equation}
    r_{t\to t_{\rm prev}}^{ij}
    \;=\;
    \frac{\alpha_{t_{\rm prev}}^{\,\beta^{ij}} - \alpha_t^{\,\beta^{ij}}}
         {1 - \alpha_t^{\,\beta^{ij}}},
    \qquad
    0 \;\leq\; r_{t\to t_{\rm prev}}^{ij} \;\leq\; 1.
    \label{eq:unmask}
\end{equation}
When $t_{\rm prev}=t-1$, Eq.~\ref{eq:unmask} gives the single-step
probability; at the final step ($t_{\rm prev}=0$, $\alpha_0=1$),
$r_{t\to 0}^{ij}=1$, so all remaining masked pairs are decoded.
Alg.~\ref{alg:ergm_sample_step} presents the sampling procedure.
\paragraph{Deriving the loss function.}
To optimize the parameters of our model, we use the evidence lower bound
(ELBO) as our training loss. Since the candidate edge set is restricted to
$\calQ$, only a subset of non-edges is explicitly considered during
training. To obtain an \emph{unbiased} estimate of the ELBO over all
$\binom{N}{2}$ node pairs, we additionally draw a random coverage sample
$S_t$ of non-edges at each step (Eq.~\ref{eq:Qt}) and apply a
Horvitz--Thompson (HT) correction, $1/\lambda_S$, since $S_t$ is drawn
uniformly at random with a known inclusion probability $\lambda_S$. The wedge non-edges in $\calQ$, by contrast, are included \emph{deterministically}
rather than sampled, so no probabilistic correction applies to them; instead,
their aggregate loss is rescaled by $1/\lambda$ (Eq.~\ref{eq:lambda}). Summing over $E$, the $\lambda$-rescaled wedge non-edges, and the
HT-corrected coverage sample $S_t$ yields a tractable, unbiased estimator
of the full ELBO; Alg.~\ref{alg:ergm_ddpm_short} summarizes the
optimization procedure. To further correct the severe imbalance between
edges and non-edges, each cross-entropy term is additionally
scaled by a time-dependent edge-presence weight $w_{\mathrm{pos}}^{ij}(t)$
(Eq.~\ref{eq:wpos}). Combining both corrections gives the final training
objective:
\begin{equation}
\mathcal{L}(\theta)
=
\mathcal{L}_{\rm edge}
+ \mathcal{L}_{\rm wedge}
+ \mathcal{L}_{S}
+ \mathcal{L}_{\rm prior}
+ \mathcal{L}_{\rm recon}
+ \lambda_{\rm deg}\,\mathcal{L}_{\rm deg}
+ \lambda_{\rm tri}\,\mathcal{L}_{\rm tri}
\label{eq:elbo_full}
\end{equation}
where $\mathcal{L}_{\rm edge}$, $\mathcal{L}_{\rm wedge}$, and $\mathcal{L}_{S}$ are masked cross-entropy losses evaluated over true edges, wedge candidates, and a random coverage sample, respectively:
\begin{align}
\mathcal{L}_{\rm edge}
&=
\frac{1}{T}\sum_{t=1}^{T-1}
\mathbb{E}_{\mathbf{e}_t|\mathbf{e}_0}
\!\left[
    \sum_{(i,j)\in E}
    w^{ij}_{\rm pos}(t)\,
    r_t^{ij}\,\mathbf{1}[e^{(i,j)}_t{=}\mathbf{M}]\,
    \mathrm{CE}\!\bigl(e^{(i,j)}_0,\hat p_{ij}\bigr)
\right]
\label{eq:L_edge}\\[4pt]
\mathcal{L}_{\rm wedge}
&=
\frac{1}{T}\sum_{t=1}^{T-1}
\mathbb{E}_{\mathbf{e}_t|\mathbf{e}_0}
\!\left[
    \frac{1}{\lambda}
    \sum_{(i,j)\in\calW}
    w^{ij}_{\rm pos}(t)\,
    r_t^{ij}\,\mathbf{1}[e^{(i,j)}_t{=}\mathbf{M}]\,
    \mathrm{CE}\!\bigl(e^{(i,j)}_0,\hat p_{ij}\bigr)
\right]
\label{eq:L_wedge}\\[4pt]
\mathcal{L}_{S}
&=
\frac{1}{T}\sum_{t=1}^{T-1}
\mathbb{E}_{S_t}\,\mathbb{E}_{\mathbf{e}_t|\mathbf{e}_0}
\!\left[
    \frac{1}{\lambda_S}
    \sum_{(i,j)\in S_t}
    w^{ij}_{\rm pos}(t)\,
    r_t^{ij}\,\mathbf{1}[e^{(i,j)}_t{=}\mathbf{M}]\,
    \mathrm{CE}\!\bigl(e^{(i,j)}_0,\hat p_{ij}\bigr)
\right]
\label{eq:L_S}
\end{align}
rescaled by $1/\lambda$ and $1/\lambda_S$ respectively to correct for subsampling, so all three terms estimate the reconstruction loss over the full edge set at comparable scale.  $\mathcal{L}_{\rm deg}$ and $\mathcal{L}_{\rm tri}$ regularize node degree and triangle count toward their true values $d^{i*}$ and $T^*$ (defined below):
\begin{align}
\mathcal{L}_{\rm deg}
&=
\frac{1}{T}\sum_{t=1}^{T-1}
\mathbb{E}_{\mathbf{e}_t|\mathbf{e}_0}
\!\left[
    \frac{1}{N}\sum_{i=1}^{N}
    \Bigl(\hat{d}^{i}(\mathbf{e}_t) - d^{i*}\Bigr)^2
\right]
\label{eq:L_deg}\\[4pt]
\mathcal{L}_{\rm tri}
&=
\frac{1}{T}\sum_{t=1}^{T-1}
\mathbb{E}_{\mathbf{e}_t|\mathbf{e}_0}
\!\left[
    \left(
        \frac{\hat{T}_\theta(\mathbf{e}_t) - T^*}{\max(T^*,\,1)}
    \right)^{\!2}
\right]
\label{eq:L_tri}
\end{align}
with
\begin{equation}
d^{i*} = \sum_{(i,j)\in \calQ} e^{(i,j)}_0, \quad
\tau^{ij}_t = \bigl|\mathcal{N}(i;G_t)\cap\mathcal{N}(j;G_t)\bigr|, \quad
T^* = T(G_0) = \frac13\sum_{(i,j)\in E}\tau^{ij}_0,
\label{eq:quantities}
\end{equation}
the true degree of node $i$ over candidate pairs $\calQ$, the common-neighbor count of $(i,j)$ on $G_t$, and the triangle count of $G_0$. $d^{i*}$ is shared, unmodified, across all its uses in this section: as an encoder input feature (Eq.~\ref{eq:encoder}), as the regression target of $\mathcal{L}_{\rm deg}$, and as a component of the Chung--Lu prior $\rho^{ij}$ (Eq.~\ref{eq:LT}). The predicted counterparts are
\begin{equation}
\hat{d}^{i}(\mathbf{e}_t) = \sum_{(i,j)\in \calQ_t}\hat p_{ij}, \qquad
\hat{T}_\theta(\mathbf{e}_t) = \frac{1}{3}\sum_{(i,j)\in \calQ_t}\hat p_{ij}\;\tau^{ij}_t.
\label{eq:pred_quantities}
\end{equation}
$\lambda$ is the fraction of non-edges falling in the wedge set, and $\lambda_S$ the known random inclusion rate of the coverage sample $S_t$:
\begin{equation}
\lambda = \frac{|\calW(G)|}{\binom{N}{2}-m} \in (0,1], \qquad
\lambda_S = \frac{|S_t|}{\binom{N}{2}-m-|\calW(G)|}.
\label{eq:lambda}
\end{equation}

\paragraph{Reconstruction term.}
$\mathcal{L}_{recon}= \sum_{(i,j)\in\calQ} w^{ij}\, w^{ij}_{\rm pos}(0)\,\CE\!\bigl(e^{(i,j)}_0, \hat p_{ij}\bigr)$
with $w^{ij}=1$ for edges and $w^{ij}=1/\lambda$ for wedge non-edges.
\paragraph{Prior term via Chung-Lu.}This term measures the mismatch between the forward marginal at $t=T$ and the Chung–Lu prior used to initialize sampling. $L_T$
has no masked pool and is therefore not scaled by $w^{ij}_{\rm pos}$.
\begin{equation}
    \mathcal{L}_{prior}
    \;=\;
    \sum_{(i,j)\in\calQ}
    \bar{\alpha}_T^{\,\beta^{ij}}\,
    \CE\!\bigl(e^{(i,j)}_0,\,\rho^{ij}\bigr),
    \quad
    \rho^{ij} = \min\!\left(1,\ \frac{d^{i*}d^{j*}}{2m}\right) \in [0,1].
    \label{eq:LT}
\end{equation}

\paragraph{Class-balance reweighting.}
To prevent the vast majority of non-edges from dominating the masked-pool loss, we reweight each term above by the current positive-to-negative class ratio. Let $y^{ij} := e^{(i,j)}_0 \in \{0,1\}$ denote the ground-truth edge label, and let
\begin{equation}
\mathbf{1}^{ij}_{\rm pool}(t) \;:=\;
\begin{cases}
\mathbf{1}[e^{(i,j)}_t = \M] & t>0 \\
1 & t=0
\end{cases}
, \qquad (i,j)\in\calQ_t,
\label{eq:pool_indicator}
\end{equation}
denote pool membership under the current mask. Since $S_t\subset\calQ_t$ consists exclusively of non-edges ($y^{ij}=0$ for $(i,j)\in S_t$), the positive count reduces to a sum over the static candidate set $\calQ$, while the negative count does not (maskable $S_t$ pairs can contribute):
\begin{align}
n^+_t &= \sum_{(i,j)\in\calQ} y^{ij}\,\mathbf{1}^{ij}_{\rm pool}(t), &
n^-_t &= \sum_{(i,j)\in\calQ_t} (1-y^{ij})\,\mathbf{1}^{ij}_{\rm pool}(t),
\label{eq:npos_nneg}\\[4pt]
\rho_t &= \frac{n^-_t}{\max(n^+_t,\,1)}, &
w^{ij}_{\rm pos}(t) &= \rho_t\, y^{ij} + (1-y^{ij}).
\label{eq:wpos}
\end{align}
$w^{ij}_{\rm pos}(t)$ is recomputed at every sampled $t$ from the current batch and is not cached per graph.

\paragraph{Analysis.}
Training already restricts each graph to its candidate set $\calQ$ rather
than the full $\binom{N}{2}$ pairs; controlling the size of the randomly
sampled coverage set $S_t$ (Eq.~\ref{eq:lambdaS_choice}) alongside this
restriction yields a combined candidate set of size $|\calQ_t|=O(N\mu_2)$
(Prop.~\ref{prop:coverage}), where $\mu_2 = \frac{1}{N}\sum_{v} d_v^2$ is
the second moment of the degree sequence --- a fraction $O(\mu_2/N)$ of
all possible pairs, which vanishes as $N\to\infty$ whenever $\mu_2=o(N)$.
During sampling, in contrast, the reverse process has no access to the
ground-truth candidate set $\calQ$ --- defined from the clean graph
$G_0$ --- and must resolve all $\binom{N}{2}$ pairs at each step,
introducing an unavoidable quadratic dependency on the number of nodes at
sampling time (Table~\ref{tab:complexity_diffusion_only}); full detail is
in Appendix~\ref{appx:Analysis}.
\paragraph{Improving SNR.}
Restricting training to $\calQ_t$ improves the class-mass
signal-to-noise ratio (Def.~\ref{def:snr}) over training on all
$\binom{N}{2}$ pairs: at fixed masking fraction, the unweighted SNR
improves from $\Theta(m/N^2)$ to $\Theta(m/N\mu_2)$ under $\calQ_t$
(Rem.~\ref{rem:snr-dense}), a gain of $\Theta(N/\mu_2)$. The
positive-class weight $w^+_b(t)$ (Eq.~\ref{eq:wpos}) then exactly balances
the remaining class imbalance within $\calQ_t$, achieving a class-mass
ratio of exactly $1$ regardless of the pool (Prop.~\ref{prop:snr}), a
further gain of $\Theta(N\mu_2/m)$. Together, restriction and reweighting
improve the class-mass SNR by $\Theta(N^2/m)=\Theta(1/\rho)$ over an
unweighted, dense-pool baseline, where $\rho$ is the graph's edge density:
sparser graphs benefit more (Rem.~\ref{rem:snr-dense}). The
coverage-sampled term $S_t$ is HT corrected, so the full
objective remains an unbiased estimate of the full-graph objective
overall (Prop.~\ref{prop:unbiased}); full derivations are in
Appendix~\ref{appx:Analysis}.

\section{Experiments}
We evaluate our model's performance on graph generation to assess how realistic the generated graphs are and how well they match the distribution of the training data. To check the fidelity of the generated graphs, we consider characteristics related to the structure of the training data and compare them accordingly while also screening for memorization indicators to avoid mere duplication of the training data.

\begin{table*}[h]
\centering
\caption{Dataset statistics.}
\label{tab:data_status}
\small
\begin{adjustbox}{max width=\textwidth}
\begin{tabular}{lccccccccc}
\toprule
Statistic & Polblogs & Bitcoin-Alpha & Bitcoin-OTC & Cora & Citeseer & Pubmed & Community & Ego & ZINC \\
\midrule
Nodes  & 1,222  & 3,775  & 5,875  & 2,485 & 2,120 & 19,717 & [60, 160]   & [50, 399]   & [9, 37] \\
Edges  & 16,714 & 14,120 & 21,489 & 5,069 & 3,679 & 44,324 & [231, 1,956] & [56, 1,062] & [8, 41] \\
Graphs & 1      & 1      & 1      & 1     & 1     & 1      & 510         & 757         & 12,000  \\
\bottomrule
\end{tabular}
\end{adjustbox}
\end{table*}

\subsection{Datasets and metrics}
We consider a diverse range of graph datasets, from citation networks~\cite{sen2008collective} to financial networks~\cite{kumar2016edge,kumar2018rev2,EllipticPlusPlus}. Table~\ref{tab:data_status} reports dataset statistics; see Appendix~\ref{data_explanation} for details.

A generated graph should reproduce the statistical properties of the training distribution while remaining structurally distinct from any individual training graph --- i.e., matching the population, not memorizing any instance. We therefore evaluate distributional fidelity alongside explicit memorization checks. For fidelity, we compare generated and input graphs on statistics capturing both local and global structure. To capture local connectivity patterns and neighborhood cohesion, we use
Maximum Mean Discrepancy (MMD)~\cite{MMD} over node degree and clustering
coefficient distributions, respectively (lower is better). We also compare the power-law exponent (PLE) and characteristic path length (CPL)~\cite{chanpuriya2021power}, reflecting how well the model preserves scale-free degree behavior and small-world connectivity, along with the assortativity coefficient (AC), which captures the tendency of nodes to connect to others of similar degree. We further visualize the complementary cumulative distribution function (CCDF) of node degree, $P(\text{Degree}+1 \geq x)$ --- obtained by rank-sorting the pooled degree sequence and plotting on log-log axes --- to inspect how closely the heavy-tailed degree distribution of generated graphs tracks the training data (Fig.~\ref{fig:ap_node_degreeccd}). 

To rule out memorization, we report edge overlap (EO), since a high value can indicate the model is reproducing training graphs rather than generalizing~\cite{EDGE}. For smaller datasets, we additionally report uniqueness and novelty (Tab.~\ref{tab:small_metrics},~\ref{tab:zinc_metrics}) quantifying how many generated instances are unique and how many are novel relative to the training data. Implementation details are in Appendix\footnote{For baselines, we select efficient models capable of large-graph generation and exclude runs that result in out-of-memory failures.}~\ref{appx:implementation}.

\begin{figure*}[t]
\centering
\vspace{-0.3cm}
\begin{minipage}[t]{\textwidth}
\centering
\captionof{table}{\small
Statistics of generated vs. ground-truth graphs. Proximity to ground truth is optimal for all metrics except EO (where ground truth denotes self-overlap). Bold/underlined indicate best/second-best.
}\label{tab:main_result}
\vspace{-0.3cm}
\begin{adjustbox}{max width=\textwidth}
\begin{tabular}{l|ccccc|ccccc|ccccc}
\toprule
\multicolumn{1}{c}{} &
\multicolumn{5}{c|}{\textbf{Cora}} &
\multicolumn{5}{c|}{\textbf{Citeseer}} &
\multicolumn{5}{c}{\textbf{Pubmed}} \\
& EO$\downarrow$ & PLE & CPL & CC & AC
& EO$\downarrow$ & PLE & CPL & CC & AC
& EO$\downarrow$ & PLE & CPL & CC & AC \\
\midrule
ground truth
& 100.00 & 1.89 & 6.36 & 0.090 & -0.07
& 100.00 & 2.07 & 9.35 & 0.130 & 0.01
& 100.00 & 2.18 & 6.31 & 0.050 & -0.04 \\
\hdashline
GraphRNN~\citeyear{Graphrnn}
& \underline{0.01} & \underline{1.77} & 4.80 & \underline{0.040} & -0.01
& \underline{0.10} & \underline{2.30} & 4.10 & 0.002 & 0.23
& \underline{0.10} & 1.59 & 3.30 & \underline{0.200} & 0.10 \\
BiGG~\citeyear{BiGG}
& 10.40 & 1.50 & 3.10 & 0.005 & 0.10
& 12.40 & 1.60 & 3.40 & 0.004 & 0.11
& 17.30 & 1.55 & 3.40 & 0.350 & 0.09 \\
EDGE~\citeyear{EDGE}
& 2.10 & 1.76 & \underline{4.90} & 0.030 & \underline{-0.04}
& 6.10 & 1.72 & \underline{5.80} & 0.010 & \underline{0.05}
& 8.20 & \underline{1.60} & \underline{4.23} & 0.220 & \textbf{0.02} \\
ARROW-Diff~\citeyear{ARROW-Diff}
& 41.20 & 1.66 & 4.80 & 0.001 & \textbf{-0.08}
& 64.50 & 1.77 & 5.40 & \underline{0.062} & \textbf{0.01}
& 51.30 & 1.58 & 3.27 & 0.370 & 0.12 \\
SCR (ours)
& \textbf{0.00} & \textbf{1.83} & \textbf{5.10} & \textbf{0.076} & 0.10
& \textbf{0.00} & \textbf{2.04} & \textbf{6.20} & \textbf{0.067} & 0.35
& \textbf{0.00} & \textbf{1.77} & \textbf{4.50} & \textbf{0.180} & \underline{0.03} \\
\bottomrule
\end{tabular}
\end{adjustbox}

\vspace{2pt}

\begin{adjustbox}{max width=\textwidth}
\begin{tabular}{l|ccccc|ccccc|ccccc}
\toprule
\multicolumn{1}{c}{} &
\multicolumn{5}{c|}{\textbf{Alpha}} &
\multicolumn{5}{c|}{\textbf{OTC}} &
\multicolumn{5}{c}{\textbf{Polblogs}} \\
& EO$\downarrow$ & PLE & CPL & CC & AC
& EO$\downarrow$ & PLE & CPL & CC & AC
& EO$\downarrow$ & PLE & CPL & CC & AC \\
\midrule
ground truth
& 100.00 & 1.92 & 3.58 & 0.080 & -0.17
& 100.00 & 1.97 & 3.60 & 0.060 & -0.16
& 100.00 & 1.41 & 2.74 & 0.230 & -0.22 \\
\hdashline
GraphRNN~\citeyear{Graphrnn}
& \underline{0.10} & 1.10 & 1.04 & 0.005 & 0.12
& \textbf{0.00} & 0.68 & 1.73 & 0.170 & 0.12
& \underline{1.60} & 1.26 & \underline{2.50} & 0.010 & 0.10 \\
BiGG~\citeyear{BiGG}
& 9.70 & 0.80 & 1.20 & 0.001 & 0.15
& 10.20 & 0.70 & 1.20 & \underline{0.150} & 0.13
& 19.30 & 1.19 & 2.20 & 0.010 & 0.10 \\
EDGE~\citeyear{EDGE}
& 3.70 & 1.10 & \underline{2.10} & 0.040 & \textbf{-0.05}
& \underline{5.60} & \underline{0.87} & 1.90 & 0.180 & \underline{0.10}
& 3.10 & \underline{1.37} & \textbf{2.60} & \textbf{0.200} & \underline{-0.18} \\
ARROW-Diff~\citeyear{ARROW-Diff}
& 43.80 & \underline{1.20} & 2.05 & \underline{0.050} & \underline{0.05}
& 53.70 & 0.82 & \underline{2.02} & 0.230 & 0.11
& 61.30 & 1.32 & 2.10 & \underline{0.170} & \textbf{-0.19} \\
SCR (ours)
& \textbf{0.00} & \textbf{1.80} & \textbf{3.60} & \textbf{0.100} & 0.10
& \textbf{0.00} & \textbf{1.82} & \textbf{3.65} & \textbf{0.120} & \textbf{0.07}
& \textbf{0.00} & \textbf{1.39} & \textbf{2.60} & \textbf{0.200} & -0.11 \\
\bottomrule
\end{tabular}
\end{adjustbox}

\end{minipage}
\end{figure*}
\vspace{-0.2cm}
\begin{figure*}[h]
    \centering

    \begin{subfigure}[t]{0.32\textwidth}
        \centering
        \includegraphics[width=0.7\textwidth]{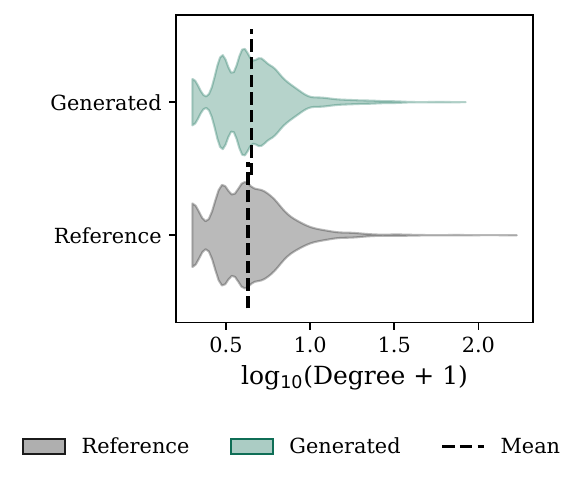}
        \caption{Cora}
        
    \end{subfigure}
    \hfill
    \begin{subfigure}[t]{0.32\textwidth}
        \centering
        \includegraphics[width=0.7\textwidth]{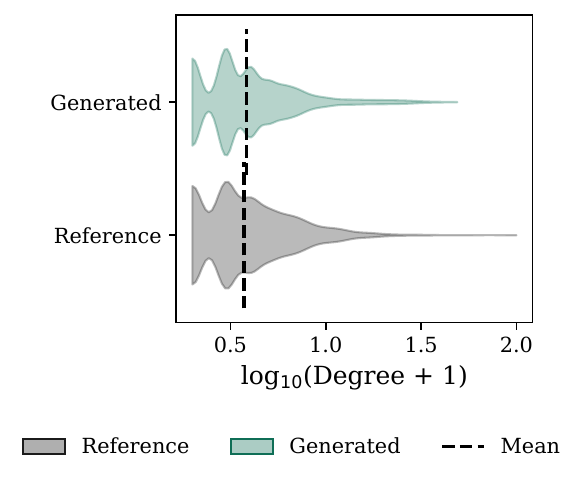}
        \caption{Citeseer}
           
    \end{subfigure}
    \hfill
    \begin{subfigure}[t]{0.32\textwidth}
        \centering
        \includegraphics[width=0.7\textwidth]{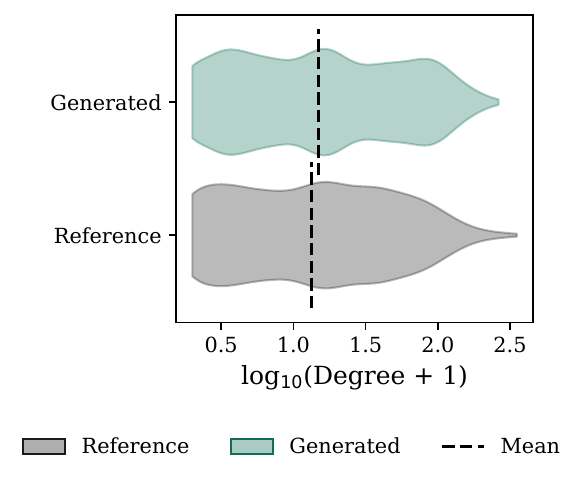}
        \caption{Polblogs}
        \end{subfigure}
    \caption{\small Degree distribution comparison between generated and training graphs, restricted to generated samples with zero edge overlap with the training data (i.e., no memorized edges). See Fig.~\ref{fig:ap_node_degree_vilon} for further datasets.}
    \label{fig:node_degree_v}
\end{figure*}

\paragraph{Results.}As reported in table~\ref{tab:main_result}, across the four degree- and closure-based statistics (EO, PLE, CPL, CC), our
model is consistently the closest or tied-closest to the ground truth on all six
datasets. The
clustering-coefficient result is worth highlighting specifically: our model
is closest or second-closest to ground truth on all six datasets, which is
consistent with the wedge-completion set $\calW(G)$ and the triangle-count
regularizer $\mathcal L_{\rm tri}$ both explicitly targeting local closure
structure — though we note this connection is suggestive rather than
established, since we do not isolate these components' individual
contribution to CC recovery here (see \S~\ref{sec:ablation}). Assortativity is the clear exception to this pattern --- performance is uneven across datasets, plausibly because every mechanism we introduce operates on local pairs or neighborhoods, with no component directly targeting this global degree correlation. Visualizing generated large graphs directly is uninformative at scale;
instead, we visualize degree distributions alongside the numerical results
(Fig.~\ref{fig:node_degree_v}). Further results, including those for smaller datasets (Tabs.~\ref{tab:small_metrics}, \ref{tab:zinc_metrics}) and empirical sampling time (Fig.~\ref{fig:time_sampling}), are provided in Appendix~\ref{sec:a_extend_result}.

\begin{table*}[t]
\centering
\small
\caption{\small Ablation of Prior Scaffolds and Regularizers. Proximity to Ground Truth is optimal. Bold/underlined indicate best/second-best within each group.}
\label{tab:ablation_consolidated}
\begin{adjustbox}{max width=0.9\textwidth}
\begin{tabular}{l|cccc|cccc|cccc}
\toprule
& \multicolumn{4}{c|}{\textbf{Cora}} & \multicolumn{4}{c|}{\textbf{Polblogs}} & \multicolumn{4}{c}{\textbf{Alpha}} \\
Variant & PLE & CPL & CC & AC & PLE & CPL & CC & AC & PLE & CPL & CC & AC \\
\midrule
Ground Truth & 1.89 & 6.36 & 0.090 & -0.07 & 1.41 & 2.74 & 0.230 & -0.22 & 1.92 & 3.58 & 0.080 & -0.17 \\
\midrule
\textit{Prior Scaffold} & & & & & & & & & & & & \\
\quad SCR (Ours) & \textbf{1.83} & \textbf{5.10} & \textbf{0.076} & \underline{0.10} & \underline{1.39} & \underline{2.60} & \underline{0.200} & \textbf{-0.11} & \textbf{1.80} & \textbf{3.60} & \textbf{0.100} & \underline{0.10} \\
\quad w/ Erd\H{o}s--R\'enyi & \textbf{1.83} & 4.87 & \underline{0.115} & 0.33 & 1.32 & 2.57 & 0.096 & 0.33 & 1.55 & \underline{3.65} & 0.640 & 0.43 \\
\quad w/ SBM & \underline{1.81} & \underline{4.92} & 0.005 & \textbf{0.01} & \textbf{1.40} & \textbf{2.66} & \textbf{0.204} & \underline{-0.08} & 1.58 & 3.09 & 0.558 & 0.27 \\
\cdashline{1-13}
\textit{Loss Terms} & & & & & & & & & & & & \\
\quad SCR (Ours) & \textbf{1.83} & \textbf{5.10} & \textbf{0.076} & \textbf{0.10} & \underline{1.39} & \textbf{2.60} & \textbf{0.200} & \textbf{-0.11} & \textbf{1.80} & \textbf{3.60} & \textbf{0.100} & \underline{0.10} \\
\quad w/o $\mathcal{L}_{\text{deg}}$ & \underline{1.81} & \underline{5.07} & 0.032 & \underline{0.17} & \textbf{1.40} & 2.55 & \underline{0.181} & \underline{-0.03} & 1.47 & 3.05 & 0.162 & \textbf{-0.05} \\
\quad w/o $\mathcal{L}_{\text{tri}}$ & 1.77 & 4.81 & \underline{0.120} & 0.34 & \underline{1.39} & \underline{2.58} & 0.177 & -0.02 & \underline{1.66} & \underline{3.16} & \underline{0.102} & 0.13 \\
\bottomrule
\end{tabular}
\end{adjustbox}
\end{table*}
\subsection{Ablation study}\label{sec:ablation}
Table~\ref{tab:ablation_consolidated} isolates the contributions of our prior scaffold and regularizers. Extended derivations, hyperparameters, and inference step-reduction analyses are provided in Appendices~\ref{appx:step_reduction}--\ref{appx:regularizers}. In addition, we validate this SNR improvement empirically in Appendix~\ref{app:practical}.
\paragraph{Terminal Prior Alignment.} 
We test whether the terminal scaffold prior must match the inductive bias of our degree-aware masking schedule ($\beta^{ij}$). Replacing our degree-based Chung--Lu prior with flat Erd\H{o}s--R\'enyi (ER) or a class-based SBM damages topological fidelity across datasets lacking explicit community structure: on Alpha, ER and SBM degrade clustering coefficient (CC) by $5\times$ to $6\times$. SBM only remains competitive on Polblogs, whose binary ideological communities explicitly fit the block-model prior. This confirms that matching the prior to the degree schedule is essential.
\paragraph{Role of Structural Losses.} 
Removing $\mathcal{L}_{\text{deg}}$ systematically deteriorates CC (worsening error by $2\text{--}4\times$), as CC is normalized by node degree and therefore distorted when degree distributions shift. Conversely, ablating $\mathcal{L}_{\text{tri}}$ primarily damages degree assortativity (AC error nearly doubles on Cora and Polblogs). Meanwhile, PLE remains stable under both ablations ($|\Delta_{\text{PLE}}| \le 0.02$), confirming that the global degree distribution is anchored by the forward schedule $\beta^{ij}$, while $\mathcal{L}_{\text{deg}}$ and $\mathcal{L}_{\text{tri}}$ provide localized topological corrections.
\paragraph{Inference Efficiency.}
Since our reverse posterior (Eq.~\ref{eq:unmask}) is well-defined for
arbitrary step skips, we can reduce the number of inference steps without
modifying the sampling procedure. Empirically
(Table~\ref{tab:step_reduction_grouped}, Fig.~\ref{fig:inference_time}),
inference steps can be reduced to $50\%$ of the training schedule
($T{=}50\%$) with negligible metric degradation, cutting sampling time in
half; performance drops noticeably only at extreme reductions ($T{=}10\%$).
Complete curves across datasets appear in Appendix~\ref{appx:step_reduction}.
\section{Conclusion}
We introduced a discrete diffusion framework for graph generation that
restricts training to a structurally motivated subset of node pairs,
together with a small random coverage sample, yielding an unbiased
training objective with sub-quadratic training complexity in the number
of nodes. A degree-aware forward process keeps the noisy graph
structurally informative throughout the reverse process, while a dynamic
common-neighbor feature conditions the denoiser on local structure at
every noise level. Together, these design choices improve structural
fidelity over existing discrete diffusion baselines without memorizing the
training graphs. Across datasets of varying size and domain, our method is
designed for, and achieves the best or second-best fidelity on, large
graphs, while remaining comparable to baselines built specifically for
small or molecular graphs.

\newpage
\bibliography{iclr2027_conference.bib}

\newpage
\newpage
\appendix
\section*{Appendix Overview}

\startcontents[appendix]
\printcontents[appendix]{l}{1}{\section*{Contents}\hrule\vspace{0.8em}}
\vspace{1.5em}
\hrule
\vspace{2em}

\newpage
\section{Training and Sampling Algorithms}\label{appx:training_alg}
Here we provide the pre-computation~\ref{alg:ergm_precompute}, optimization during training~\ref{alg:ergm_ddpm_short}, graph initialization during sampling~\ref{alg:ergm_sample_init} and sampling steps~\ref{alg:ergm_sample_step} algorithms. 

\begin{algorithm}[h]
\footnotesize
\caption{Pre-computation (once per graph dataset)}
\label{alg:ergm_precompute}
\begin{algorithmic}[1]
\Require $G=(X_0, E_0) \sim  Dataset$
\State Compute degree sequence $\mathbf d^{*}$, triangle count $T^{*}$, and the candidate set $\calQ = E_0\cup\calW(G)$\Comment{Eq.~\ref{eq:Q}}
\State Compute masking exponents $\{\beta^{ij}\}$, HT weights $\{w^{ij}\}$, and Chung-Lu parameters $\{\rho^{ij}\}$ for $(i,j)\in\calQ$ \Comment{Eqs.~\ref{eq:beta},\ref{eq:LT}}
\State $L_T \leftarrow$ prior term (Eq.~\ref{eq:LT}) \Comment{$\theta$- and $t$-independent scalar, cached once}
\State \Return $\calQ,\ \mathbf d^*,\ T^*,\ \{\beta^{ij}\},\ \{w^{ij}\},\ \{\rho^{ij}\},\ L_T$
\end{algorithmic}
\end{algorithm}

\begin{algorithm}[h]
\footnotesize
\caption{Optimization step of $\mathcal{L}(\theta)$, Eq.~\eqref{eq:elbo_full}}
\label{alg:ergm_ddpm_short}
\begin{algorithmic}[1]
\Require $\calQ,\mathbf d^*,T^*,\{\beta^{ij}\},\{w^{ij}\},L_T$ from Alg.~\ref{alg:ergm_precompute}; model $f_\theta=(h_\theta,g_\theta)$
\State $t$ $\sim$ \{1, ..., T\} 
\State Sample coverage set $S_t\subseteq\calQ^c$ of size $\lceil\lambda_S\,|\calQ^c|\rceil$; set $\calQ_t\gets\calQ\cup S_t$ \Comment{Eq.~\ref{eq:Qt}}

\State \(G_t\) \(\gets\) \(q(G_t|G_0)\)
\State Compute $\{\tau^{ij}_t\}$ on $\calQ_t$ (Eq.~\ref{eq:quantities})
\State $\mathbf H_t \gets h_\theta(G_t)$ \Comment{sparse encoder, message passing restricted to visible edges $E_t$, Eq.~\ref{eq:encoder}}
\State $\hat p_{ij} \gets g_\theta(h^i_t,h^j_t,\log(1+\tau^{ij}_t))$ for $(i,j)\in\calQ_t$ \Comment{pairwise decoder, Eq.~\ref{eq:decoder}}
\State Compute the class-balance weight $w^+_b(t)$ \Comment{Eq.~\ref{eq:wpos}} 
\State Compute \(\mathcal{L}_\theta\)\Comment{Eq.~\ref{eq:elbo_full}}
\State \(\mathcal{L_\theta}.\text{backward()}\)
\end{algorithmic}
\end{algorithm}

\paragraph{Sampling.}
Since $\beta^{ij}$, the encoder, and the prior all require node degrees,
generation is conditioned on a degree sequence: we draw a training graph
uniformly at random and take its size $N$ and degree sequence
$\mathbf d^*=\{d^{i*}\}$, which stay fixed for the whole reverse chain and
replace the true degrees in Eqs.~\ref{eq:beta}, \ref{eq:encoder}, and
\ref{eq:LT}. The edges are generated by the model, so $\mathbf d^*$ acts
as a target that the generated graph follows in expectation rather than
exactly. The chain starts from the partial-absorbing Chung--Lu prior,
drawn independently for every pair $(i,j)$, $i<j$:
\begin{equation}
    p\bigl(e^{(i,j)}_T=\M\bigr)=1-\alpha_{\min}^{\beta^{ij}},\qquad
    p\bigl(e^{(i,j)}_T=1\bigr)=\alpha_{\min}^{\beta^{ij}}\,\rho^{ij},\qquad
    p\bigl(e^{(i,j)}_T=0\bigr)=\alpha_{\min}^{\beta^{ij}}\,\bigl(1-\rho^{ij}\bigr),
    \label{eq:prior}
\end{equation}
with $\rho^{ij}$ as in Eq.~\ref{eq:LT}. Each pair is thus visible with the
same probability as under the forward marginal at $t=T$
(Eq.~\ref{eq:fwd-retain}), and its visible state is drawn from Chung--Lu;
accordingly, the KL between the forward marginal at $T$ and this prior is,
per pair, $\alpha_{\min}^{\beta^{ij}}\,\CE\bigl(e^{(i,j)}_0,\rho^{ij}\bigr)$,
the summand of $\mathcal L_{\rm prior}$. We then run the reverse process
along a decreasing step sequence $T=t_K>t_{K-1}>\dots>t_0=0$, $K\le T$,
which Eq.~\ref{eq:unmask} permits for any $t_{\rm prev}<t$. At each step,
$\tau^{ij}_t$ and $\mathbf d_t$ are recomputed from the visible edges
$E_t$, and every masked pair is revealed with probability
$r^{ij}_{t\to t_{\rm prev}}$ and then set to
$\tilde e^{ij}_0\sim\mathrm{Bernoulli}(\hat p_{ij})$, while visible pairs
keep their state (Eq.~\ref{eq:posterior}). At the last step,
$r^{ij}_{t\to 0}=1$ decodes all remaining masked pairs, and the generated
graph has edge set $\{(i,j):e^{(i,j)}_0=1\}$. Unlike training, sampling
cannot restrict the queries to $\calQ$, which is defined by the unknown
clean graph. A dynamic set built from the visible edges would miss hub
pairs, which have $\beta^{ij}$ close to $\beta_{\max}$, are almost surely
masked at $t=T$, and rarely share a visible neighbor early in the chain.
The denoiser therefore scores all $\binom N2$ pairs at every step,
evaluated in chunks (Algs.~\ref{alg:ergm_sample_init}
and~\ref{alg:ergm_sample_step}).

\begin{algorithm}[h]
\footnotesize
\caption{Sampling initialization ($t=T$, once per generated graph)}
\label{alg:ergm_sample_init}
\begin{algorithmic}[1]
\State Sample graph size $N$ and degree sequence $\mathbf d^*$ from the empirical training distribution
\State Compute $\{\beta^{ij}\}$ and $\{\rho^{ij}\}$ for all $\binom{N}{2}$ pairs (Eqs.~\ref{eq:beta},\ref{eq:LT})
\State Sample $E_T$: each pair is visible w.p.\ $\alpha_{\min}^{\beta^{ij}}$, with visible edges drawn Chung-Lu Bernoulli($\rho^{ij}$); all other pairs are masked
\State \Return $E_T,\ \{\beta^{ij}\},\ \{\rho^{ij}\}$
\end{algorithmic}
\end{algorithm}

\begin{algorithm}[h]
\footnotesize
\caption{Reverse step $t\to t_{\rm prev}$}
\label{alg:ergm_sample_step}
\begin{algorithmic}[1]
\Require current state $E_t$; $\{\beta^{ij}\}$; model $f_\theta=(h_\theta,g_\theta)$
\State compute $\{\tau^{ij}_t\}$ for all pairs
\State $\mathbf H_t \gets h_\theta(G_t)$
\State $\hat p_{ij} \gets g_\theta(\mathbf{h}_i,\mathbf{h}_j,\log(1+\tau^{ij}_t))$ for all pairs \Comment{query set is every pair, since sampling must decide each pair's final state.}

\For{each $(i,j)\in\calQ_t$ with $e^{ij}_{t}=\M$}
        \State Compute $r_{t\to t-1}^{ij}$ via Eq.\ref{eq:unmask}
        \State With prob.\ $r_{t\to t-1}^{ij}$:
               $e^{ij}_{t-1}\sim\mathrm{Categorical}(\hat{p}_{ij})$;
               else $e^{ij}_{t-1}\leftarrow\M$
    \EndFor
\State \Return $E_{t_{\rm prev}}$ \Comment{after $t-1=-1$: output edge set $=\{(i,j):e^{(i,j)}_0=1\}$}
\end{algorithmic}
\end{algorithm}

\section{Analysis and Discussions}\label{appx:Analysis}
As discussed above, by controlling the size of the randomly sampled coverage set $S_t$
(Eq.~\ref{eq:lambdaS_choice}), our model operates on a sub-quadratic (in $N$) fraction
of all possible node pairs during training — $O(N\mu_2)$ rather than $O(N^2)$ —
enabling scalability to large sparse graphs. During sampling, by contrast, the model
must evaluate the state of every potential edge at each reverse step, since the reverse
process has no access to the ground-truth candidate set $\calQ$ and must therefore
resolve all $\binom N2$ pairs; this introduces an unavoidable quadratic dependency on
the number of nodes at sampling time (Table~\ref{tab:complexity_diffusion_only}).

As shown in Tables~\ref{tab:complexity_diffusion_only} and
\ref{tab:complexity_nondiffusion}, our training complexity compares favorably with
EDGE~\citep{EDGE} in a specific sense: our bound $O(N\mu_2)$ depends only on
static graph structure and holds unconditionally by construction, whereas EDGE's bound
$O(\max(m,K^2))$ depends on the number of ``active" nodes $K$ at each step, a
stochastic, schedule-dependent quantity rather than a proven general bound. In this
sense, our training-time efficiency is a guaranteed worst-case property of the graph,
while EDGE's is an empirical regularity of its noise schedule. The comparison reverses
at sampling time, however: EDGE's active-node restriction applies to its reverse
process as well, plausibly preserving subquadratic sampling cost (to the extent its
empirical $K\ll N$ observation holds), while our sampling procedure — lacking an
analogous restriction — incurs the full $O(N^2)$ cost of dense diffusion baselines
such as DiGress at every reverse step. This training/sampling asymmetry is a genuine limitation of the current method
relative to EDGE. The asymmetry follows
from a structural difference in convergent distributions: EDGE's forward process
converges to the literal empty graph $G(N,0)$, which makes ``active" (currently
changing) nodes well-defined and lets the reverse process monotonically grow the
graph from nothing, decoding only those nodes at each step. Our forward process
instead uses a \emph{partial} absorbing prior, $\alpha_{\min}\in(0,1)$
(Eq.~\ref{eq:schedule}), which does not converge to a fully empty or fully masked
state; this choice is motivated by avoiding the SNR collapse of sparse graphs at the
absorbing limit and by training convergence
considerations, but it forfeits the empty-graph-anchored monotonic restriction that
EDGE exploits. Recovering a comparable sampling-time restriction under a partial
absorbing prior — e.g., restricting decoding to the currently masked pool, which
shrinks monotonically over the reverse schedule even though it starts non-empty — is
a natural direction for future work, left outside the scope of the current
formulation.

\begin{table}[h]
\centering
\small
\begin{threeparttable}
\caption{Per-step complexity, diffusion-based graph generative models only. Assumes constant per-call network cost, so candidate-pair count and per-step training complexity coincide. $N$ is number of nodes and $m$ is number of edges. Bold denotes complexity improved by our method relative to the diffusion baselines.}
\label{tab:complexity_diffusion_only}
\begin{tabular}{@{}l p{3.6cm} p{4.2cm}@{}}
\toprule
\textbf{Method} & \textbf{Training (per step)} & \textbf{Sampling (per step)} \\
\midrule
DiGress            & $O(N^2)$ & $O(N^2)$ \\
SparseDiff           & $O(m)$\tnote{a} & $O(N^2)$\tnote{b} \\
EDGE                & $O(\max(m,K^2))$\tnote{c} & $O(\max(m,K^2))$\tnote{c} \\
\textbf{Ours} & $\bm{O(N\mu_2)}$\tnote{d} & $O(N^2)$ \\
\bottomrule
\end{tabular}
\begin{tablenotes}[flushleft]
\footnotesize
\item[a] By explicit design choice $\lambda=m/N^2$ \citep{SparsDiff}.

\item[b] Chunked for memory only; total compute per step still touches all $\binom{N}{2}$ pairs.

\item[c] \citet{EDGE}: the per-step term shown here is the summand already implicit in their stated total $O(T\max(K^2,m))$ over $T$ steps (i.e., total $= T \times$ per-step). $K$ = number of ``active" nodes.

\item[d] $\mu_2 = \frac{1}{N}\sum_v d_v^2$, the mean squared degree; by the explicit choice of $\lambda_S$ (Eq.~\ref{eq:lambdaS_choice}).
\end{tablenotes}
\end{threeparttable}
\end{table}

\begin{table*}[h]
\centering
\small
\begin{threeparttable}
\caption{Comparison of autoregressive (AR) graph generation methods. Complexity values are adopted from the original papers for consistency. $N$ is number of nodes and $m$ is number of edges. Note that ``per-step'' here denotes one autoregressive generation step (row/node/token), whose number scales with $N$ or $m$, which is not directly comparable to the fixed-$T$ diffusion timestep used as ``per-step'' in Table~\ref{tab:complexity_diffusion_only}.}
\label{tab:complexity_nondiffusion}
\begin{tabular}{@{}l l p{2.6cm} p{3.2cm} p{3.2cm}@{}}
\toprule
\textbf{Method} & \textbf{Paradigm} & \textbf{Generation Process} & \textbf{Overall Complexity} & \textbf{Per-step Complexity} \\
\midrule
GraphRNN & Flat AR & Row-by-row  & $O(N^2)$\tnote{a} & $O(N)$ \\
BiGG & Flat AR (tree) & Tree traversal & $O((N+m)\log N)$ & $O(1)$ \\
HiGen & Hierarchical AR & Cluster-wise & $O(n_c\log^2 N)$\tnote{b} & $O(N^2)$\tnote{b} \\
HGGT ($K^2$-trees) & Compressed AR & Sequence  & $O(L_{\rm seq}^2)$\tnote{c} & $O(L_{\rm seq}^2)$\tnote{c} \\
\bottomrule
\end{tabular}

\begin{tablenotes}[flushleft]
\footnotesize
\item[a] $N$ row-emission steps $\times$ $O(N)$ per step.

\item[b] \citet{Higen}. $n_c$ = number of clusters. $O(N^2)$ full attention or $O(N+m)$ linear attention for larger graphs; Per-step and Overall coincide with the full-attention variant here.

\item[c] $L_{\rm seq}=O(m\log(N^2/m))$ is the flattened $K^2$-tree sequence length. Per-step and Overall coincide since generation is a single-pass sequence operation rather than one cost repeated over $N$ or $m$ steps.

\end{tablenotes}
\end{threeparttable}
\end{table*}
\paragraph{Dependence on degree heterogeneity.}
Unlike SparsDiff, which sets $\lambda := m/N^2$ so its query count
$\lambda N^2=m$ is $O(m)$ unconditionally, our per-step cost
$O(m+N\mu_2)$ depends on the second moment of the degree distribution,
$\mu_2 := \frac{1}{N}\sum_v d_v^2$, since $\calW(G)$ is enumerated
deterministically rather than sampled. For each node $v$, computing its
wedges costs $O(d_v^2)$, giving
$|\calW(G)| = O\!\big(\sum_v d_v^2\big) = O(N\mu_2)$ in total. This term
reduces to $O(m)$ exactly when the condition
\begin{equation}
    \mu_2 \;=\; O(\bar d), \qquad \bar d := \frac{2m}{N},
    \label{eq:mu2-condition}
\end{equation}
holds, i.e., when the second moment of the degree distribution is of the
same order as the mean degree. A sufficient condition for
Eq.~\ref{eq:mu2-condition} to hold is a bounded maximum degree,
$\Delta = O(1)$, since then $\mu_2 = \frac{1}{N}\sum_v d_v^2 \le \Delta\bar d
= O(\bar d)$. This holds for many sparse real-world graphs but not all.
A star graph, for instance, has $m=N-1=O(N)$ edges, yet
$|\calW(G)|=\binom{N-1}{2}=O(N^2)$, since every leaf pair shares the hub as
a common neighbor. This shows that sparsity in $m$ alone does not
guarantee $O(m)$ cost under high degree heterogeneity. Accordingly, since
$2m \le N\mu_2$ always holds (as $d_v^2 \ge d_v$ for integer degrees), we report
our complexity as $O(N\mu_2)$ throughout
(Table~\ref{tab:complexity_diffusion_only}): this is the exact asymptotic
order in all cases -- including those where it coincides with $O(m)$ --
and one that scales with the structural complexity that drives the cost
of deterministic wedge enumeration, rather than with $m$ alone.
\paragraph{Choice of $\lambda_S$.} To guarantee the coverage sample does not
reintroduce quadratic cost, and is not an arbitrary hyperparameter, we fix
\begin{equation}
\lambda_S \;:=\; \frac{c\,N\mu_2}{\binom{N}{2}-m-|\calW(G)|}, \qquad \mu_2 = \frac{1}{N}\sum_{v} d_v^2,\ \ c>0 \text{ a constant},
\label{eq:lambdaS_choice}
\end{equation}
so that $|S_t| = \bigl\lceil \lambda_S\bigl(\tbinom N2 - m - |\calW(G)|\bigr)\bigr\rceil = O(N\mu_2)$ by construction — matching the scale of $|\calQ|=O(N\mu_2)$ rather than being an independent hyperparameter. This mirrors SparseDiff's choice $\lambda=m/N^2$ \citep{SparsDiff}, which analogously pins their query-edge count to $O(m)$. Under Eq.~\ref{eq:lambdaS_choice}, training's per-step complexity (Table~\ref{tab:complexity_diffusion_only}) is $O(N\mu_2)$, not merely bounded in expectation but by construction. In practice, we select $c$ per dataset to hold the realized per-step tensor
footprint, $\text{batch}\times\text{dim}\times(|\calQ|+|S_t|)$, within a fixed
memory budget: $c$ decreases as $\mu_2$ (and hence $|\calQ|=m+|\calW(G)|$)
grows, since a larger $\calQ$ leaves less headroom for $S_t$ before training
reintroduces quadratic-scale cost or exhausts GPU memory. This is a
complexity/memory safety margin, not a quality-tuning knob — $c$ is chosen to
be the largest value that keeps $|\calQ|+|S_t|$ within budget, not selected
by any generation-quality criterion.

\begin{lemma}[Wedge Count Bound]
\label{lem:wedge-bound}
Let $G=(V,E)$ be a graph on $N$ nodes with edge count $m=|E|$, degree
sequence $\{d_v\}_{v\in V}$, and second moment
$\mu_2 = \frac{1}{N}\sum_{v} d_v^2$. Then
\begin{equation}
    |\calW(G)| \;\le\; \frac{1}{2}\bigl(N\mu_2 - 2m\bigr).
    \label{eq:wedge-bound}
\end{equation}
\end{lemma}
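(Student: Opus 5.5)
The plan is a double-counting argument over wedges (paths of length two). For each node $v$, the number of unordered pairs of distinct neighbours of $v$ is $\binom{d_v}{2}$; each such pair $\{i,j\}$ together with $v$ forms a wedge $i - v - j$ centred at $v$. Summing over centres gives the total wedge count
\begin{equation*}
    \sum_{v\in V}\binom{d_v}{2} \;=\; \frac{1}{2}\sum_{v} d_v^2 - \frac{1}{2}\sum_v d_v \;=\; \frac{1}{2}\bigl(N\mu_2 - 2m\bigr),
\end{equation*}
using the handshake identity $\sum_v d_v = 2m$ and the definition $N\mu_2=\sum_v d_v^2$.

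Next I would define a map from wedges to unordered node pairs, sending the wedge $i - v - j$ to its endpoint pair $\{i,j\}$. By Definition~\ref{def:W}, every pair in $\calW(G)$ is a non-edge $(i,j)$ that has at least one common neighbour $k$. So $i - k - j$ is a wedge mapped to $\{i,j\}$, and every element of $\calW(G)$ lies in the image of this map. The map surjects onto a set containing $\calW(G)$, and the preimages of distinct pairs are disjoint. Hence $|\calW(G)|$ is at most the number of wedges, which is the bound in Eq.~\ref{eq:wedge-bound}. Pairs that are edges and close triangles, or non-edges with several common neighbours, only make the inequality slack. If one wanted a sharper statement, the slack is exactly $\sum_{\{i,j\}}\max(\tau^{ij}_0-1,0)$ over non-edges plus $3T^*$.

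The only point needing care is the counting convention for $\calW(G)$. The paper writes ordered-looking pairs $(i,j)$, but the bound holds only if pairs are unordered: otherwise the set is twice as large and the right-hand side would need to drop the factor $\tfrac12$. So I would state explicitly that $\calW(G)$ is a set of unordered pairs with $i\neq j$, consistent with $\lambda$ being normalised by $\binom N2 - m$ in Eq.~\ref{eq:lambda}. Beyond this, the argument is a direct injection-type inequality, and I expect no real obstacle.
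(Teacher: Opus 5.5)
Your proof is correct and is essentially the paper's argument. The paper writes the same double count as $\sum_k\binom{d_k}{2}=\sum_{(i,j)}|\mathcal N(i)\cap\mathcal N(j)|$, drops the edge pairs, and replaces each nonzero count by an indicator; that is your map from wedges onto a superset of $\calW(G)$, followed by the same handshake expansion. Your extra remarks also hold: the slack is $3T^*+\sum_{(i,j)\notin E}\max(\tau^{ij}_0-1,0)$, and $\calW(G)$ must be read as unordered pairs with $i\neq j$, as the paper's proof implicitly does, for the factor $\tfrac12$ to be valid.
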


\begin{proof}
For a fixed node $k\in V$, the number of unordered node pairs $(i,j)$ with
$k$ as a common neighbour is $\binom{d_k}{2}$, since this counts the ways to
choose two distinct neighbours of $k$ from its neighbourhood of size $d_k$.
Summing over all nodes $k$ counts each pair $(i,j)$ once for every common
neighbour it has, giving the double-counting identity
\begin{equation}
    \sum_{k=1}^{N}\binom{d_k}{2}
    \;=\;
    \sum_{(i,j)}\bigl|\mathcal N(i)\cap\mathcal N(j)\bigr|.
    \label{eq:double-count}
\end{equation}
Restricting the right-hand sum in \eqref{eq:double-count} to non-edges only
removes non-negative terms, and replacing each remaining count
$|\mathcal N(i)\cap\mathcal N(j)|\ge 1$ by the indicator $\mathbf 1[\cdot\ge1]$
only decreases each term further. Hence
\begin{equation}
    \sum_{k=1}^{N}\binom{d_k}{2}
    \;\ge\;
    \sum_{(i,j)\notin E}
    \mathbf 1\bigl[|\mathcal N(i)\cap\mathcal N(j)|\ge 1\bigr]
    \;=\;
    |\calW(G)|,
    \label{eq:wedge-ineq}
\end{equation}
where the final equality is Definition~\ref{def:W} of $\calW(G)$. Expanding
the left-hand side of \eqref{eq:wedge-ineq},
\begin{equation}
    \sum_{k=1}^N \binom{d_k}{2}
    \;=\;
    \frac{1}{2}\sum_{k=1}^N d_k(d_k-1)
    \;=\;
    \frac{1}{2}\Bigl(\sum_{k=1}^N d_k^2 - \sum_{k=1}^N d_k\Bigr)
    \;=\;
    \frac{1}{2}\bigl(N\mu_2 - 2m\bigr),
    \label{eq:expand}
\end{equation}
using $\sum_k d_k^2 = N\mu_2$ and the handshake identity
$\sum_k d_k = 2m$. Combining \eqref{eq:wedge-ineq} and \eqref{eq:expand}
gives \eqref{eq:wedge-bound}.
\end{proof}

\begin{proposition}[Coverage Bound]
\label{prop:coverage}
Let $G$ be a graph on $N$ nodes with $m$ edges and degree second moment
$\mu_2 = \frac{1}{N}\sum_v d_v^2$, and let $\calQ_t = E\cup\calW(G)\cup S_t$
be the training-time candidate set of Eq.~\ref{eq:Qt}, with $|S_t| =
\lceil cN\mu_2\rceil$ fixed by the choice of $\lambda_S$
(Eq.~\ref{eq:lambdaS_choice}) for a constant $c>0$. Then
\begin{equation}
    |\calQ_t| \;\le\; \Bigl(\tfrac12+c\Bigr)N\mu_2 + 1
    \;=\; O(N\mu_2),
    \label{eq:coverage-bound}
\end{equation}
and consequently the fraction of all $\binom{N}{2}$ node pairs evaluated at
each training step satisfies
\begin{equation}
    \frac{|\calQ_t|}{\binom{N}{2}} \;=\; O\!\left(\frac{\mu_2}{N}\right),
    \label{eq:coverage-fraction}
\end{equation}
which vanishes as $N\to\infty$ whenever $\mu_2 = O(N)$.
\end{proposition}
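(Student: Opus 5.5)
The plan is a union bound on $|\calQ_t|$, splitting it into the three pieces $E$, $\calW(G)$ and $S_t$. By construction $E$ and $\calW(G)$ are disjoint, since $\calW(G)$ consists only of non-edges. The coverage sample $S_t$ is drawn from the complement of $\calQ$ (Alg.~\ref{alg:ergm_ddpm_short}), so the union is disjoint and
\[
|\calQ_t| = m + |\calW(G)| + |S_t| .
\]
I do not actually need disjointness, because $|\calQ_t|\le m+|\calW(G)|+|S_t|$ holds in any case.

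Next I apply Lemma~\ref{lem:wedge-bound}, which gives $|\calW(G)|\le \tfrac12(N\mu_2-2m)$. Hence
\[
m+|\calW(G)| \le m + \tfrac12 N\mu_2 - m = \tfrac12 N\mu_2 .
\]
The $-2m$ in the lemma exactly absorbs the edge count, which is why the constant is $\tfrac12$. For the coverage set, $|S_t|=\lceil cN\mu_2\rceil\le cN\mu_2+1$. Adding these gives \eqref{eq:coverage-bound}. Since $\mu_2\ge 0$, and $N\mu_2\ge 2m\ge 2$ once there is at least one edge, the additive $+1$ is dominated and the bound is $O(N\mu_2)$ for fixed $c$. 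For an edgeless graph the bound reduces to the trivial $|\calQ_t|\le 1$, or $0$ if the ceiling of $0$ is taken.

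For \eqref{eq:coverage-fraction}, I divide by $\binom N2=N(N-1)/2\ge N^2/4$ for $N\ge2$. This gives
\[
\frac{|\calQ_t|}{\binom N2}\le \frac{4\bigl((\tfrac12+c)N\mu_2+1\bigr)}{N^2} = O\!\left(\frac{\mu_2}{N}\right),
\]
again absorbing the $1/N^2$ term into $\mu_2/N$ whenever $\mu_2\ge 2m/N>0$.

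The step I expect to be the main obstacle is the final vanishing claim, because the statement as written says ``whenever $\mu_2=O(N)$''. That hypothesis only makes the fraction bounded. Vanishing requires $\mu_2=o(N)$, which matches the phrasing in the main-text Analysis paragraph. I would prove the statement under $\mu_2=o(N)$ and remark that $\mu_2=O(N)$ yields only an $O(1)$ fraction. The remaining subtlety is that $|S_t|$ must not exceed the available pool $\binom N2-m-|\calW(G)|$. If it did, $S_t$ would simply be the whole complement and the upper bound would still hold trivially.
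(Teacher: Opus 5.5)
Your proof is correct and follows the paper's argument essentially step for step: you split $\calQ_t$ disjointly into $E$, $\calW(G)$ and $S_t$, use Lemma~\ref{lem:wedge-bound} to get $|\calQ|\le\tfrac12 N\mu_2$, bound $|S_t|\le cN\mu_2+1$, and divide by $\binom N2=\Theta(N^2)$. Your caveat about the final claim is also correct and worth keeping: vanishing requires $\mu_2=o(N)$, and the paper's closing assertion that the fraction tends to $0$ ``precisely when $\mu_2=O(N)$'' is false, as the star graph shows ($\mu_2=N-1$, yet $\calQ$ already contains all $\binom N2$ pairs).
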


\begin{proof}
By Definition~\ref{def:W}, $\calQ = E\cup\calW(G)$ with $E\cap\calW(G)=
\emptyset$, so $|\calQ| = m + |\calW(G)|$. Applying
Lemma~\ref{lem:wedge-bound},
\begin{equation}
    |\calQ| \;=\; m + |\calW(G)|
    \;\le\; m + \frac{1}{2}\bigl(N\mu_2 - 2m\bigr)
    \;=\; \frac{1}{2}N\mu_2.
    \label{eq:Q-bound}
\end{equation}
By construction, $|S_t| = \lceil cN\mu_2\rceil \le cN\mu_2 + 1$ holds
deterministically for every realization of $S_t$, since $\lambda_S$ fixes
the sample \emph{size}, not merely its expectation. Since $\calQ_t =
\calQ \cup S_t$ with $S_t\subseteq \bar E\setminus\calW(G)$ disjoint from
$\calQ$ (Eq.~\ref{eq:Qt}),
\begin{equation}
    |\calQ_t| \;=\; |\calQ| + |S_t|
    \;\le\; \frac{1}{2}N\mu_2 + cN\mu_2 + 1
    \;=\; \Bigl(\tfrac12+c\Bigr)N\mu_2 + 1,
\end{equation}
which is \eqref{eq:coverage-bound}. Dividing by $\binom{N}{2} =
\Theta(N^2)$ gives
\begin{equation}
    \frac{|\calQ_t|}{\binom{N}{2}}
    \;=\; O\!\left(\frac{N\mu_2}{N^2}\right)
    \;=\; O\!\left(\frac{\mu_2}{N}\right),
\end{equation}
which is \eqref{eq:coverage-fraction}. This ratio tends to $0$ as
$N\to\infty$ precisely when $\mu_2 = O(N)$.
\end{proof}

\begin{proposition}[Unbiasedness of the Reweighted ELBO]
\label{prop:unbiased}
Fix a diffusion step $t\in\{1,\dots,T-1\}$ and a clean graph $\mathbf e_0$.
Let
\begin{equation}
    \hat L(t)
    \;=\;
    \underbrace{\sum_{(i,j)\in E}\ell_{ij}(t)}_{\text{edge term}}
    \;+\;
    \underbrace{\sum_{(i,j)\in\calW(G)}\ell_{ij}(t)}_{\text{wedge term}}
    \;+\;
    \underbrace{\frac{1}{\lambda_S}\sum_{(i,j)\in S_t}\ell_{ij}(t)}_{\text{coverage term}},
    \label{eq:Lhat}
\end{equation}
where $\ell_{ij}(t) := w^{ij}_{\rm pos}(t)\,r_t^{ij}\,
\mathbf 1[e^{(i,j)}_t{=}\M]\,\CE\bigl(e^{(i,j)}_0,\hat p_{ij}\bigr)$, $S_t$ is
drawn uniformly without replacement from $\bar E\setminus\calW(G)$ with
$|S_t|=\lambda_S\bigl(\binom{N}{2}-m-|\calW(G)|\bigr)$
(Eq.~\ref{eq:lambda}), and $E$, $\calW(G)$ are deterministic given $G$.
Let
\begin{equation}
    L_{\rm full}(t) \;:=\; \sum_{(i,j)}\ell_{ij}(t)
\end{equation}
denote the same loss summed over all $\binom{N}{2}$ node pairs. Then
$\hat L(t)$ is an unbiased estimator of $L_{\rm full}(t)$ with respect to
the randomness in $S_t$:
\begin{equation}
    \mathbb E_{S_t}\bigl[\hat L(t)\bigr] \;=\; L_{\rm full}(t).
    \label{eq:unbiased-claim}
\end{equation}
\end{proposition}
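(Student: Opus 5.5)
The plan is to partition the $\binom{N}{2}$ node pairs into three disjoint, deterministic classes: the edges $E$, the wedge non-edges $\calW(G)$, and the remainder $\mathcal R := \bar E\setminus\calW(G)$, with $|\mathcal R| = \binom{N}{2}-m-|\calW(G)|$. Then
\[
L_{\rm full}(t)=\sum_{(i,j)\in E}\ell_{ij}(t)+\sum_{(i,j)\in\calW(G)}\ell_{ij}(t)+\sum_{(i,j)\in\mathcal R}\ell_{ij}(t).
\]
The edge and wedge terms of $\hat L(t)$ in Eq.~\ref{eq:Lhat} coincide with the first two sums identically, since they involve no sampling and carry no rescaling. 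Unbiasedness therefore reduces to showing that the coverage term is an unbiased Horvitz--Thompson estimator of $\sum_{(i,j)\in\mathcal R}\ell_{ij}(t)$.

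For the coverage term, I would rewrite the sum over $S_t$ as a sum over all of $\mathcal R$ weighted by inclusion indicators:
\[
\frac{1}{\lambda_S}\sum_{(i,j)\in S_t}\ell_{ij}(t)=\frac{1}{\lambda_S}\sum_{(i,j)\in\mathcal R}\mathbf 1[(i,j)\in S_t]\,\ell_{ij}(t).
\]
Next I would use that $S_t$ is a uniform sample without replacement of fixed size $|S_t|=\lambda_S|\mathcal R|$ from $\mathcal R$. By exchangeability, every pair in $\mathcal R$ is then included with probability $\Pr[(i,j)\in S_t]=|S_t|/|\mathcal R|=\lambda_S$. Taking $\mathbb E_{S_t}$ and applying linearity of expectation, the factor $\lambda_S$ cancels against $1/\lambda_S$, which yields $\sum_{\mathcal R}\ell_{ij}(t)$ and hence Eq.~\ref{eq:unbiased-claim}. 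If $\lambda_S|\mathcal R|$ is not an integer (the $\lceil\cdot\rceil$ of Eq.~\ref{eq:lambdaS_choice}), I would define $\lambda_S$ as the realized ratio $|S_t|/|\mathcal R|$, exactly as Eq.~\ref{eq:lambda} does, so that the identity is exact.

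The main obstacle is justifying that each $\ell_{ij}(t)$ can be treated as a fixed quantity with respect to the randomness in $S_t$; otherwise linearity does not let the indicator factor out. I would handle this in two steps.

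First, I would condition on the full noisy state $\mathbf e_t$ over all pairs. The forward marginal (Eqs.~\ref{eq:fwd-retain}--\ref{eq:fwd-mask}) acts independently per pair and independently of which pairs are queried, so $\mathbf 1[e_t^{(i,j)}=\M]$ and $r_t^{ij}$ do not depend on $S_t$. The predictions $\hat p_{ij}$ also do not depend on $S_t$: they are functions of $E_t$, $\mathbf d^*$, $\mathbf d_t$ and $\tau^{ij}_t$, all computed from the visible graph (Eqs.~\ref{eq:encoder}--\ref{eq:decoder}), not from the query set.

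Second, I would treat the class weight. For non-edges, $w^{ij}_{\rm pos}(t)=1$ by Eq.~\ref{eq:wpos}, so every summand in the coverage and wedge terms is free of $S_t$. The only residual dependence is through $\rho_t$ in the edge weights, because $n^-_t$ counts masked $S_t$ pairs. I would make this explicit by stating the proposition with $\rho_t$ (equivalently $w^{ij}_{\rm pos}(t)$) held fixed, as a stop-gradient normalizer computed outside the expectation or from the full pool. Under that reading, the loss summed over $E$ is deterministic in $S_t$ and the argument above goes through verbatim.

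Finally, I would remark that taking $\mathbb E_{\mathbf e_t\mid\mathbf e_0}$ on both sides (tower property) extends the identity to the expected per-step objective.
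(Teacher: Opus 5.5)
Your proposal is correct and follows essentially the same route as the paper: the same three-way partition $E\cup\calW(G)\cup(\bar E\setminus\calW(G))$, with the edge and wedge parts matched deterministically and the coverage part handled by the inclusion-indicator (Horvitz--Thompson) argument with $\Pr[(i,j)\in S_t]=\lambda_S$. Your extra step of conditioning on $\mathbf e_t$ and holding $\rho_t$ fixed is a worthwhile addition. The paper's proof silently treats every $\ell_{ij}(t)$ as independent of $S_t$, even though $w^{ij}_{\rm pos}(t)$ for true edges depends on $S_t$ through $n^-_t$; without fixing $\rho_t$, the edge term and $L_{\rm full}(t)$ itself would be random in $S_t$.
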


\begin{proof}
The set of all $\binom{N}{2}$ node pairs decomposes into three disjoint
parts:
\begin{equation}
    \{(i,j)\} \;=\; E \;\cup\; \calW(G) \;\cup\; \bigl(\bar E\setminus\calW(G)\bigr),
    \label{eq:partition}
\end{equation}
so $L_{\rm full}(t)$ splits additively into the corresponding three sums,
and by linearity it suffices to match each term of $\hat L(t)$ in
\eqref{eq:Lhat} to the corresponding part of $L_{\rm full}(t)$ in
expectation.

\textbf{Edge term.} $E$ is fixed given $G$ and does not depend on $S_t$, so
\begin{equation}
    \mathbb E_{S_t}\Bigl[\sum_{E}\ell_{ij}(t)\Bigr]
    \;=\; \sum_{E}\ell_{ij}(t),
\end{equation}
which equals the edge part of $L_{\rm full}(t)$ exactly.

\textbf{Wedge term.} Likewise, $\calW(G)$ is fixed given $G$
(Definition~\ref{def:W}) and does not depend on $S_t$, so
\begin{equation}
    \mathbb E_{S_t}\Bigl[\sum_{\calW(G)}\ell_{ij}(t)\Bigr]
    \;=\; \sum_{\calW(G)}\ell_{ij}(t),
\end{equation}
which equals the wedge part of $L_{\rm full}(t)$ exactly.

\textbf{Coverage term.} $S_t$ is drawn uniformly without replacement from
the population $\bar E\setminus\calW(G)$, so every pair in this population
has identical inclusion probability:
\begin{equation}
    \Pr\bigl[(i,j)\in S_t\bigr] \;=\; \frac{|S_t|}{|\bar E\setminus\calW(G)|} \;=\; \lambda_S,
    \qquad \forall\,(i,j)\in\bar E\setminus\calW(G).
    \label{eq:inclusion-prob}
\end{equation}
Writing the coverage sum via inclusion indicators
$\mathbf 1[(i,j)\in S_t]$ and applying linearity of expectation together
with \eqref{eq:inclusion-prob},
\begin{align}
    \mathbb E_{S_t}\Bigl[\sum_{S_t}\ell_{ij}(t)\Bigr]
    &= \mathbb E_{S_t}\Bigl[\sum_{(i,j)\in\bar E\setminus\calW(G)}
       \mathbf 1[(i,j)\in S_t]\,\ell_{ij}(t)\Bigr] \\
    &= \sum_{(i,j)\in\bar E\setminus\calW(G)}
       \Pr\bigl[(i,j)\in S_t\bigr]\,\ell_{ij}(t)
    \;=\; \lambda_S \!\!\sum_{\bar E\setminus\calW(G)}\!\! \ell_{ij}(t).
\end{align}
Hence the Horvitz--Thompson-corrected coverage term satisfies
\begin{equation}
    \mathbb E_{S_t}\Bigl[\frac{1}{\lambda_S}\sum_{S_t}\ell_{ij}(t)\Bigr]
    \;=\; \frac{1}{\lambda_S}\cdot\lambda_S\!\!\sum_{\bar E\setminus\calW(G)}\!\!\ell_{ij}(t)
    \;=\; \sum_{\bar E\setminus\calW(G)}\ell_{ij}(t),
\end{equation}
which equals the remaining part of $L_{\rm full}(t)$ exactly.

Summing the three matched terms and using the partition
\eqref{eq:partition} gives $\mathbb E_{S_t}[\hat L(t)] = L_{\rm full}(t)$,
proving \eqref{eq:unbiased-claim}.
\end{proof}

\begin{proposition}[Bias of the $\lambda$-Weighted Wedge Term]
\label{prop:bias}
Suppose the wedge term of the ELBO is weighted by $1/\lambda$, i.e.
\begin{equation}
    \hat L_\lambda(t)
    \;=\;
    \sum_{(i,j)\in E}\ell_{ij}(t)
    \;+\;
    \frac{1}{\lambda}\sum_{(i,j)\in\calW(G)}\ell_{ij}(t)
    \;+\;
    \frac{1}{\lambda_S}\sum_{(i,j)\in S_t}\ell_{ij}(t),
    \label{eq:Lhat-lambda}
\end{equation}
with $\lambda = |\calW(G)|/(\binom{N}{2}-m)$ as in Eq.~\ref{eq:lambda}, and
let $L_{\rm full}(t)$ denote the full-graph loss as in
Proposition~\ref{prop:unbiased}. Then, with respect to the randomness in
$S_t$,
\begin{equation}
    \mathbb{E}_{S_t}\bigl[\hat L_\lambda(t)\bigr]
    \;=\;
    L_{\rm full}(t) \;+\; \mathrm{Bias}(t),
    \qquad
    \mathrm{Bias}(t)
    \;=\;
    \frac{1-\lambda}{\lambda}
    \sum_{(i,j)\in\calW(G)}\ell_{ij}(t).
    \label{eq:bias-def}
\end{equation}
\end{proposition}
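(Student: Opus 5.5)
The plan is to reduce everything to Proposition~\ref{prop:unbiased} by comparing $\hat L_\lambda(t)$ with the estimator $\hat L(t)$ of \eqref{eq:Lhat}. The two estimators share the edge term and the Horvitz--Thompson-corrected coverage term verbatim. They differ only in the weight on the wedge sum, which is $1/\lambda$ instead of $1$. So the first step is to write the pathwise identity
\begin{equation}
    \hat L_\lambda(t) \;=\; \hat L(t) \;+\; \Bigl(\frac{1}{\lambda}-1\Bigr)\sum_{(i,j)\in\calW(G)}\ell_{ij}(t),
\end{equation}
which holds for every realization of $S_t$. It is pure algebra: the edge and coverage sums cancel term by term.

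The second step is to take $\mathbb E_{S_t}$ of both sides. By Proposition~\ref{prop:unbiased}, $\mathbb E_{S_t}[\hat L(t)] = L_{\rm full}(t)$. For the extra term, note that $\calW(G)$ is deterministic given $G$ (Definition~\ref{def:W}). Likewise $\lambda = |\calW(G)|/(\binom N2 - m)$ depends only on $G$. Hence the extra term is constant with respect to $S_t$ and passes through the expectation unchanged. Writing $\frac1\lambda - 1 = \frac{1-\lambda}{\lambda}$ gives exactly $\mathrm{Bias}(t)$ in \eqref{eq:bias-def}.

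The only point needing care is that $\ell_{ij}(t)$ for $(i,j)\in\calW(G)$ does not depend on $S_t$. This is the natural reading of Proposition~\ref{prop:unbiased}, where $\ell_{ij}(t)$ is treated as fixed given $t$, $\mathbf e_0$ and the noisy state. Two factors could in principle depend on $S_t$:
\begin{itemize}
\item The class-balance weight $w^{ij}_{\rm pos}(t)$. By Eq.~\ref{eq:wpos}, $\rho_t$ involves $n^-_t$, which counts masked pairs in $S_t$. However, $w^{ij}_{\rm pos}(t) = \rho_t y^{ij} + (1-y^{ij})$, and every wedge pair is a non-edge, so $y^{ij}=0$ and $w^{ij}_{\rm pos}(t)=1$ on $\calW(G)$, independent of $S_t$.
\item The predictions $\hat p_{ij}$. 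The encoder only passes messages over visible edges $E_t$, and the pairs in $S_t$ are non-edges, so they never add edges to $E_t$. Thus $\hat p_{ij}$ for wedge pairs is unaffected by which non-edges were sampled.
\end{itemize}
I will state this conditioning convention explicitly: everything is conditioned on $\mathbf e_t$ and $\theta$, exactly as in Proposition~\ref{prop:unbiased}. I expect this justification, rather than the algebra, to be the main obstacle to a fully rigorous statement.

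A closing remark will note that $\lambda\in(0,1]$ and $\ell_{ij}\ge 0$, so $\mathrm{Bias}(t)\ge 0$, with equality if and only if $\lambda=1$ or every wedge loss vanishes.
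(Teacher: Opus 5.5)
Your argument is correct and is essentially the paper's. The paper likewise takes the edge and coverage terms from Proposition~\ref{prop:unbiased} and splits $\frac{1}{\lambda}\sum_{\calW(G)}\ell_{ij}(t)$ into the matched wedge part plus the excess $(\frac{1}{\lambda}-1)\sum_{\calW(G)}\ell_{ij}(t)$, justified only by $\calW(G)$ being deterministic; your pathwise identity is the same computation packaged differently. Your check that $w^{ij}_{\rm pos}(t)=1$ and $\hat p_{ij}$ on wedge pairs do not depend on $S_t$ spells out what the paper leaves implicit (the $S_t$-dependence of $\rho_t$ does enter the edge term, which both you and the paper inherit from Proposition~\ref{prop:unbiased} rather than resolve).
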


\begin{proof}
The edge and coverage terms of $\hat L_\lambda(t)$ are identical to
$\hat L(t)$ in Proposition~\ref{prop:unbiased}, so their expectations match
the corresponding parts of $L_{\rm full}(t)$ exactly, as shown there. Only
the wedge term differs, by the constant factor $1/\lambda$:
\begin{equation}
    \mathbb E_{S_t}\Bigl[\frac{1}{\lambda}\sum_{\calW(G)}\ell_{ij}(t)\Bigr]
    \;=\;
    \frac{1}{\lambda}\sum_{\calW(G)}\ell_{ij}(t)
    \;=\;
    \sum_{\calW(G)}\ell_{ij}(t)
    \;+\;
    \left(\frac{1}{\lambda}-1\right)\sum_{\calW(G)}\ell_{ij}(t),
\end{equation}
since $\calW(G)$ is deterministic and does not depend on $S_t$. The first
term on the right matches the wedge part of $L_{\rm full}(t)$ exactly; the
second term is the excess, giving
$\mathrm{Bias}(t) = \bigl(\tfrac1\lambda-1\bigr)\sum_{\calW(G)}\ell_{ij}(t)
= \tfrac{1-\lambda}{\lambda}\sum_{\calW(G)}\ell_{ij}(t)$. Summing all three
terms and comparing to $L_{\rm full}(t)$ gives \eqref{eq:bias-def}.
\end{proof}

\begin{remark}[Order of the Bias]
\label{rem:bias-order}
Write $\bar\ell_{\calW}(t) := \frac{1}{|\calW(G)|}\sum_{\calW(G)}\ell_{ij}(t)$
for the average per-pair wedge loss at step $t$ (assuming $\calW(G)\ne
\emptyset$). By definition of $\lambda$,
\begin{equation}
    \frac{1}{\lambda}\,|\calW(G)|
    \;=\;
    \binom{N}{2}-m,
\end{equation}
so the bias in \eqref{eq:bias-def} simplifies exactly to
\begin{equation}
    \mathrm{Bias}(t)
    \;=\;
    (1-\lambda)\Bigl(\binom{N}{2}-m\Bigr)\,\bar\ell_{\calW}(t).
    \label{eq:bias-simplified}
\end{equation}
By Proposition~\ref{prop:coverage}, $\lambda = O(\mu_2/N) \to 0$ as
$N\to\infty$ whenever $\mu_2=o(N)$ (the same regime in which coverage stays
sub-quadratic), so $(1-\lambda)\to 1$ and
\begin{equation}
    \mathrm{Bias}(t) \;=\; \Theta\!\bigl(N^2\bigr)\cdot\bar\ell_{\calW}(t)
    \qquad\text{as } N\to\infty.
    \label{eq:bias-order}
\end{equation}
\end{remark}
This is the \emph{same asymptotic order}, $\Theta(N^2)$, as
$L_{\rm full}(t)$'s own coverage-population part
$\sum_{\bar E\setminus\calW(G)}\ell_{ij}(t)$, which sums over
$\binom N2-m-|\calW(G)| = \Theta(N^2)$ terms. Consequently, the bias is
negligible relative to $L_{\rm full}(t)$ \emph{only if}
$\bar\ell_{\calW}(t)$ is asymptotically smaller than the typical
non-wedge, non-edge loss — a condition not implied by anything already
established, and implausible in practice: wedge pairs are, by construction,
the structurally closest non-edges to becoming edges, so a well-trained
denoising network is expected to assign them \emph{comparable or larger}
loss than generic non-edges, not smaller. Absent such a separate,
justified assumption, $\mathrm{Bias}(t)$ is $\Theta(N^2)$ and therefore
\textbf{not negligible} — it is of the same order as the full-graph loss
itself, not a vanishing correction to it.

\begin{definition}[Class-Mass SNR]
\label{def:snr}
For a training step $t$ and pool weighting $w^{ij}$, define the
\emph{class-mass signal-to-noise ratio} as the ratio of total weighted mass
assigned to true edges versus non-edges within the current masked pool
(Eq.~\eqref{eq:pool_indicator}):
\begin{equation}
    \mathrm{SNR}_w(t)
    \;:=\;
    \frac{\displaystyle\sum_{(i,j)\in\calQ_t} w^{ij}\,e^{(i,j)}_0\,Q_{ij}(t)}
         {\displaystyle\sum_{(i,j)\in\calQ_t} w^{ij}\,\bigl(1-e^{(i,j)}_0\bigr)\,Q_{ij}(t)}.
    \label{eq:snr-def}
\end{equation}
For the unweighted baseline ELBO ($w^{ij}\equiv 1$ for all pairs),
\eqref{eq:snr-def} reduces to $\mathrm{SNR}_1(t) = n^+_t/n^-_t$, using
$n^+_t$, $n^-_t$ as defined in Eq.~\ref{eq:npos_nneg}.
\end{definition}

\begin{proposition}[ Exact SNR Balancing]
\label{prop:snr}
Let $w^{ij}_{\rm pos}(t)$ be defined as in Eq.~\ref{eq:wpos}, and assume
$n^+_t\ge 1$. Then the reweighted pool is exactly class-balanced at every
step:
\begin{equation}
    \mathrm{SNR}_{w_{\rm pos}}(t) \;=\; 1,
    \label{eq:snr-balanced}
\end{equation}
and the improvement over the unweighted baseline is exactly
\begin{equation}
    \frac{\mathrm{SNR}_{w_{\rm pos}}(t)}{\mathrm{SNR}_1(t)}
    \;=\;
    \frac{n^-_t}{n^+_t}
    \;=\;
    w^+_b(t).
    \label{eq:snr-gain}
\end{equation}
\end{proposition}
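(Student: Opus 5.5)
The plan is to compute the numerator and denominator of Definition~\ref{def:snr} directly under the weighting $w^{ij}=w^{ij}_{\rm pos}(t)$ of Eq.~\ref{eq:wpos}, and to read $Q_{ij}(t)$ as the pool indicator $\mathbf 1^{ij}_{\rm pool}(t)$ of Eq.~\ref{eq:pool_indicator}. This identification is what makes the unweighted case reduce to $n^+_t/n^-_t$ as stated after the definition.

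\textbf{Numerator and denominator.} For the numerator, only pairs with $e^{(i,j)}_0=y^{ij}=1$ contribute. On those pairs $w^{ij}_{\rm pos}(t)=\rho_t$, so the numerator is $\rho_t\sum_{(i,j)\in\calQ_t} y^{ij}\mathbf 1^{ij}_{\rm pool}(t)$. Since $S_t$ contains only non-edges ($y^{ij}=0$ on $S_t$), this sum over $\calQ_t$ equals the sum over $\calQ$, which is $n^+_t$ by Eq.~\ref{eq:npos_nneg}. So the numerator is $\rho_t n^+_t$. For the denominator, only pairs with $y^{ij}=0$ contribute, and there $w^{ij}_{\rm pos}(t)=1$. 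The denominator is therefore exactly $n^-_t$.

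\textbf{The balanced ratio.} Under the assumption $n^+_t\ge1$ we have $\max(n^+_t,1)=n^+_t$, so $\rho_t=n^-_t/n^+_t$. Then
\begin{equation*}
\mathrm{SNR}_{w_{\rm pos}}(t)=\frac{n^-_t}{n^+_t}\cdot\frac{n^+_t}{n^-_t}=1,
\end{equation*}
which is Eq.~\ref{eq:snr-balanced}.

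\textbf{The improvement factor.} Dividing by $\mathrm{SNR}_1(t)=n^+_t/n^-_t$ gives $n^-_t/n^+_t=\rho_t$. This is what the paper denotes $w^+_b(t)$, the positive-class weight, giving Eq.~\ref{eq:snr-gain}.

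\textbf{Main obstacle.} The algebra is trivial. The delicate points are bookkeeping:
\begin{itemize}
\item[(i)] We need $n^-_t\ge1$ so that both ratios are defined. The proof will state this as implicit in the SNR being finite; otherwise both sides are read as $\infty$ or $0/0$.
\item[(ii)] We must confirm that $w^+_b(t)$ is notation for $\rho_t$ in Eq.~\ref{eq:wpos}.
\item[(iii)] We must ensure the same pool $\calQ_t$ and the same indicator are used in both the weighted and unweighted SNR. The ratio in Eq.~\ref{eq:snr-gain} compares the weighting only, not different pools.
\end{itemize}
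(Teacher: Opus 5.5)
Your proposal is correct and follows the paper's proof essentially line by line. You substitute $w^{ij}_{\rm pos}(t)$ into Definition~\ref{def:snr}, obtain numerator $\rho_t n^+_t = n^-_t$ and denominator $n^-_t$, and divide by $\mathrm{SNR}_1(t)=n^+_t/n^-_t$. Your two bookkeeping remarks are both points the paper's proof uses without stating, and making them explicit strengthens the argument: the sum over $\calQ_t$ collapses to $n^+_t$ over $\calQ$ because $S_t$ contains only non-edges, and $n^-_t\ge 1$ is needed to avoid $0/0$.
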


\begin{proof}
By Eq.~\ref{eq:wpos}, $w^{ij}_{\rm pos}(t) = w^+_b(t)$ when
$e^{(i,j)}_0=1$ and $w^{ij}_{\rm pos}(t)=1$ when $e^{(i,j)}_0=0$.
Substituting into the numerator of \eqref{eq:snr-def}:
\begin{equation}
    \sum_{\calQ_t} w^{ij}_{\rm pos}(t)\,e^{(i,j)}_0\,Q_{ij}(t)
    \;=\;
    w^+_b(t)\sum_{\calQ_t} e^{(i,j)}_0\,Q_{ij}(t)
    \;=\;
    w^+_b(t)\,n^+_t,
\end{equation}
using the definition of $n^+_t$ (Eq.~\ref{eq:npos_nneg}). Substituting the
definition $w^+_b(t) = n^-_t/\max(n^+_t,1) = n^-_t/n^+_t$ (since
$n^+_t\ge1$ by assumption) gives
\begin{equation}
    w^+_b(t)\,n^+_t \;=\; \frac{n^-_t}{n^+_t}\cdot n^+_t \;=\; n^-_t.
\end{equation}
For the denominator, since $w^{ij}_{\rm pos}(t)=1$ whenever $e^{(i,j)}_0=0$,
\begin{equation}
    \sum_{\calQ_t} w^{ij}_{\rm pos}(t)\bigl(1-e^{(i,j)}_0\bigr)\,Q_{ij}(t)
    \;=\;
    \sum_{\calQ_t}\bigl(1-e^{(i,j)}_0\bigr)\,Q_{ij}(t)
    \;=\;
    n^-_t.
\end{equation}
Hence $\mathrm{SNR}_{w_{\rm pos}}(t) = n^-_t/n^-_t = 1$, proving
\eqref{eq:snr-balanced}. Since $\mathrm{SNR}_1(t)=n^+_t/n^-_t$
(Definition~\ref{def:snr}), the ratio
$\mathrm{SNR}_{w_{\rm pos}}(t)/\mathrm{SNR}_1(t) = n^-_t/n^+_t = w^+_b(t)$
follows immediately, proving \eqref{eq:snr-gain}.
\end{proof}

\begin{remark}[Gain Relative to a Dense-Pool Baseline]
\label{rem:snr-dense}
Let $n^{-,\mathrm{dense}}_t$ denote the negative-class pool mass of a dense
baseline ELBO that pools over all $\binom{N}{2}$ pairs rather than
$\calQ_t$, under an identical per-pair masking rule. Since $\calQ_t\subset
\{1,\dots,\binom N2\}$ and both poolings apply the same masking indicator
$Q_{ij}(t)$ to their respective (larger vs.\ smaller) non-edge populations,
\begin{equation}
    n^{-,\mathrm{dense}}_t \;=\; \Theta\!\left(\binom{N}{2}-m\right),
    \qquad
    n^-_t \;=\; \Theta\bigl(|\calQ_t|-m\bigr) \;=\; \Theta(N\mu_2)
\end{equation}
by Proposition~\ref{prop:coverage}, at any fixed masking fraction. Since
$n^+_t=\Theta(m)$ under either pooling (all true edges are candidates in
both), the dense baseline's unweighted SNR is
\begin{equation}
    \mathrm{SNR}_1^{\mathrm{dense}}(t)
    \;=\; \Theta\!\left(\frac{m}{N^2}\right),
\end{equation}
while our reweighted SNR is exactly $1$
(Proposition~\ref{prop:snr}). The total SNR gain of the reweighted,
restricted-pool objective over an unweighted dense-pool ELBO is therefore
\begin{equation}
    \frac{\mathrm{SNR}_{w_{\rm pos}}(t)}{\mathrm{SNR}_1^{\mathrm{dense}}(t)}
    \;=\;
    \Theta\!\left(\frac{N^2}{m}\right)
    \;=\;
    \Theta\!\left(\frac{1}{\rho}\right),
    \qquad \rho := \frac{m}{\binom N2}
\end{equation}
the inverse graph density — i.e., the sparser the graph, the larger the
SNR gain, which is precisely the regime motivating this work.
\end{remark}

\subsection{Practical analysis}\label{app:practical}
\paragraph{Degree-aware masking behaves as designed.}
Figures~\ref{fig:Cora-degree-aware}-~\ref{fig:Pol-degree-aware} validate the degree-aware retention schedule
$\beta^{ij}=\min(\beta_{\max},(d^{i*}d^{j*}/\bar d^{2})^{\gamma/2})$ against
a uniform baseline ($\beta^{ij}\equiv1$), using closed-form expectations
over each dataset's true graph structure -- no model training or sampling
is involved, so every curve here is an exact property of the schedule
itself, not an empirical estimate.

The two schedules differ in one specific way, worth stating precisely:
under uniform masking, the retention curve $\bar\alpha_t^{\beta^{ij}} =
\bar\alpha_t$ (the probability an edge's true state has not yet been
replaced by the mask token) is identical for every edge, regardless of
what it connects. Degree-aware masking removes this blindness: because
$\beta^{ij}$ depends on $(d^{i*},d^{j*})$, the \emph{rate} of noise
injection is itself informed by an edge's structural role, giving the
denoiser an explicit curriculum built into the schedule rather than one it
must discover on its own.

To make this concrete, we call a node a \emph{hub} if its degree falls in
the top quartile of the dataset's degree distribution, and classify an
edge as hub--hub, hub--leaf, or leaf--leaf by how many endpoints qualify
("leaf" here means any non-hub node -- the bottom three quartiles -- not a
degree-1 node specifically). Panels (a)--(c) show the intended effect is
measurable, not just designed: $\beta^{ij}$ spans nearly an order of
magnitude across edge types (Cora: hub--hub mean $2.14$ vs.\ leaf--leaf
mean $0.68$; Polblogs, the most degree-heterogeneous dataset we test,
$d_{\max}/\bar d\approx12.8$: hub--hub mean $2.68$ vs.\ leaf--leaf mean
$0.64$). This separation carries through to the retention curves in panel
(b), which visibly split by percentile instead of collapsing onto one
line, and resolves into a genuine staged curriculum in panel (c): by the
schedule's midpoint, hub--hub edges have already lost most of their signal
(survival $0.18$ for Polblogs, $0.27$ for Cora), while leaf--leaf edges
remain mostly intact (survival $0.64$--$0.67$) -- against a flat $0.517$
for every edge type under uniform masking, which cannot distinguish them.

    \begin{figure}[t]
        \centering
        \includegraphics[width=\linewidth]{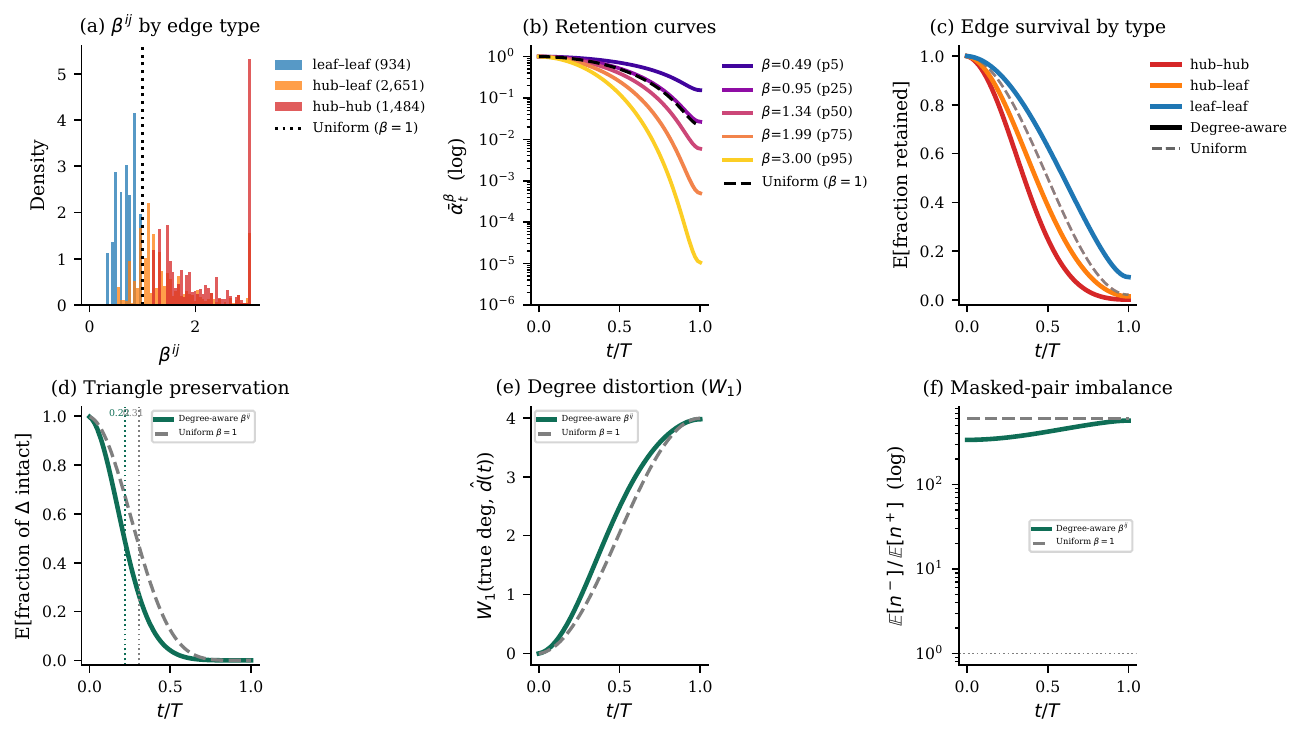}
        \caption{Cora: degree-aware masking analysis}\label{fig:Cora-degree-aware}
    \end{figure}\hfill
    \begin{figure}[t]
        \centering
        \includegraphics[width=\linewidth]{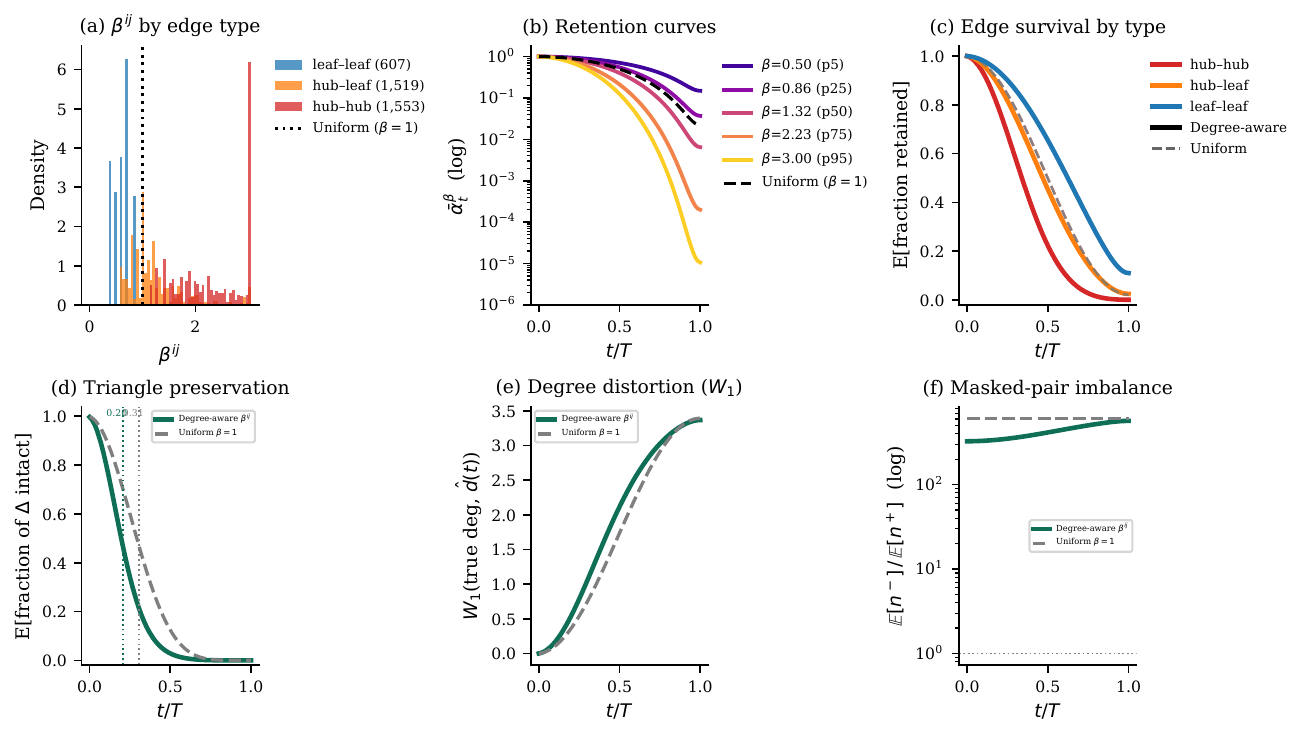}
        \caption{Citeseer: degree-aware masking analysis}\label{fig:Citeseer-degree-aware}
    \end{figure}\hfill
    \begin{figure}[t]
        \centering
        \includegraphics[width=\linewidth]{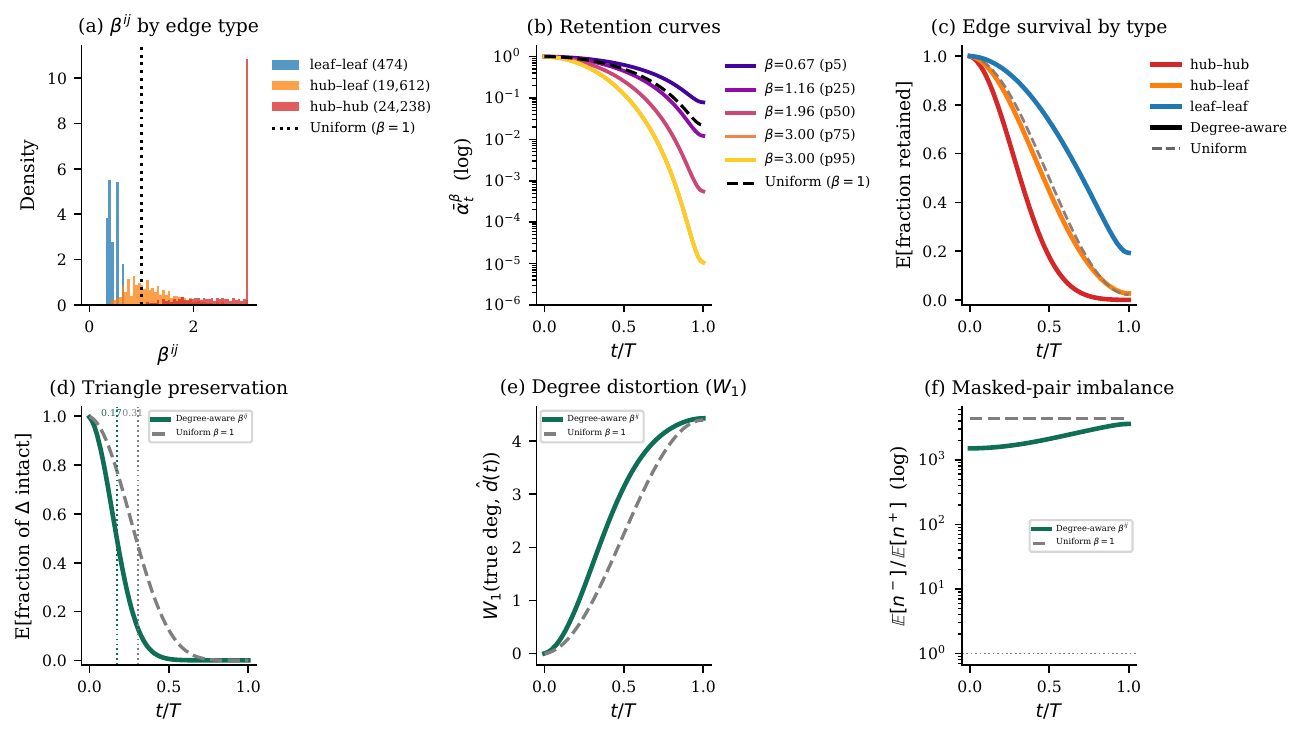}
        \caption{Pubmed: degree-aware masking analysis}\label{fig:Pubmed-degree-aware}
    \end{figure}

    \begin{figure}[t]
        \centering
        \includegraphics[width=\linewidth]{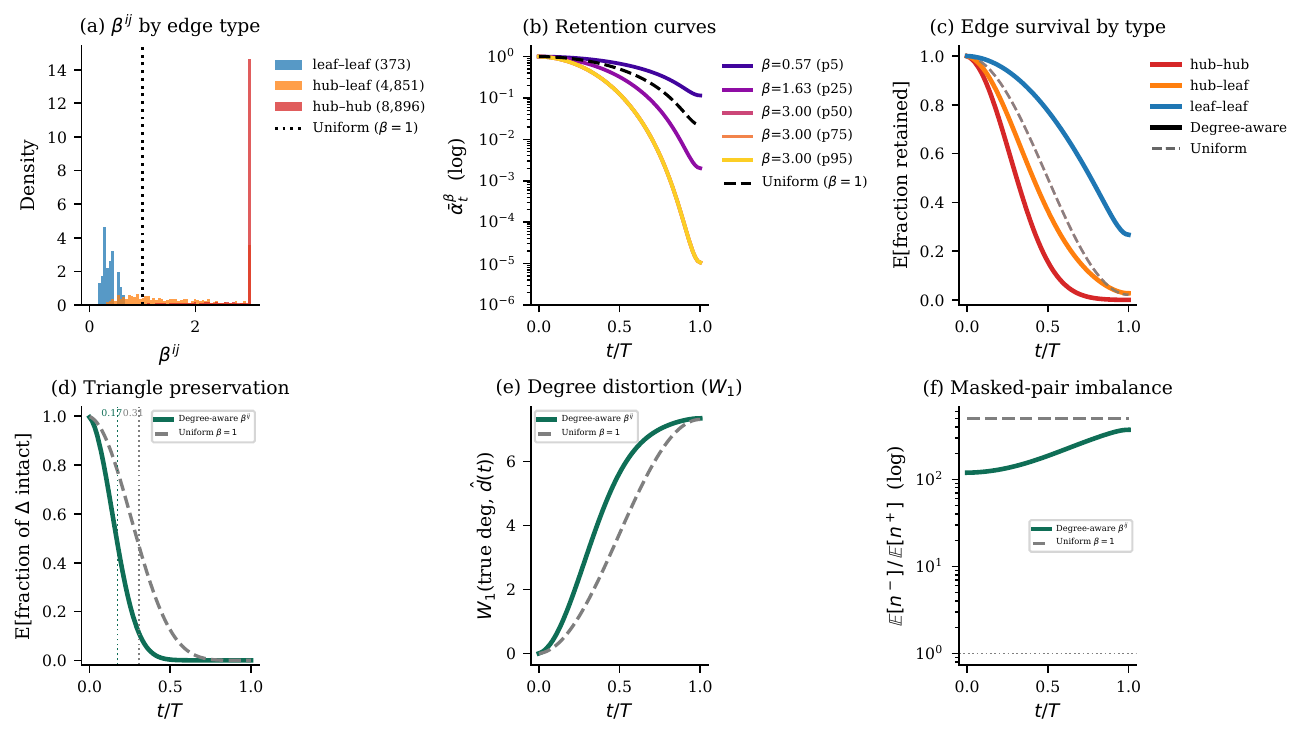}
        \caption{Alpha: degree-aware masking analysis}\label{fig:Alpha-degree-aware}
    \end{figure}\hfill
    \begin{figure}[t]
        \centering
        \includegraphics[width=\linewidth]{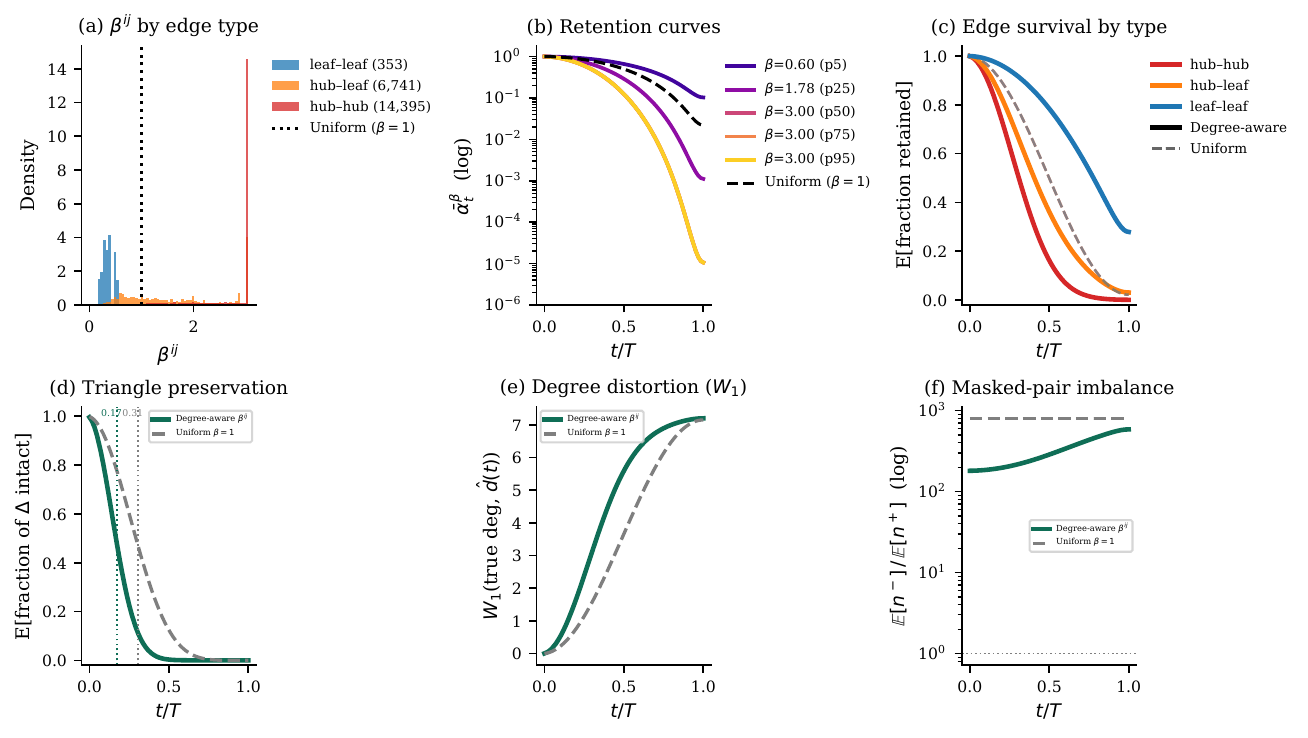}
        \caption{OTC: degree-aware masking analysis}\label{fig:OTC-degree-aware}
    \end{figure}\hfill
    \begin{figure}[t]
        \centering
        \includegraphics[width=\linewidth]{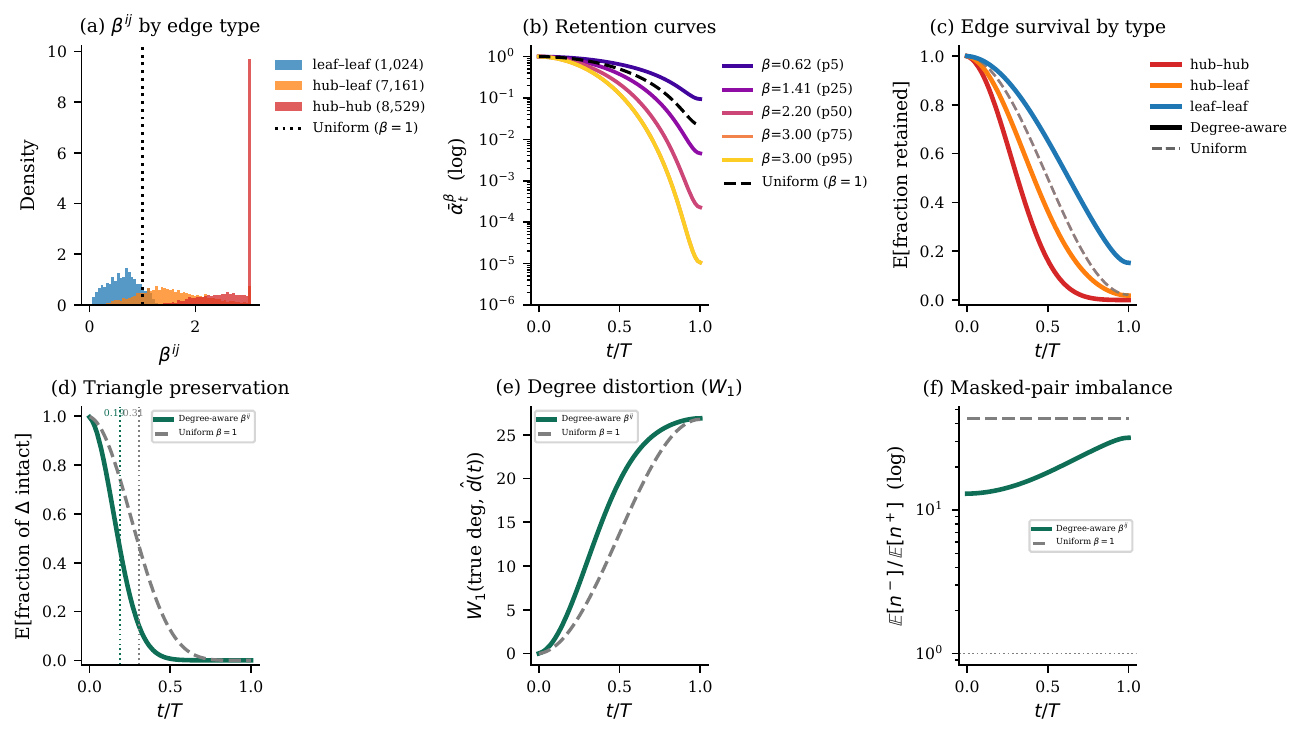}
        \caption{Polblogs: degree-aware masking analysis}\label{fig:Pol-degree-aware}
    \end{figure}

\paragraph{The trade-off is real.}
Front-loading hub structure is not free. Panel (d) measures the expected
fraction of each graph's true triangles whose three constituent edges are
\emph{all} still unmasked at time $t$ -- how quickly triangle-completing
structure is consumed by the forward process, and the component of graph
structure most relevant to clustering coefficient, $CC=3|\triangle|/
|\text{wedges}|$ (panel (d) tracks the numerator's survival only, since it
does not separately model wedge survival).

The \emph{triangle-intact half-life} -- the value of $t/T$ at which this
fraction first drops below one half -- falls from $0.31$ under uniform
masking (identical across datasets by construction) to $0.19$--$0.22$
under degree-aware masking, with the intact fraction at the midpoint
$3$--$16\times$ lower, worst for Polblogs. Since hub-adjacent edges
dominate real triangles, this is the same front-loading from panel (c)
acting on triangles specifically.
This early half-life offers a plausible explanation for a separate finding
in our step-reduction ablation (\S\ref{appx:step_reduction}): sampling
with only 50\% of the trained step count produced almost no quality loss.
Because $\bar\alpha_t^{\beta^{ij}}$ is a fixed function of $t/T$
independent of how many discrete steps are used to traverse it,
triangle-relevant structure settles into its final form at a consistent
point along the schedule regardless of step count. If that point falls
within the region step-reduction leaves densely sampled, this could
explain why coarser step counts preserve quality; we flag this as a
hypothesis connecting two separate observations, not a claim we test
directly here.

Panel (e) shows a related cost in the degree distribution itself. At each
$t$, we compute each node's \emph{expected partially-revealed degree},
$\tilde d^i(t)=\sum_{j\in\mathcal N(i)}\bar\alpha_t^{\beta^{ij}}$ under
degree-aware masking (equivalently $d^{i*}\bar\alpha_t$ under uniform) --
the number of true edges at $i$ still unmasked, in expectation, at that
point in the schedule -- and compare the resulting distribution across all
nodes to the true degree distribution via Wasserstein-1 distance. We use a
distributional distance rather than a per-node error because what matters
is whether the \emph{shape} of the degree distribution is preserved at
that checkpoint, not whether any single node's partial degree matches its
final value exactly.

By this measure, degree-aware masking is consistently \emph{more}
distorted than uniform at the midpoint ($1.23\times$ for Cora, $1.46\times$
for Polblogs): because hub and leaf edges resolve on different timescales,
a node's partially-revealed degree becomes a skewed mixture of
early-resolved leaf edges and late-resolved hub edges, rather than the
simple rescaling of true degree that uniform masking's single shared rate
produces by default.

\paragraph{Reweighting is what makes this trade-off pay off on real data.}
Panel (f) is where degree-aware masking earns its keep in practice. At
each $t$, a true edge not yet revealed contributes to $n^+_t$ (masked
edges); a true non-edge not yet revealed contributes to $n^-_t$ (masked
non-edges); we report their expected ratio $\mathbb E[n^-_t]/
\mathbb E[n^+_t]$ -- the same quantity the class-balance weight
$w^+_b(t)$ (Eq.~\ref{eq:wpos}) is designed to correct. Note that $n^+_t,
n^-_t$ here are computed as closed-form expectations over the \emph{full}
graph, not over the restricted training-time candidate pool
$\calQ_t=\calQ\cup S_t$; the two are
related but not identical, since this analysis is model-free and makes no
reference to the training-time restriction to $\calQ_t$.

Graphs at the scale we target are extremely sparse (Cora
$\rho\approx0.0016$; Pubmed $\rho\approx0.0002$), so even a moderate
masking rate leaves an overwhelming excess of masked non-edges relative to
masked edges. Degree-aware masking roughly halves this imbalance relative
to uniform at every dataset we test: $0.69\times$ for Cora, $0.67\times$
for Citeseer, and $0.41\times$ for Polblogs -- the reduction growing with
degree heterogeneity, the same way the panel (e) cost does. The two
mechanisms are therefore not independent design choices: degree-aware
masking does not eliminate the sparsity-driven imbalance inherent to real
graph data, but it substantially lightens the correction $w^+_b(t)$ has to
make, letting the two work together rather than asking reweighting alone
to compensate for a uniform schedule's worst-case imbalance.

\paragraph{Why this matters.} Giving
the schedule a sense of which edges matter structurally lets it teach the
model something a uniform schedule simply cannot: hub connectivity first,
periphery later. That choice costs something -- triangles and degree
shape both take a hit early in the trajectory, and that cost shows up
later as a real, measurable gap in clustering coefficient, even in our
best runs. But it also buys something that matters more for training on
graphs this sparse: it substantially reduces the pool-level class
imbalance that the reweighting term would otherwise have to correct on
its own, doing part of that work for free before reweighting is even
applied. Neither mechanism was designed to fully cover for the other, but
together they leave the model in a measurably better-balanced training
pool than either alone would produce -- and that pattern held everywhere
we looked, scaling with exactly the property, degree heterogeneity, that a
degree-\emph{aware} schedule ought to respond to.

\section{Extended Results}\label{sec:a_extend_result}
We provide a comprehensive evaluation across a diverse range of datasets,
from small multi-graph collections to large single graphs, spanning
citation and trust networks, and report sampling time alongside generation
quality.

Figures~\ref{fig:ap_node_degree_vilon} and~\ref{fig:ap_node_degreeccd} show
the degree distribution of generated nodes for the large single-graph
datasets. Our model closely tracks the training distribution across most
of the degree range -- the violin plots in
Figure~\ref{fig:ap_node_degree_vilon} show node density per degree value --
but, as Figure~\ref{fig:ap_node_degreeccd} shows, it underrepresents the
long-tail (high-degree) region on our larger datasets. Despite this,
Table~\ref{tab:data_status} shows our model achieves the best clustering
coefficient recovery among all methods compared, indicating strong local
structural fidelity even where the global degree tail is harder to match.

\subsection{Results on Small Graphs}\label{appx:small_res}
Although our method targets large-graph generation, Table~\ref{tab:small_metrics}
shows it achieves performance comparable to methods specifically designed
for and limited to small-scale graphs, most of which do not scale to the
large-graph setting that is our primary focus.

On the ZINC molecular dataset, our model is competitive with dedicated
molecule generation methods. As shown in Table~\ref{tab:zinc_metrics}, it
achieves the best performance on all reported metrics except orbits and
validity. We note that validity (\S~\ref{sec:validity}) here refers to \emph{structural} validity
-- well-formedness of the generated graph -- rather than chemical validity
(e.g., valence-consistent bonding); our model's strong performance on other
metrics should be read with this scope in mind.

\begin{figure*}[t!]
    \centering

    \begin{subfigure}[t]{0.3\textwidth}
        \centering
        \includegraphics[width=\textwidth]{cora_degree_distribution_violin.pdf}
        
    \end{subfigure}
    \hfill
    \begin{subfigure}[t]{0.3\textwidth}
        \centering
        \includegraphics[width=\textwidth]{citeseer_degree_distribution_violin.pdf}
        
    \end{subfigure}
    \hfill
    \begin{subfigure}[t]{0.3\textwidth}
        \centering
        \includegraphics[width=\textwidth]{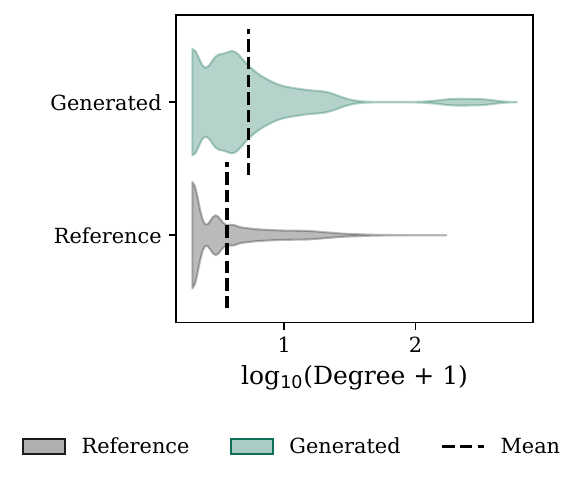}
        
    \end{subfigure}
    \hfill
    \begin{subfigure}[t]{0.3\textwidth}
        \centering
        \includegraphics[width=\textwidth]{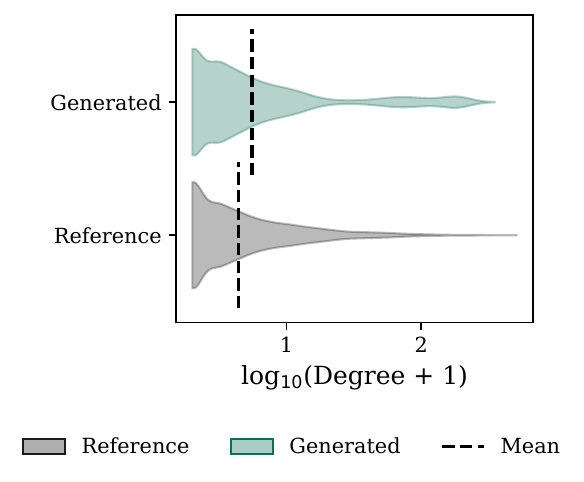}
        
    \end{subfigure}
    \hfill
    \begin{subfigure}[t]{0.3\textwidth}
        \centering
        \includegraphics[width=\textwidth]{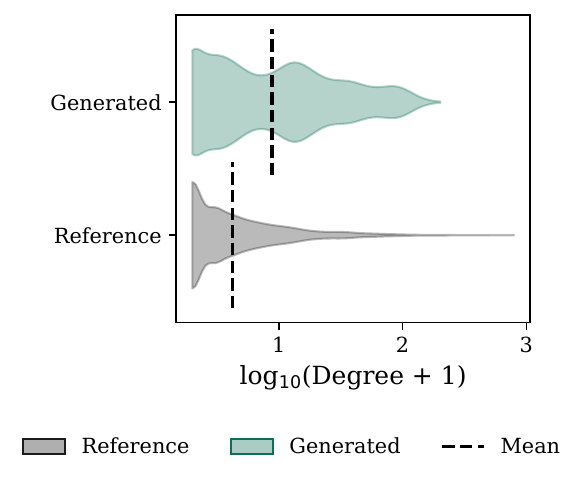}
        
    \end{subfigure}
    \hfill
    \begin{subfigure}[t]{0.3\textwidth}
        \centering
        \includegraphics[width=\textwidth]{pol_degree_distribution_violin.pdf}
        \end{subfigure}
    \caption{\small Degree distribution comparison across six datasets; plot width indicates node density.}
    \label{fig:ap_node_degree_vilon}
\end{figure*}

\begin{figure*}[t!]
    \centering

    \begin{subfigure}[t]{0.3\textwidth}
        \centering
        \includegraphics[width=\textwidth]{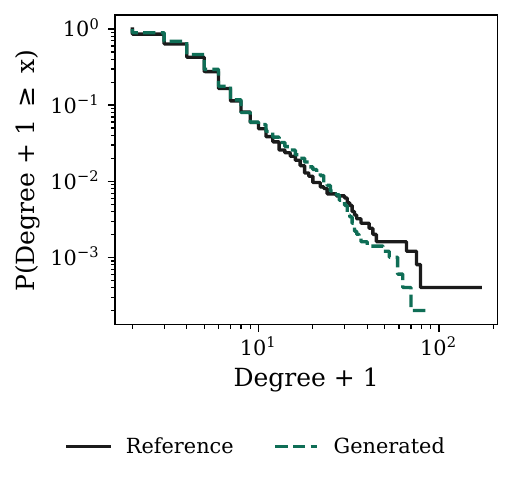}
        
    \end{subfigure}
    \hfill
    \begin{subfigure}[t]{0.3\textwidth}
        \centering
        \includegraphics[width=\textwidth]{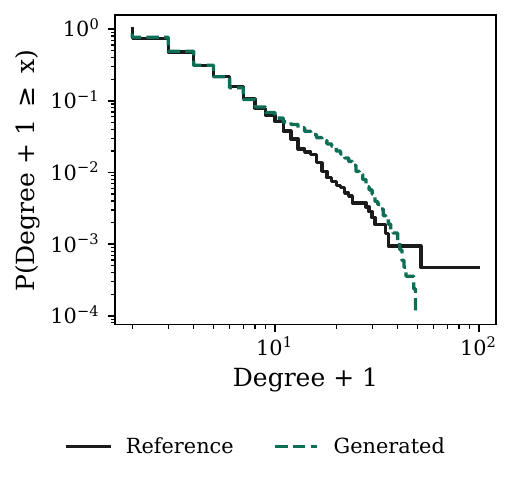}
        
    \end{subfigure}
    \hfill
    \begin{subfigure}[t]{0.3\textwidth}
        \centering
        \includegraphics[width=\textwidth]{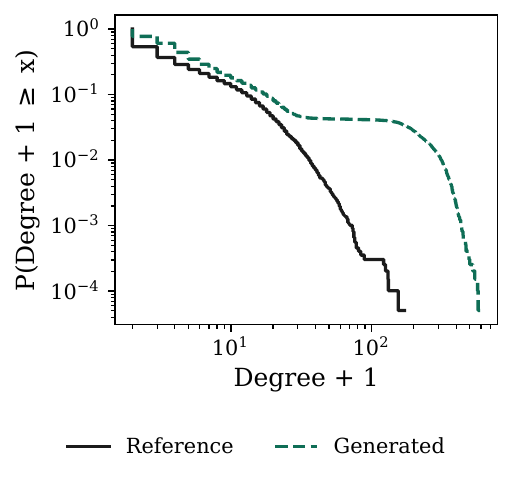}
        
    \end{subfigure}
    \hfill
    \begin{subfigure}[t]{0.3\textwidth}
        \centering
        \includegraphics[width=\textwidth]{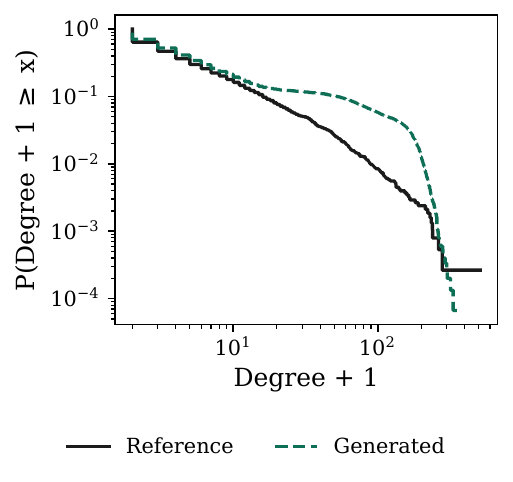}
        
    \end{subfigure}
    \hfill
    \begin{subfigure}[t]{0.3\textwidth}
        \centering
        \includegraphics[width=\textwidth]{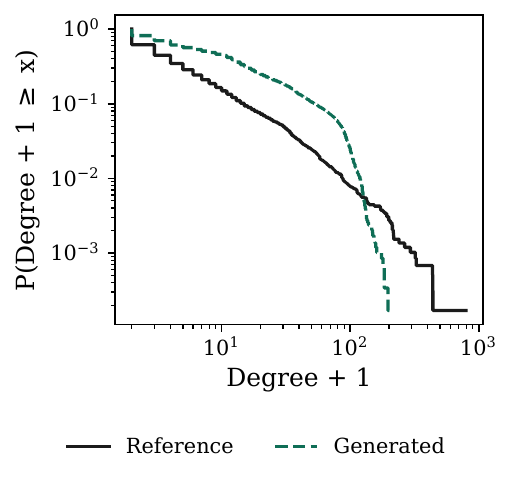}
        
    \end{subfigure}
    \hfill
    \begin{subfigure}[t]{0.3\textwidth}
        \centering
        \includegraphics[width=\textwidth]{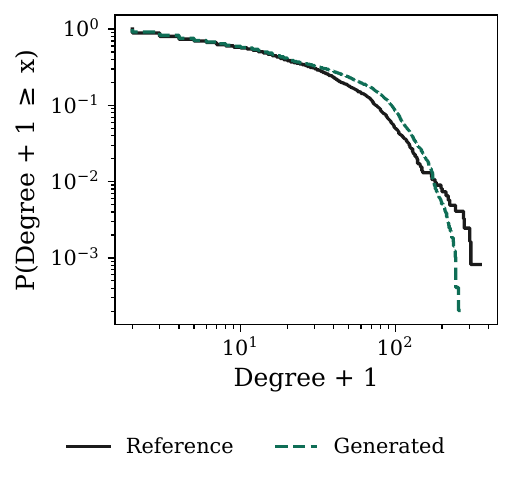}
        \end{subfigure}
    \caption{\small Degree distribution comparison across six datasets; plot width indicates node density.}
    \label{fig:ap_node_degreeccd}
\end{figure*}

\subsection{Empirical sampling time}
As discussed in Appendix~\ref{appx:Analysis}, our model achieves
sub-quadratic \emph{training} complexity, but sampling remains $O(N^2)$
per graph, the same asymptotic order as the diffusion baselines we
compare against. Here we examine whether this translates into a practical
bottleneck. We measure sampling (generation) time across graphs of varying
size and compare it to other methods. As shown in
Figure~\ref{fig:time_sampling}, our model generates graphs with more than
10k nodes in under 110 seconds, while EDGE and GraphMaker require
approximately 200 seconds at the same graph size. ARROW-Diff achieves
comparable speed to our model at this scale, but this comes at a different
cost: it performs multiple rounds of random-walk sampling per graph, and,
as shown in Table~\ref{tab:data_status}, produces substantially higher
edge overlap (EO) than any other method we evaluate -- a strong indicator
of edge memorization rather than genuine structural generalization. Our
model achieves comparable sampling time without this trade-off, generating
graphs with low edge overlap alongside strong recovery of other structural
statistics.

\begin{figure}[H]
    \centering
    \includegraphics[width=0.7\textwidth]{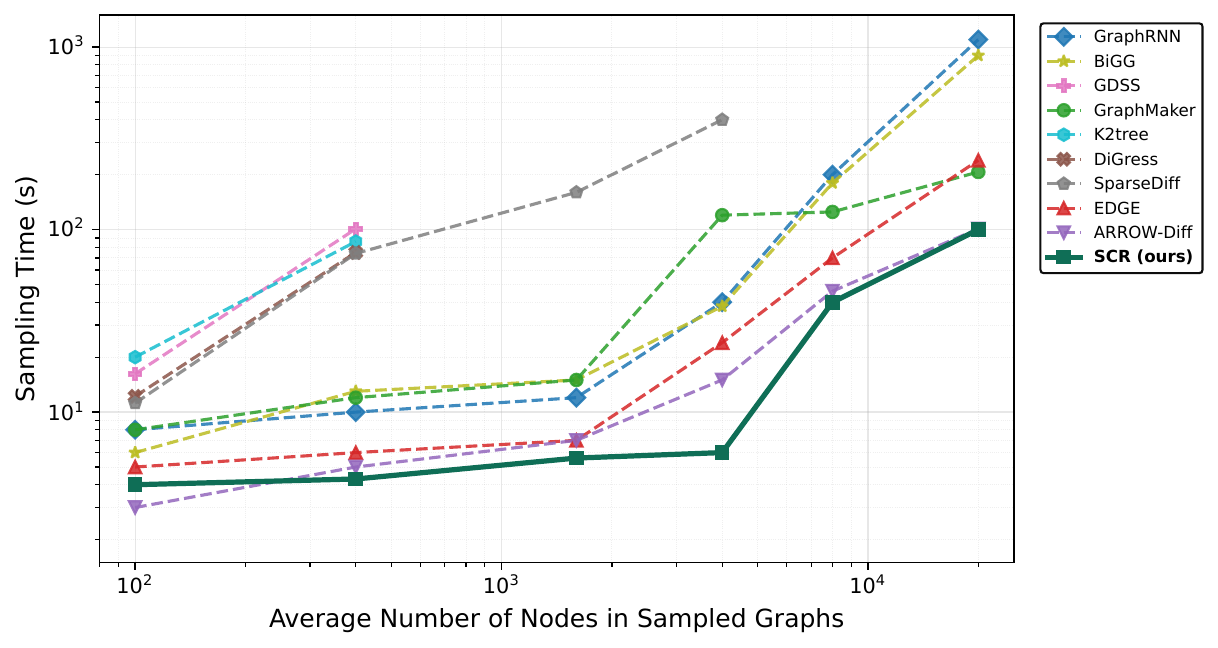}
   \caption{\small Sampling time across various graph sizes, averaged over 16 runs per model under identical conditions. }\label{fig:time_sampling}
    
\end{figure}

\subsection{Efficiency of sampling: reduced diffusion steps}
\label{appx:step_reduction}

\paragraph{Motivation.} While our method achieves sub-quadratic training
cost, sampling remains $O(N^2)$ per graph, the same asymptotic order as
the dense baselines we compare against. For a method whose central goal is
scaling to large graphs, this asymmetry is a real practical concern: the
training-time savings are of limited use if generating a single large
graph still takes disproportionately long. Reducing the number of reverse
diffusion steps is the most direct lever available to lower this per-graph
cost without further architectural changes, so it is important to
establish how much quality is actually lost when doing so -- and whether
the resulting time savings are worth the trade-off in practice.

\begin{figure}
    \centering
    \includegraphics[width=\linewidth]{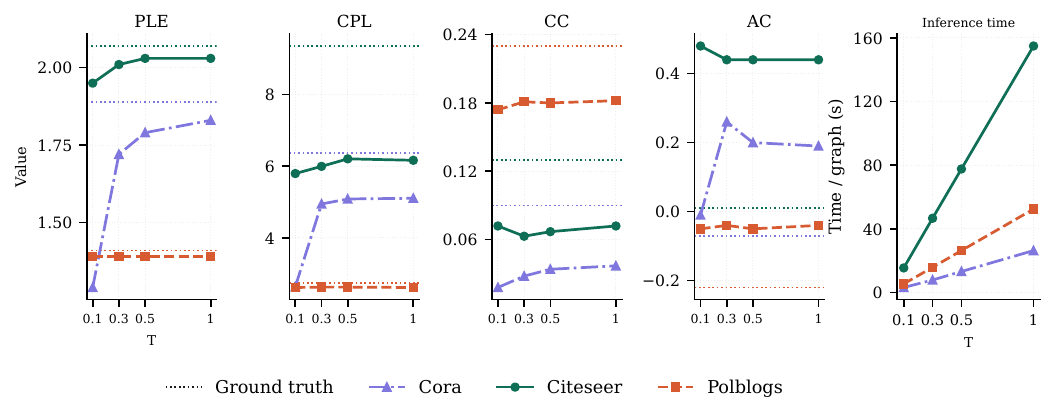}
    \caption{ Structural statistics of generated graphs vs. inference-time diffusion steps.}
    \label{fig:inference_time}
\end{figure}
Using a single model trained with $T_{\rm train}$ steps, we evaluate
generation quality when sampling with $T\in\{10\%,30\%,50\%,100\%\}$ of
$T_{\rm train}$ reverse steps, skipping directly between the sampled $t$
values rather than truncating a prefix.

\paragraph{Why step-skipping is exact.} This introduces no approximation in
the reverse update rule itself: $r_{t\to t_{\rm prev}}^{ij}$
(Eq.~\ref{eq:unmask}) is derived for arbitrary $t_{\rm prev}\le t$, not only
single-step transitions. What \emph{can} change at coarser $T$ is the
distribution of intermediate states $G_t$ the encoder is queried on --
fewer, more widely spaced checkpoints than the trajectory seen during
training. This experiment therefore tests the model's robustness to that
intermediate-state shift, not an error in the sampling procedure.

\paragraph{Results.} We illustrate the pattern using Cora, Citeseer, and
Polblogs as a representative example (Figure~\ref{fig:inference_time} and
Table~\ref{tab:step_reduction_grouped}); the same qualitative behavior is
seen for the other datasets and settings we tested. Sampling time
decreases roughly in proportion to $T$, while generation quality --
measured via power-law exponent (PLE), characteristic path length (CPL),
and clustering coefficient (CC) -- is largely preserved down to
$T{\approx}50\%$. Degradation becomes pronounced only at $T{=}10\%$, and
is more severe for datasets trained with a smaller $T_{\rm train}$ (e.g.\
Cora, $T_{\rm train}{=}64$) than for those trained with a larger one
(Citeseer and Polblogs, $T_{\rm train}{=}512$), even though both are being
reduced to the same \emph{fraction}.

\paragraph{Outcome: reduced inference steps and degree-aware masking.}
Across all six ablation datasets, truncating the number of reverse
sampling steps to a fraction of $T$ preserves generation quality
substantially better under the degree-adaptive schedule $\beta^{ij}$ than
under uniform masking ($\beta^{ij}=1$). Since
$\beta^{ij}=\min(\beta_{\max},(d^{i*}d^{j*}/\bar d^2)^{\gamma/2})$
increases with the degree product, $\bar\alpha_t^{\beta^{ij}}$ decays
\emph{faster} for hub--hub pairs, so they cross any fixed retention
threshold at \emph{smaller} $t$ in the forward process and are
consequently the pairs masked \emph{longest} -- and therefore resolved
\emph{last} -- in the reverse chain, while leaf--leaf pairs
($\beta^{ij}\ll1$) are masked latest in the forward direction and
resolved first in reverse. This extends the forward-process analysis of
\S~\ref{app:practical} to the reverse sampling
trajectory, and the two are consistent: the same pairs shown there to
survive longest in the forward direction are shown here to be resolved
last in reverse. We confirm this directly with a Monte Carlo simulation
of the exact per-edge unmask hazard used at sampling time,
$r^{ij}_{t\to t_{\mathrm{prev}}} = (\bar\alpha_{t_{\mathrm{prev}}}^{\beta^{ij}} -
\bar\alpha_t^{\beta^{ij}})/(1-\bar\alpha_t^{\beta^{ij}})$, run over the true
$T{-}1,\dots,0$ reverse step sequence with 50{,}000 independent trials
per edge class. Leaf--leaf edges (5th percentile of each dataset's true
per-edge $\beta^{ij}$ distribution) resolve at $44.7$--$49.0\%$ of the
way through the reverse trajectory on average; hub--hub edges (95th
percentile) resolve at $50.6$--$57.7\%$ -- later in every one of the six
datasets (Cora, CiteSeer, PubMed, Polblogs, Bitcoin Alpha, Bitcoin OTC),
with no exceptions.

What reduced-step truncation actually preserves, then, is not privileged
attention to hubs, but the fact that degree-aware masking is the only one
of the schedules considered that assigns different edge classes to
\emph{different, degree-correlated} points in the trajectory at all. A
$K$-step, uniformly-spaced skip schedule over $[0,T-1]$ always includes
points near both $t\approx T$ and $t\approx 0$ regardless of $K$; under
degree-aware masking this means even a heavily truncated schedule still
separately captures an early, mostly-leaf-resolved scaffold and a late,
hub-correcting refinement. Under uniform masking every edge shares the
same marginal $\bar\alpha_t$, so there is no such temporal separation for
truncation to exploit -- coarsening the step count degrades every edge
class identically rather than trading resolution on a subset that has
already received comparatively more of the trajectory to condition on. We
present this as the most parsimonious explanation consistent with the
confirmed reveal-order data, not as an independently verified mechanism;
isolating it from competing explanations would require comparing
generation quality broken down by edge degree class at fixed truncated
$K$, which we leave to future work.

The strength of this effect is also dataset-dependent in a way that should
be stated explicitly rather than averaged away. The six datasets split
into two regimes governed by how much headroom $\beta_{\max}$ leaves above
the typical degree product: Cora, CiteSeer, and PubMed (all
$\beta_{\max}=1.5$, $41$--$63\%$ of edges at the cap) show a consistent
$10.2$--$13.0$ percentage-point gap between leaf and hub reveal time;
Polblogs, Bitcoin Alpha, and Bitcoin OTC show only $3.9$--$4.7$ points.
For the two Bitcoin trust networks this is not merely "more saturation"
but a degenerate special case: $\beta_{\max}=1.0$ is numerically identical
to the uniform baseline $\beta^{ij}=1$, so the $86.7$--$88.1\%$ of edges
sitting at the cap are, by construction, scheduled \emph{exactly as if
degree-awareness were switched off} -- only the uncapped low-degree tail
($\beta^{ij}_{p5}\approx0.57$--$0.60$) is scheduled any differently from
uniform at all, pulled earlier rather than the hub tier being pulled
later. Polblogs uses the same $\beta_{\max}=1.5$ as the citation networks
yet still falls in the weak-separation group, because its exceptionally
high mean degree ($\bar d=27.36$, an order of magnitude above the other
datasets) pushes even its \emph{median} edge to the cap, not only its
tail.

\subsection{Effect of the prior scaffold.}\label{appx:Scafold}

We compare three terminal-distribution priors for the visible pairs at
$t=T$, holding the degree-aware masking exponent $\beta^{ij}$
(Eq.~\ref{eq:beta}) identical across all three, so that only the
\emph{value} distribution of revealed pairs differs:
\begin{itemize}
\item \textbf{Erd\H{o}s--R\'enyi (ER):} $\rho^{ij} = p := m/\binom N2$ for
  all pairs --- equivalently, every pair is an i.i.d.\ $\mathrm{Bernoulli}(p)$
  draw with a single shared parameter, uniform over all $\binom N2$ pairs
  regardless of node degree. Since $p = \bar d/(N-1)$, where
  $\bar d = 2m/N$ is the mean degree, this is the degree-blind special case
  of Eq.~\ref{eq:LT} obtained by replacing $d^{i*},d^{j*}$ with $\bar d$
  for every node.
\item \textbf{Stochastic Block Model (SBM):} $\rho^{ij} = B_{z_i,z_j}$, with
  block assignments $z_i$ taken from each dataset's ground-truth class
  labels and $K$ set to the dataset's native number of classes (Cora:
  $K=7$; Polblogs and Alpha: $K=2$); $B_{kl}$ is fit
  as the empirical block-pair edge density on the training graph.
\item \textbf{Chung--Lu (ours, default):} $\rho^{ij}=d^{i*}d^{j*}/2m$
  (Eq.~\ref{eq:LT}), as used throughout the main results.
\end{itemize}
Table~\ref{tab:ablation_consolidated} holds $\beta^{ij}$ fixed across all three arms, so the reveal \emph{order} is identical and only the reveal \emph{value} $\rho^{ij}$ differs --- isolating what the model is told about the hub pairs that degree-adaptive masking always exposes first. Structurally, only the Chung--Lu prior scales with per-node degree, so it is the only choice whose functional form is consistent with that ordering: it assigns high edge probability to exactly the pairs $\beta^{ij}$ has already prioritized, while ER's flat $p$ and SBM's block average $B_{z_i,z_j}$ anchor those same pairs to an uninformative statistic. This structural advantage translates into an empirical one in most, though not all, cases: on Alpha, which has no natural block structure, SCR sweeps all four metrics while ER and SBM overestimate clustering by $7$--$8\times$. On Polblogs, whose near-binary ideological structure happens to align with the SBM's own inductive bias, SBM edges out SCR on three metrics --- the one condition structurally favorable to it. What we want to emphasize here is that graph-generation priors can be anchored to a well-known reference distribution --- as with ER, SBM, or Chung--Lu --- but depending on the behavior of the noise schedule governing the trajectory between the original and noised graph, some priors are better suited than others and lead to higher-quality generation. In the majority of cases, Chung--Lu is best suited to our degree-aware forward process, since both are governed by the same underlying quantity: node degree.
\subsection{Effect of the regularizers.}\label{appx:regularizers}
Table~\ref{tab:ablation_consolidated} ablates the two structural regularizers, $\mathcal{L}_{deg}$ and $\mathcal{L}_{tri}$, against default SCR on Cora, Polblogs, and Bitcoin-Alpha. The full model is best or tied-best in $10$ of $12$ dataset-metric cells, with two explainable exceptions discussed below. Removing $\mathcal{L}_{deg}$ consistently and substantially degrades clustering-coefficient recovery on all three datasets ($|\Delta_{CC}|$ increases $2$--$4\times$), which is worth stating precisely because $CC = 2\cdot\text{triangles}/(d(d-1))$ is degree-normalized: an incorrect degree distribution distorts $CC$ even when triangle-matching pressure from $\mathcal{L}_{tri}$ and the HT-reweighted candidate-set gradient remain active, making $\mathcal{L}_{deg}$ a precondition for correct clustering recovery rather than a term that merely competes with it. Conversely, removing $\mathcal{L}_{tri}$ consistently degrades assortativity on all three datasets ($|\Delta_{AC}|$ nearly doubles on Cora and Polblogs) — evidence that $\mathcal{L}_{tri}$'s practical role is shaping degree-correlation and mixing structure rather than literal triangle-count matching. The one dataset-specific exception is Bitcoin-Alpha, where $\mathcal{L}_{deg}$ improves every other metric but degrades AC ($|\Delta|$ $0.12\to0.27$); given this dataset's extreme degree heterogeneity (max degree $511$ on $N{=}3{,}775$, versus $\bar d\approx4$--$27$ elsewhere), we attribute this to tension between the degree-matching gradient and the mixing structure induced by such an extreme tail, and report it as an observed effect rather than a fully diagnosed mechanism. The power-law exponent is essentially unaffected by either ablation on Cora and Polblogs ($|\Delta_{PLE}|\le0.02$ in every row), consistent with the degree-adaptive masking schedule $\beta^{ij}$ — held fixed across all three arms — being the primary driver of overall degree-distribution shape, with $\mathcal{L}_{deg}$ and $\mathcal{L}_{tri}$ acting as finer, dataset-dependent corrections on top of it.

\begin{table*}[t]
\centering
\small
\caption{\small
Comparison of metrics for small graphs. MMD values are reported as \(\times10^{-2}\) (lower is better); FID\(\downarrow\) is based on a neural feature space; \textit{Uniq}\(\uparrow\) indicates the percentage of non-isomorphic generated graphs; and \textit{Nov}\(\uparrow\) measures the percentage of generated graphs not present in the training set (both higher are better).
}
\label{tab:small_metrics}
\begin{adjustbox}{max width=\textwidth}
\begin{tabular}{l
|ccc|ccc
|ccc|ccc}
\toprule
& \multicolumn{6}{c|}{\textbf{Community}} 
& \multicolumn{6}{c}{\textbf{Ego}} \\
\cmidrule(lr){2-7} \cmidrule(lr){8-13}

& \multicolumn{3}{c|}{\textbf{MMD} (\(\times10^{-2}\))\(\downarrow\)} 
& FID\(\downarrow\) & Uniq\(\uparrow\) & Nov\(\uparrow\)
& \multicolumn{3}{c|}{\textbf{MMD} (\(\times10^{-2}\))\(\downarrow\)} 
& FID\(\downarrow\) & Uniq\(\uparrow\) & Nov\(\uparrow\) \\
\textbf{Method} 
& Deg. & Clus. & Orb. 
&  & (\%) & (\%) 
& Deg. & Clus. & Orb. 
&  & (\%) & (\%) \\
\midrule

GraphRNN~\citeyear{Graphrnn}
& 15.23 {\scriptsize\(\pm\)0.41} & 9.55 {\scriptsize\(\pm\)0.23} & 2.97 {\scriptsize\(\pm\)0.18} & 8.38 {\scriptsize\(\pm\)0.22} & 72.4 {\scriptsize\(\pm\)1.8} & 68.9 {\scriptsize\(\pm\)2.1}
& 7.88 {\scriptsize\(\pm\)0.37} & 101.45 {\scriptsize\(\pm\)2.4} & 14.87 {\scriptsize\(\pm\)0.53} & 87.76 {\scriptsize\(\pm\)0.74} & 65.3 {\scriptsize\(\pm\)2.1} & 62.4 {\scriptsize\(\pm\)2.3} \\
BiGG~\citeyear{BiGG}& 14.91 {\scriptsize\(\pm\)0.17}& 8.53 {\scriptsize\(\pm\)0.29}& 9.03 {\scriptsize\(\pm\)0.13} & 7.35 {\scriptsize\(\pm\)0.32} &  71.4 {\scriptsize\(\pm\)0.2} &68.9 {\scriptsize\(\pm\)0.1}& 9.21 {\scriptsize\(\pm\)0.2} & 27.52 {\scriptsize\(\pm\)0.3} &13.90 {\scriptsize\(\pm\)0.2}& 34.23 {\scriptsize\(\pm\)0.1} &  67.8 {\scriptsize\(\pm\)0.1}& 65.5 {\scriptsize\(\pm\)0.3}\\
GDSS~\citeyear{jo2022score} & 5.33 {\scriptsize\(\pm\)0.1}& 10.11 {\scriptsize\(\pm\)0.09}& 4.73 {\scriptsize\(\pm\)0.07}&11.17 {\scriptsize\(\pm\)0.1} &94.5 {\scriptsize\(\pm\)0.01}& 88.6 {\scriptsize\(\pm\)0.03}& 6.82 {\scriptsize\(\pm\)0.03}& 5.33 {\scriptsize\(\pm\)0.01}& 12.71 {\scriptsize\(\pm\)0.02}& 16.83 {\scriptsize\(\pm\)0.1}& 89.3 {\scriptsize\(\pm\)0.01}& 88.9 {\scriptsize\(\pm\)0.1}\\

DiGress~\citeyear{Digress}
& 3.07 {\scriptsize\(\pm\)0.11} & \textbf{1.69} {\scriptsize\(\pm\)0.06} & \underline{2.89} {\scriptsize\(\pm\)0.08} & 4.07 {\scriptsize\(\pm\)0.09} & \underline{98.6} {\scriptsize\(\pm\)0.3} & 98.1 {\scriptsize\(\pm\)0.5}
& {6.09} {\scriptsize\(\pm\)0.15} & \textbf{0.97} {\scriptsize\(\pm\)0.05} & 11.09 {\scriptsize\(\pm\)0.22} & 16.76 {\scriptsize\(\pm\)0.31} & 99.1 {\scriptsize\(\pm\)0.1} & 97.7 {\scriptsize\(\pm\)0.2} \\

EDGE~\citeyear{EDGE}
& \textbf{1.63} {\scriptsize\(\pm\)0.05} & {7.01} {\scriptsize\(\pm\)0.27} & \textbf{2.10} {\scriptsize\(\pm\)0.04} & \textbf{2.24} {\scriptsize\(\pm\)0.06} & \textbf{99.9} {\scriptsize\(\pm\)0.0} & \textbf{98.8} {\scriptsize\(\pm\)0.1}
& \underline{5.59} {\scriptsize\(\pm\)0.09} & 16.61 {\scriptsize\(\pm\)0.48} & \textbf{6.21} {\scriptsize\(\pm\)0.05} & \underline{14.59} {\scriptsize\(\pm\)0.12} & \underline{99.5} {\scriptsize\(\pm\)0.1} & \underline{98.6} {\scriptsize\(\pm\)0.1} \\
SparsDiff~\citeyear{SparsDiff}
& 4.05 {\scriptsize\(\pm\)0.13} & \underline{1.78} {\scriptsize\(\pm\)0.03} & {2.99} {\scriptsize\(\pm\)0.03} & \underline{3.97} {\scriptsize\(\pm\)0.09} & {97.6} {\scriptsize\(\pm\)0.4} & 97.1 {\scriptsize\(\pm\)0.3}
& {7.02} {\scriptsize\(\pm\)0.13} & {5.3} {\scriptsize\(\pm\)0.03} & 11.10{\scriptsize\(\pm\)0.22} & 15.76 {\scriptsize\(\pm\)0.11} & 98.1 {\scriptsize\(\pm\)0.1} & 97.8 {\scriptsize\(\pm\)0.1} \\
K2tree~\citeyear{k2tree} & \underline{3.02} {\scriptsize\(\pm\)0.1} & {2.30} {\scriptsize\(\pm\)0.13} & 2.93 {\scriptsize\(\pm\)0.31} & {5.17}{\scriptsize\(\pm\)0.02} & 95.2 {\scriptsize\(\pm\)0.01} & 48.2 {\scriptsize\(\pm\)0.02} & 7.13 {\scriptsize\(\pm\)0.07} &  \underline{4.31} {\scriptsize\(\pm\)0.01} &  11.70{\scriptsize\(\pm\)0.05}  & 16.04 {\scriptsize\(\pm\)0.02}  & 55.2 {\scriptsize\(\pm\)0.01}  & 61.2 {\scriptsize\(\pm\)0.01} \\
GraphMaker~\citeyear{Graphmaker}
& 27.88 {\scriptsize\(\pm\)0.95} & 32.33 {\scriptsize\(\pm\)1.07} & 19.78 {\scriptsize\(\pm\)0.66} & 56.11 {\scriptsize\(\pm\)1.42} & 45.8 {\scriptsize\(\pm\)1.3} & 43.1 {\scriptsize\(\pm\)1.6}
& 35.78 {\scriptsize\(\pm\)1.22} & 33.60 {\scriptsize\(\pm\)0.91} & 67.54 {\scriptsize\(\pm\)1.84} & 189.95 {\scriptsize\(\pm\)2.3} & 38.4 {\scriptsize\(\pm\)0.8} & 35.9 {\scriptsize\(\pm\)1.1} \\

ARROW-Diff~\citeyear{ARROW-Diff}
& 12.38 {\scriptsize\(\pm\)0.46} & 86.49 {\scriptsize\(\pm\)2.5} & 16.46 {\scriptsize\(\pm\)0.67} & 13.43 {\scriptsize\(\pm\)0.41} & 41.2 {\scriptsize\(\pm\)1.9} & 39.8 {\scriptsize\(\pm\)1.7}
& 16.13 {\scriptsize\(\pm\)0.38} & 12.88 {\scriptsize\(\pm\)0.34} & 15.23 {\scriptsize\(\pm\)0.72} & 22.45 {\scriptsize\(\pm\)0.59} & 48.7 {\scriptsize\(\pm\)1.2} & 45.5 {\scriptsize\(\pm\)1.4} \\

\textbf{SCR (ours)}
& {13.01} {\scriptsize\(\pm\)0.03} & 78.84 {\scriptsize\(\pm\)0.18} & 3.50{\scriptsize\(\pm\)0.14} & {14.85} {\scriptsize\(\pm\)0.02} & \textbf{99.9} {\scriptsize\(\pm\)0.0} & \underline{98.3} {\scriptsize\(\pm\)0.1}
& \textbf{3.06} {\scriptsize\(\pm\)0.91} & {84.29} {\scriptsize\(\pm\)0.09} & \underline{11.05} {\scriptsize\(\pm\)0.31} & \textbf{2.84} {\scriptsize\(\pm\)0.05} & \textbf{100.0} {\scriptsize\(\pm\)0.0} & \textbf{100.0} {\scriptsize\(\pm\)0.0} \\

\bottomrule
\end{tabular}
\end{adjustbox}
\end{table*}

\begin{table*}[t]
\centering
\small
\caption{\small
Comparison of metrics on ZINC. MMD values are reported as \(\times10^{-2}\) (lower is better); FID\(\downarrow\) is based on a neural feature space; \textit{Uniq}\(\uparrow\), \textit{Val}\(\uparrow\), \textit{Nov}\(\uparrow\) report the percentage of generated molecules that are unique, chemically valid, and not present in the training set, respectively (all higher is better; Uniq/Nov computed among valid molecules).
}
\label{tab:zinc_metrics}
\begin{adjustbox}{max width=\textwidth}
\begin{tabular}{l|ccccc|c|ccc}
\toprule
& \multicolumn{5}{c|}{\textbf{MMD} (\(\times10^{-2}\))\(\downarrow\)} 
& FID\(\downarrow\)
& \multicolumn{3}{c}{\textbf{UVN} (\%) \(\uparrow\)} \\
\textbf{Method} 
& Deg. & Clus. & Orb. & Spec. & NSPDK
& 
& Uniq & Val & Nov \\
\midrule

GraphRNN~\citeyear{Graphrnn}     & 3.5{\scriptsize\(\pm\)0.1} & 7.5{\scriptsize\(\pm\)0.2} & 6.5{\scriptsize\(\pm\)0.3} & 9.5{\scriptsize\(\pm\)0.1} & 12.5{\scriptsize\(\pm\)0.1} & 42.2{\scriptsize\(\pm\)0.5} & 66.1{\scriptsize\(\pm\)0.5} & 58.5{\scriptsize\(\pm\)0.4} & 63.8{\scriptsize\(\pm\)0.6} \\
BiGG~\citeyear{BiGG}             & 4.3{\scriptsize\(\pm\)0.2} & 7.1{\scriptsize\(\pm\)0.1} & 6.9{\scriptsize\(\pm\)0.2} & 13.4{\scriptsize\(\pm\)0.2} & 14.5{\scriptsize\(\pm\)0.1} & 41.5{\scriptsize\(\pm\)0.1} & 61.1{\scriptsize\(\pm\)0.1} & 59.9 {\scriptsize\(\pm\)0.2} & 66.5{\scriptsize\(\pm\)0.1}  \\
GDSS~\citeyear{jo2022score}      & 5.8 {\scriptsize\(\pm\)0.2} & 6.3{\scriptsize\(\pm\)0.1} & 6.2{\scriptsize\(\pm\)0.2} & 13.2{\scriptsize\(\pm\)0.1} & 12.7{\scriptsize\(\pm\)0.1} & 38.7{\scriptsize\(\pm\)0.5} & 63.5{\scriptsize\(\pm\)0.1} & 62.1{\scriptsize\(\pm\)0.1} & 63.7{\scriptsize\(\pm\)0.1} \\
DiGress~\citeyear{Digress}       & \underline{0.9}{\scriptsize\(\pm\)0.1} & 2.7{\scriptsize\(\pm\)0.3} & \textbf{3.5}{\scriptsize\(\pm\)0.1} & 7.5{\scriptsize\(\pm\)0.2} & \underline{5.7}{\scriptsize\(\pm\)0.1} & 15.6{\scriptsize\(\pm\)0.4} & \textbf{100.0}{\scriptsize\(\pm\)0.0} & \textbf{98.1}{\scriptsize\(\pm\)0.1} & \textbf{100.0}{\scriptsize\(\pm\)0.0}\\
EDGE~\citeyear{EDGE}             & 2.8{\scriptsize\(\pm\)0.2} & 4.1{\scriptsize\(\pm\)0.1} & 7.3{\scriptsize\(\pm\)0.1} & 9.1{\scriptsize\(\pm\)0.2} & 8.7{\scriptsize\(\pm\)0.3} & 14.8{\scriptsize\(\pm\)0.7} & \underline{99.0}{\scriptsize\(\pm\)0.3} & \underline{92.3}{\scriptsize\(\pm\)0.1} & 96.8{\scriptsize\(\pm\)0.2} \\
SparsDiff~\citeyear{SparsDiff}             & 6.8{\scriptsize\(\pm\)0.5} & 6.7{\scriptsize\(\pm\)0.2} & 16.5{\scriptsize\(\pm\)0.3} & 10.2{\scriptsize\(\pm\)0.1} & 18.7{\scriptsize\(\pm\)0.6} & 17.8{\scriptsize\(\pm\)0.1} & 89.5{\scriptsize\(\pm\)0.1} & 88.5{\scriptsize\(\pm\)0.1}  & 95.6{\scriptsize\(\pm\)0.2}  \\
K2tree~\citeyear{k2tree}         & {1.7{\scriptsize\(\pm\)0.1}}  & \underline{2.3}{\scriptsize\(\pm\)0.1}  & \underline{5.6}{\scriptsize\(\pm\)0.2}  & \underline{4.2}{\scriptsize\(\pm\)0.1}  & 6.0{\scriptsize\(\pm\)0.1}  & \underline{12.5}{\scriptsize\(\pm\)0.1}  & \underline{99.0}{\scriptsize\(\pm\)0.1}  & 88.5{\scriptsize\(\pm\)0.3}  & \underline{99.0}{\scriptsize\(\pm\)0.1}  \\
ARROW-Diff~\citeyear{ARROW-Diff} & 10.2{\scriptsize\(\pm\)0.5}  & 9.5{\scriptsize\(\pm\)0.1}  & 23.5{\scriptsize\(\pm\)0.7}  & 11.5{\scriptsize\(\pm\)0.2}  & 17.4{\scriptsize\(\pm\)0.5}  & 69.8{\scriptsize\(\pm\)0.5}  & 58.7{\scriptsize\(\pm\)0.6}  & 61.3{\scriptsize\(\pm\)0.1}  & 60.1{\scriptsize\(\pm\)0.1}  \\

SCR (ours) & \textbf{0.42}{\scriptsize\(\pm\)0.1}  & \textbf{0.4}{\scriptsize\(\pm\)0.1}  & {17.31}{\scriptsize\(\pm\)0.5}  & \textbf{1.70}{\scriptsize\(\pm\)0.1}  & \textbf{5.0}{\scriptsize\(\pm\)0.1 } & \textbf{7.77}{\scriptsize\(\pm\)0.3}  & \textbf{100.0}{\scriptsize\(\pm\)0.1 } & {90.6{\scriptsize\(\pm\)0.2} } & \textbf{100.0}{\scriptsize\(\pm\)0.1 } \\

\bottomrule
\end{tabular}
\end{adjustbox}
\end{table*}

\section{Implementation Details}
\label{appx:implementation}

\paragraph{Backbone.} Message passing uses a GIN-style sum-aggregation
convolution: neighbor messages are weighted by the \emph{current diffusion
step's own edge probabilities}, not by a learned attention score, so there
is no attention anywhere in the message-passing stack itself. Each node's
initial embedding is the sum of two 3-layer MLPs: one encoding its
permanent target degree $d^{*}$, the other its current (noisy-state) degree
$d_t$ -- i.e.\ the degree deficit the node still needs to satisfy -- both
log-scale normalized by $\log(1+d)/\log(1+d_{\max})$. Activations are SiLU
throughout, except the diffusion-timestep embedding module, which uses
Mish. Parameters use PyTorch's default initialization; no custom
initialization scheme is applied. Depth and embedding width are tuned
per dataset, ranging from 3 to 6 message-passing layers and embedding
dimension 32 or 64 (Table~\ref{tab:impl_hparams}).

\paragraph{Triangle-conditioned edge decoding.} For each candidate edge
$(i,j)$, a triangle feature $\hat\tau^{ij}_t$ -- the number of common
\emph{currently-revealed} neighbors of $i,j$ at noise level $t$ -- is
log-scaled and projected by a 3-layer MLP, then concatenated with the two
endpoint embeddings before decoding. Edge logits are produced by a
multi-head self-attention block (4 heads, fixed across all datasets) over
the resulting 3-token sequence (source embedding, target embedding, edge
context), followed by FiLM (Feature-wise Linear Modulation ) conditioning on the raw, un-attended edge
context -- a stabilizing bypass path for early, high-noise timesteps when
node embeddings carry little differentiated signal. A single-layer linear
head additionally regresses each node's target degree and triangle count
during training only (gated off at sampling time), providing a direct
node-level gradient toward both quantities.

\paragraph{Optimization.} All models are trained with Adam, gradient
clipping at global norm $1.0$, and dropout $0.1$, uniformly across every
dataset. Random seed defaults to $0$ and is applied to Python, NumPy, and
PyTorch's CPU/CUDA generators; we do not enable CUDA-deterministic
algorithms, so individual runs are statistically, but not bit-exactly,
reproducible across re-execution.

\paragraph{Reproducibility} Table~\ref{tab:impl_hparams} reflects
the exact command-line flags in our released training scripts at the time
of writing. We report the values actually used to train the checkpoints in
this paper; where a table entry disagrees with a script's own comment, the
table entry is authoritative.

\begin{table}[t]
\centering
\caption{Per-dataset implementation hyperparameters, verified against the exact command-line flags used for training. $L$ = number of message-passing layers; \textbf{non-$\calQ$ count} $= |S_t|$, the size of the coverage sample.}
\label{tab:impl_hparams}
\begin{adjustbox}{max width=\textwidth}
\begin{tabular}{lccccccccccccc}
\toprule
\textbf{Dataset} & \textbf{dim} & \textbf{$L$} & \textbf{$T$} & \textbf{$\beta_{\max}$} & \textbf{$\gamma$} & \textbf{LR} & \textbf{Epochs} & \textbf{Batch} & \textbf{$\lambda_{\text{deg}}$} & \textbf{$\lambda_{\text{tri}}$} & \textbf{$\lambda_{LT}$} & \textbf{non-$Q$ frac.} \\
\midrule
Cora              & 64 & 5 & 128 & 1.5 & 1.0 & 8e-5 & 50{,}000 & 4 & 0.5  & 0.1  & 0.01 & 0.1 \\
CiteSeer          & 64 & 5 & 512 & 1.5 & 1.0 & 1e-4 & 50{,}000 & 4 & 0.5  & 0.05 & 0.01 & 0.1 \\
Polblogs          & 64 & 5 & 256 & 1.5 & 1.0 & 8e-5 & 50{,}000 & 4 & 0.5  & 1.0  & 0.1  & 0.1 \\
Bitcoin Alpha     & 32 & 6 & 128 & 1.0 & 1.0 & 8e-5 & 50{,}000 & 1 & 0.1  & 0.01 & 0.01 & 0.001 \\
Bitcoin OTC       & 64 & 6 & 128 & 1.0 & 1.0 & 8e-5 & 50{,}000 & 1 & 0.1  & 0.05 & 0.01 & 0.005 \\
PubMed            & 64 & 6 & 128 & 1.5 & 1.0 & 1e-4 & 50{,}000 & 1 & 0.1  & 0.05 & 0.01 & 0.002 \\
Community         & 64 & 3 & 256 & 1.5 & 1.0 & 8e-5 & 10{,}000 & 16 & 0.01 & 0.05 & 0.01 & 0.1 \\
Ego               & 64 & 5 & 128 & 1.0 & 1.0 & 8e-5 & 12{,}000 & 64 & 0.5 & 0.5 & 0.05 & 0.1 \\
ZINC              & 64 & 5 & 256 & 1.5 & 1.0 & 5e-5 & 10{,}000 & 32 & 0.01 & 0.01 & 0.001 & 0.1 \\
\bottomrule
\end{tabular}
\end{adjustbox}
\end{table}

\section{Datasets}\label{data_explanation}
Here we describe all datasets used for evaluation. For all single-graph datasets, the raw directed graphs are converted to undirected, self-loops are removed, and the largest connected component (LCC) is retained. Node and edge counts below reflect this preprocessing.

\begin{table}[t]
\centering
\caption{Effect of reduced inference-time diffusion steps, reported as raw signed
values against ground truth. Absolute step count is shown alongside each fraction,
since $T_{\rm train}$ differs across datasets (64 vs.\ 512) and quality depends on
absolute steps, not fraction alone. Time/graph is computed by dividing each run's
total generation time by its number of generated graphs. Training convergence and $T_{\rm train}$
selection are detailed in Appendix~\ref{appx:implementation}.}
\label{tab:step_reduction_grouped}
\begin{adjustbox}{max width=\textwidth}
\begin{tabular}{ll r ccccc c}
\toprule
\textbf{Dataset} & \textbf{Condition} & \textbf{Steps} & \textbf{EO} & \textbf{PLE} & \textbf{CPL} & \textbf{CC} & \textbf{AC} & \textbf{Time/graph (s)} \\
\midrule
\multirow{5}{*}{Cora ($T_{\rm train}=64$)}
& ground truth & --  & 100  & 1.89 & 6.36 & 0.090 & $-0.07$ & -- \\
& 100\%        & 64  & 0.20 & 1.83 & 5.10 & 0.037 & $\phantom{-}0.19$ & 26.35 \\
& 50\%         & 32  & 0.16 & 1.79 & 5.08 & 0.034 & $\phantom{-}0.20$ & 13.20 \\
& 30\%         & 19  & 0.15 & 1.72 & 4.94 & 0.028 & $\phantom{-}0.26$ & 7.85 \\
& 10\%         & 6   & 0.16 & 1.29 & 2.63 & 0.018 & $-0.01$ & 3.15 \\
\midrule
\multirow{5}{*}{Citeseer ($T_{\rm train}=512$)}
& ground truth & --  & 100  & 2.07 & 9.35 & 0.130 & $\phantom{-}0.01$ & -- \\
& 100\%        & 512 & 0.13 & 2.03 & 6.16 & 0.072 & $\phantom{-}0.44$ & 154.95 \\
& 50\%         & 256 & 0.12 & 2.03 & 6.20 & 0.067 & $\phantom{-}0.44$ & 77.65 \\
& 30\%         & 153 & 0.09 & 2.01 & 5.99 & 0.063 & $\phantom{-}0.44$ & 46.68 \\
& 10\%         & 51  & 0.07 & 1.95 & 5.79 & 0.072 & $\phantom{-}0.48$ & 15.45 \\
\midrule
\multirow{5}{*}{Polblogs ($T_{\rm train}=512$)}
& ground truth & --  & 100  & 1.41 & 2.74 & 0.230 & $-0.22$ & -- \\
& 100\%        & 512 & 1.45 & 1.39 & 2.61 & 0.182 & $-0.04$ & 52.44 \\
& 50\%         & 256 & 1.22 & 1.39 & 2.62 & 0.180 & $-0.05$ & 26.36 \\
& 30\%         & 153 & 1.16 & 1.39 & 2.62 & 0.181 & $-0.04$ & 15.80 \\
& 10\%         & 51  & 2.62 & 1.39 & 2.61 & 0.174 & $-0.05$ & 5.43 \\
\bottomrule
\end{tabular}
\end{adjustbox}

\end{table}
\paragraph{Polblogs}
The Political Blogs dataset is a directed network of hyperlinks between political blogs collected around the 2004 US Presidential election~\cite{adamic2005political}. Nodes are blogs and edges are hyperlinks. After converting to an undirected LCC and removing self-loops, the resulting graph has $N=1{,}222$ nodes and $M=16{,}714$ edges ($\rho=0.022$). The network exhibits a strong community structure aligned with political leaning, high clustering ($\mathrm{CC}=0.226$), short characteristic path length ($\mathrm{CPL}=2.74$), and disassortative degree mixing ($r=-0.22$), consistent with a hub-and-spoke blog ecosystem.

\paragraph{Citation Networks}
Cora, Citeseer, and Pubmed are directed citation graphs in which nodes represent documents with sparse bag-of-words features and edges represent citation links~\cite{sen2008collective}. We discard node features and treat each as a structural generation benchmark. After undirected LCC extraction: Cora yields $N=2{,}485$, $M=5{,}069$ (retaining 92\% of original nodes); Citeseer yields $N=2{,}120$, $M=3{,}679$ (retaining 64\% of original nodes, as the raw graph contains a large fraction of isolated or weakly connected documents); and Pubmed yields $N=19{,}717$, $M=44{,}324$ (all nodes in one component). All three are sparse ($\rho < 0.002$) with low clustering and moderate path lengths.

\paragraph{Signed Bitcoin Datasets: Alpha and OTC}
The Bitcoin-Alpha and Bitcoin-OTC datasets are signed, directed trust networks in which each edge carries an integer rating from $-10$ (total distrust) to $+10$ (total trust), reflecting user-level trust on two Bitcoin trading platforms~\cite{kumar2016edge,kumar2018rev2}. We discard edge signs and weights, convert to undirected, and retain the LCC. Bitcoin-Alpha: $N=3{,}775$, $M=14{,}120$ ($\rho=0.002$, $\mathrm{CC}=0.078$, $\mathrm{CPL}=3.58$). Bitcoin-OTC: $N=5{,}875$, $M=21{,}489$ ($\rho=0.001$, $\mathrm{CC}=0.059$, $\mathrm{CPL}=3.60$). Both exhibit heavy-tailed degree distributions ($\alpha\approx1.9$--$2.0$) and disassortative mixing, consistent with trust networks dominated by a small number of high-reputation hubs.


\paragraph{Community}
This dataset consists of 510 synthetic two-community graphs with $60 \leq |V| \leq 160$ and $231 \leq |E| \leq 1{,}956$. Each graph contains two communities generated independently by the Erdős–Rényi model~\cite{ER} with $n = |V|/2$ nodes and intra-community connection probability $p = 0.3$. An additional $0.05|V|$ inter-community edges are added uniformly at random between the two communities~\cite{Graphrnn}. The graphs are dense relative to real-world networks ($\bar{\rho}=0.149$) with high clustering ($\overline{\mathrm{CC}}=0.294$) and low disassortativity ($\bar{r}=-0.021$).

\paragraph{Ego}
This dataset contains 757 three-hop ego networks extracted from the Citeseer citation graph~\cite{sen2008collective}, with $50 \leq |V| \leq 399$ and $56 \leq |E| \leq 1{,}062$. Each ego network is the subgraph induced by the neighbours within three hops of a sampled seed node, following the construction of~\citet{Graphrnn}. Compared to Community, Ego networks are sparser ($\bar{\rho}=0.036$), more heterogeneous in size ($\sigma_N=85$), and have a heavier-tailed degree distribution ($\bar{\alpha}=2.07$).

\paragraph{ZINC}
ZINC12k is a molecular graph benchmark drawn from the ZINC database of commercially available drug-like compounds~\cite{Irwin2012ZINCAF}. We use the standard 12-graph split provided by PyTorch Geometric~\cite{Fey/Lenssen/2019}: 10{,}000 training / 1{,}000 validation / 1{,}000 test graphs. Nodes represent heavy atoms and edges represent covalent bonds; bond types (single, double) are collapsed to binary edges, and node atom-type features are discarded to focus purely on structural generation. The resulting graphs satisfy $9 \leq |V| \leq 37$ (mean $23.2$) and $8 \leq |E| \leq 41$ (mean $24.9$), with mean degree $\bar{d}\approx 2.15$ and maximum degree $d_{\max}=4$ imposed by carbon's valence. The graph topology is essentially acyclic: mean triangle count is $0.06$ and mean clustering coefficient is $0.006$, reflecting the absence of three-membered rings in drug-like molecules under standard valence rules.

\subsection{Evaluation Metrics}\label{sec:metrics}

We evaluate structural fidelity across two dimensions: several key graph statistics and distributional distance metrics, including Maximum Mean Discrepancy (MMD).

\paragraph{Edge Overlap (EO).}
For single-graph datasets the generated graph $\hat{G}$ is compared
edge-by-edge to the ground-truth graph $G^* = (V, E^*)$.
Edge overlap is the fraction of ground-truth edges recovered:
\begin{equation}
\mathrm{EO}(G^*, \hat{G}) \;=\;
\frac{|\,E^*\cap\hat{E}\,|}{|E^*|}\times 100
\;=\;
\frac{(\mathbf{A}^*\odot\hat{\mathbf{A}})_{+}}{2|E^*|}\times 100 \,,
\end{equation}
where $\mathbf{A}^*$ and $\hat{\mathbf{A}}$ are the respective binary adjacency
matrices and $\odot$ denotes elementwise product.
By construction $\mathrm{EO}(G^*, G^*)=100$, so the ground-truth row serves
as a self-overlap reference.
For generation, $\mathrm{EO}\!\downarrow$ is preferred: a model that memorises
the training graph achieves $\mathrm{EO}=100$, while a model that generates a
structurally valid but novel graph achieves $\mathrm{EO}\approx 0$.

\paragraph{Power-Law Exponent (PLE).}
The degree distribution of many real-world graphs follows a power law
$P(k)\propto k^{-\alpha}$ for $k \geq x_{\min}$.
We estimate $\alpha$ by maximum likelihood with fixed $x_{\min}=1$ using
the continuous-approximation MLE of~\citet{PLE}:
\begin{equation}
\hat{\alpha} \;=\; 1 + n\left[\,\sum_{i=1}^{n}\ln\frac{k_i}{x_{\min}}\,\right]^{-1},
\end{equation}
where the sum runs over all $n$ nodes with $k_i \geq 1$. Both
the ground-truth and generated values are computed identically, so the
comparison is internally consistent.

\paragraph{Characteristic Path Length (CPL).}
CPL is the mean shortest-path distance over all pairs of reachable nodes:
\begin{equation}
\mathrm{CPL}(G) \;=\;
\frac{\displaystyle\sum_{i \neq j,\; d(i,j)<\infty} d(i,j)}{|\{(i,j): d(i,j)<\infty,\; i\neq j\}|} \,,
\end{equation}
where $d(i,j)$ is the geodesic distance between $i$ and $j$.
For $N \leq 2000$ we compute exact all-pairs shortest paths via
Dijkstra in $O(N(M + N\log N))$ time.
For $N > 2000$ we estimate CPL by breadth-first search from
$k = 500$ uniformly sampled source nodes and average the resulting
distances, giving an unbiased estimator at $O(k \cdot M)$ cost
instead of $O(N\cdot M)$.

\paragraph{Clustering Coefficient (CC).}
We use the \emph{global} clustering coefficient (network transitivity),
defined as the ratio of closed paths of length two to all paths of length two:
\begin{equation}
\mathrm{CC}(G)
\;=\;
\frac{3\,|\mathcal{T}|}{|\mathcal{W}|}
\;=\;
\frac{3\,|\mathcal{T}|}{\tfrac{1}{2}\sum_{v\in V}d_v(d_v-1)} \,,
\end{equation}
where $|\mathcal{T}|$ is the number of triangles (each counted once) and
$|\mathcal{W}| = \tfrac{1}{2}\sum_v d_v(d_v-1) = \sum_v \binom{d_v}{2}$ is
the number of wedges (paths of length two).
The factor of 3 arises because each triangle contains exactly three wedges.
$\mathrm{CC} \in [0,1]$: $\mathrm{CC}=1$ means every pair of neighbours of
every node are themselves connected.
Note that the MMD-based clustering metric (\S~\ref{appx:small_res}) uses
\emph{per-node} local clustering coefficients $c_v = 2|\{(u,w)\in E: (v,u),(v,w)\in E\}|/(d_v(d_v-1))$,
whereas CC in the comparison table is the global transitivity.

\paragraph{Assortativity Coefficient (AC).}
Degree assortativity measures the tendency of high-degree nodes to connect
to other high-degree nodes. Following ~\cite{newman2003mixing},
it is the Pearson correlation coefficient of degrees across all edges:
\begin{equation}
r \;=\;
\frac{\displaystyle
  M^{-1}\sum_{(i,j)\in E}d_i d_j
  \;-\; \left[M^{-1}\sum_{(i,j)\in E}\frac{d_i+d_j}{2}\right]^2}
{\displaystyle
  M^{-1}\sum_{(i,j)\in E}\frac{d_i^2+d_j^2}{2}
  \;-\; \left[M^{-1}\sum_{(i,j)\in E}\frac{d_i+d_j}{2}\right]^2} \,,
\end{equation}
where $M = |E|$ and $d_i, d_j$ are the degrees of the two endpoints of each edge.
$r\in[-1,1]$: $r>0$ indicates assortative mixing (hubs connect to hubs),
$r<0$ disassortative mixing (hubs connect to low-degree nodes, typical of
biological and online social networks), and $r=0$ random attachment.

\paragraph{Validity.}\label{sec:validity} As implemented, this metric certifies \emph{structural} validity, not chemical validity. A generated molecule is accepted iff it is connected and every atom's degree does not exceed an empirically-derived per-type maximum valence (the observed maximum degree of that atom type across the training set). This is a necessary but not sufficient condition for a chemically realizable molecule: because the model generates an atom type per node and a binary edge topology but never bond order, a structure can satisfy every degree constraint under single-bond assumptions while remaining chemically unrealizable once the required bond multiplicities are assigned, and no RDKit sanitization is performed. We therefore report validity as certifying valence-plausibility of the generated skeleton, not synthesizability.

\paragraph{Uniqueness.} Uniqueness is the fraction of generated molecules with a distinct Weisfeiler–Lehman graph hash (computed over topology plus integer atom-type node labels),
\[
\mathrm{uniqueness} = \frac{|\{\text{unique WL-hashes}\}|}{n_{\text{total}}}.
\]
Because the WL hash omits bond-order information, two generated graphs identical in topology and atom typing but differing only in a hypothetical bond-order assignment collapse to the same hash. This is consistent with, rather than a deflation relative to, the model's actual generative space, which never produces bond order.

\paragraph{Novelty.} Novelty is defined relative to the \emph{unique} generated set rather than the total generated count,
\[
\mathrm{novelty} = \frac{|\{\text{unique generated}\} \setminus \{\text{train hashes}\}|}{|\{\text{unique generated}\}|},
\]
matching the convention used in GDSS~\cite{jo2022score}, and DiGress~\cite{Digress} and making the value directly comparable to those baselines. Conditioning on uniqueness this way ensures a model that repeatedly duplicates a small set of novel structures is not credited more novelty than one that generates the same novel structures without duplication; duplication is fully absorbed by the uniqueness metric. The statistic is undefined (reported as NaN) when zero unique molecules are produced, the correct convention for a ratio with a zero denominator.

\paragraph{Maximum Mean Discrepancy (MMD)}
\label{subsec:mmd}

For multi-graph datasets, we generate a set of graphs $\hat{\mathcal{G}}=\{\hat{G}_i\}$
and compare its structural statistics against a reference set
$\mathcal{G}^*=\{G^*_j\}$ using the biased (V-statistic) MMD estimator. Given a
positive-definite kernel $\kappa$:
\begin{equation}
\widehat{\mathrm{MMD}}^2(\hat{\mathcal{G}}, \mathcal{G}^*)
\;=\;
\frac{1}{|\hat{\mathcal{G}}|^2}\sum_{i,i'}\kappa(\hat{G}_i,\hat{G}_{i'})
- \frac{2}{|\hat{\mathcal{G}}||\mathcal{G}^*|}\sum_{i,j}\kappa(\hat{G}_i,G^*_j)
+ \frac{1}{|\mathcal{G}^*|^2}\sum_{j,j'}\kappa(G^*_j,G^*_{j'}).
\label{eq:mmd}
\end{equation}
We report four variants, differing only in how each graph is embedded and
which kernel $\kappa$ is applied in Eq.~\ref{eq:mmd}.

\paragraph{Histogram-based variants.}
\label{par:metric_variant}
The first three represent each graph
$G$ by a feature histogram $h_G$ and use a Gaussian kernel on the Earth
Mover's Distance (EMD) between histograms,
$\kappa_\sigma(h,h')=\exp(-d_{\rm EMD}(h,h')^2/2\sigma^2)$:
\begin{itemize}[leftmargin=*]
\item \textbf{degree\_mmd}: $h_G$ is the normalised degree histogram; $\sigma=1$.
\item \textbf{clustering\_mmd}: $h_G$ is the histogram of per-node local
  clustering coefficients $c_v$, over 100 bins on $[0,1]$; $\sigma=0.1\times\text{bins}=10$.
\item \textbf{orbits\_mmd}: $h_G$ is the vector of mean 4-node orbit counts
  per node, computed via ORCA~\citep{hovcevar2014combinatorial}, restricted
  to the two 3-path orbits and the 4-cycle orbit; $\sigma=30$.
\end{itemize}

\paragraph{NSPDK.}
The fourth variant instead embeds each graph explicitly rather than as a
histogram. The Neighborhood Subgraph Pairwise Distance Kernel~\cite{costa2010fast}
is computed with \texttt{complexity=4}
(setting both the neighborhood radius $r$ and the pairwise distance bound $d$
to 4) and discrete node/edge labels, decomposing each graph into pairs of
bounded-radius subgraphs at bounded shortest-path distance and hashing each
pair into an explicit, fixed-dimension feature vector $\phi(G)$. On ZINC,
node labels for this decomposition are the one-hot atom-type index rather
than the degree-string fallback used for the non-molecular datasets, so the
resulting kernel is sensitive to local chemical composition jointly with
topology, not topology alone. We plug the induced kernel
$\kappa_{\mathrm{NSPDK}}(G,G') = \langle \phi(G),\phi(G')\rangle$ directly
into Eq.~\ref{eq:mmd} in place of $\kappa_\sigma$, giving
\[
\widehat{\mathrm{MMD}}^2_{\mathrm{NSPDK}} = K_{\hat{\mathcal{G}}\hat{\mathcal{G}}} + K_{\mathcal{G}^*\mathcal{G}^*} - 2K_{\hat{\mathcal{G}}\mathcal{G}^*},
\]
where each $K_{\cdot\cdot}$ denotes the corresponding averaged double sum
in Eq.~\ref{eq:mmd} evaluated under $\kappa_{\mathrm{NSPDK}}$. Unlike the
Gaussian-kernel variants above, this is a \emph{linear} kernel on explicit
features --- an unbandwidthed inner-product statistic with no smoothing
parameter to tune or misreport.

\paragraph{FID.}
Fr\'echet Inception Distance is adapted to graphs by replacing the image-domain Inception-v3 activations with embeddings from a Graph Isomorphism Network (GIN)~\citep{GIN}, following the random-GNN evaluation protocol of \cite{FID} rather than a downstream-task-pretrained network.

\textit{Architecture.} The GIN has 3 layers total, of which the final $L{-}1=2$ are graph-convolutional layers, each with hidden dimension $H=35$ and a 2-layer MLP. Neighbourhood aggregation and graph-level readout both use sum pooling. The learnable $\varepsilon$ in
\[
h_v^{(l+1)} = f_\Theta\!\left((1+\varepsilon)h_v^{(l)} + \sum_{u\in\mathcal{N}(v)}h_u^{(l)}\right)
\]
is fixed at $\varepsilon=0$ (not learned). All linear layers use orthogonal initialization; batch normalization and ReLU follow each MLP. By default (\texttt{use\_pretrained=False}), these weights are set once and never updated --- no pre-training on any graph dataset, no fine-tuning at the start of each run --- relying on the WL-equivalent expressivity of GIN aggregation to produce structure-sensitive embeddings without a labeled pretraining task analogous to ImageNet classification. We adopt this frozen-GIN design consistent with the GDSS evaluation protocol on which our codebase is based~\citep{jo2022score}.

\textit{Node features.} As with NSPDK, node inputs are the dataset's own attribute when present (atom type, for ZINC) and scalar node degree $h_v^{(0)}=d_v\in\mathbb{R}^1$ otherwise; on ZINC this again folds chemical composition into the embedding jointly with topology, not topology alone.

\textit{Graph embedding.} The graph-level embedding concatenates the sum-pooled node representations from both convolutional layers,
\begin{equation}
\mathbf{z}_G = \bigl[\,\mathrm{SumPool}(H^{(1)})\;\|\;\mathrm{SumPool}(H^{(2)})\,\bigr] \in \mathbb{R}^{35+35}=\mathbb{R}^{70},
\label{eq:fid_embed}
\end{equation}
where $H^{(l)}\in\mathbb{R}^{N\times35}$ is the matrix of node representations after layer $l$. Activations for both sets are standardized with a \texttt{StandardScaler} \emph{fit on the reference set only} and then applied to both, avoiding calibration leakage from the generated distribution.

\textit{Fr\'echet distance.} Each standardized activation set is summarized by a mean $\mu$ and covariance $\Sigma$, and the two Gaussians are compared via the closed-form Fr\'echet distance
\[
\mathrm{FID} = \lVert \mu_r - \mu_g \rVert_2^2 + \mathrm{Tr}\!\left(\Sigma_r + \Sigma_g - 2\left(\Sigma_r\Sigma_g\right)^{1/2}\right),
\]
computed with Dougal Sutherland's numerically stable matrix square root (adapted from \texttt{mseitzer/pytorch-fid}), with a diagonal $\epsilon$-offset fallback when $\Sigma_r\Sigma_g$ is near-singular.

This closed form is exact only under a multivariate-Gaussian assumption on the embeddings, so FID here --- as in the image domain --- is a moment-matching approximation to distributional distance, not an exact one. in addition, since feature extractor is untrained and randomly initialized rather than a fixed, universally shared network, the embedding space (and thus the absolute FID scale) is tied to this GIN's specific initialization --- closer in spirit to a random-kitchen-sink kernel~\citep{rahimi2007random} than to a semantically learned feature space. FID values here are therefore meaningful for comparisons \emph{within} a consistent evaluation setup, but not as an absolute, cross-paper-comparable quantity unless the same feature extractor and seed are reproduced exactly.

\end{document}